\documentclass{article}

\usepackage[preprint]{neurips_2025}

\usepackage[utf8]{inputenc}
\usepackage[T1]{fontenc}
\usepackage{url}
\usepackage{booktabs}
\usepackage{amsfonts}
\usepackage{amssymb}
\usepackage{amsmath}
\usepackage{amsthm}
\usepackage{mathtools}
\usepackage{nicefrac}
\usepackage{microtype}
\usepackage{comment}
\usepackage{enumitem}
\usepackage{times}
\usepackage{soul}
\usepackage[small]{caption}
\usepackage{graphicx}
\usepackage[dvipsnames]{xcolor}
\usepackage{latexsym}
\usepackage{bm}
\usepackage{bbm}
\usepackage{algorithm}
\usepackage[colorlinks=true,linkcolor=blue,citecolor=blue]{hyperref}
\usepackage{cleveref}
\usepackage{algorithmic}
\usepackage{xparse}
\usepackage{thm-restate}
\usepackage{tabularx}

\usepackage{tikz}
\usetikzlibrary{calc}

\theoremstyle{plain}
\newtheorem{theorem}{Theorem}[section]

\newtheorem{proposition}[theorem]{Proposition}

\theoremstyle{definition}
\newtheorem{definition}[theorem]{Definition}
\newtheorem{remark}[theorem]{Remark}
\newtheorem{example}[theorem]{Example}

\newcommand{\R}{\mathbb{R}}
\newcommand{\calA}{\mathcal{A}}

\newcommand{\calS}{\mathcal{S}}

\newcommand{\calN}{\mathcal{N}}

\newcommand{\calP}{\mathcal{P}}

\newcommand{\calE}{\mathcal{E}}
\newcommand{\calV}{\mathcal{V}}

\newcommand{\MCL}{\lambda}

\newcommand{\ind}{\mathbbm{1}}
\renewcommand{\Pr}{\mathbb{P}}
\newcommand{\E}{\mathbb{E}}
\newcommand{\Cov}{\operatorname{Cov}}
\newcommand{\Var}{\operatorname{Var}}
\newcommand{\Div}{\operatorname{Div}}
\newcommand{\Bor}{\operatorname{Bor}}
\newcommand{\pl}{\operatorname{pl}}

\newcommand{\splur}[2]{p_{#1}{(#2)}}
\newcommand{\splurVar}[1]{p_{#1}}
\newcommand{\rankings}{\Pi_m}

\newcommand{\piAN}{\pi_{\mathrm{AN}}}
\newcommand{\pimin}{\pi_{\mathrm{min}}}
\newcommand{\pisym}{\pi_{\mathrm{sym}}}
\newcommand{\piic}{\pi_{\mathrm{IC}}}
\usepackage{titletoc}

\newif\ifredactionMode
\newif\ifhideDone
\hideDonefalse
\redactionModetrue

\ifredactionMode
    
    \ifhideDone
        \newcommand{\resolved}[1]{}
    \else
        \newcommand{\resolved}[1]{#1}
    \fi
    \newcommand{\TODO}[1]{\textcolor{Red}{[\textsf{TODO}: #1]}}
    
    \newcommand{\FG}[1]{\textcolor{Plum}{[\textsf{Felipe}: #1]}}
    \newcommand{\MT}[1]{\textcolor{ForestGreen}{[\textsf{Magdalena}: #1]}}
    \newcommand{\MO}[1]{\textcolor{RoyalBlue}{[\textsf{Mohamed}: #1]}}
    \newcommand{\UG}[1]{\textcolor{BurntOrange}{[\textsf{Umberto}: #1]}}
    \newcommand{\RV}[1]{\textcolor{Purple}{[\textsf{Reviewer}: #1]}}
    \newcommand{\re}[1]{%
        \par\noindent
        \hspace*{1.5em}%
        \textcolor{gray}{\vrule width 1.5pt}%
        \hspace{0.6em}%
        \begin{minipage}[t]{\dimexpr\linewidth-1.5em-1.5pt-0.6em\relax}%
        #1%
        \end{minipage}%
    }
\else
    
    \newcommand{\resolved}[1]{}
    \newcommand{\TODO}[1]{}
    
    \newcommand{\FG}[1]{}
    \newcommand{\MT}[1]{}
    \newcommand{\MO}[1]{}
    \newcommand{\UG}[1]{}
    \newcommand{\RV}[1]{}
    \newcommand{\re}[1]{}
\fi

\def\stickplotwidth{5.5}

\def\stickheightratio{0.20}
\def\stickticksizeratio{0.012}

\definecolor{colIC}{HTML}{222222}
\definecolor{colComp}{HTML}{C8322B}
\definecolor{colAxis}{HTML}{444444}

\newcommand{\stripframe}[3]{%
  \pgfmathsetmacro{\yoff}{#1}%
  \pgfmathsetmacro{\spTick}{\stickplotwidth*\stickticksizeratio}%
  \node[anchor=east, font=\small] at (0.5, \yoff) {#3};
  \draw[line width=0.5pt, colAxis] (1, \yoff) -- (#2, \yoff);
  \foreach \j in {1,...,#2} {
    \draw[line width=0.5pt, colAxis]
      (\j, \yoff) -- (\j, \yoff + \spTick);
  }
  \foreach \j in {1,3,...,#2} {
    \node[below=1pt, font=\scriptsize, colAxis]
      at (\j, \yoff) {\j};
  }
}

\newcommand{\compstrip}[5]{%
  \stripframe{#1}{#2}{#4}%
  \pgfmathsetmacro{\yoff}{#1}%
  \pgfmathsetmacro{\spH}{\stickplotwidth*\stickheightratio}%
  \pgfmathsetmacro{\maxw}{#5}%
  \foreach \pos/\wt/\lbl in {#3} {
    \pgfmathsetmacro{\hgt}{\yoff + \spH*\wt/\maxw}%
    \draw[colComp, line width=1.5pt] (\pos, \yoff) -- (\pos, \hgt);
    \node[above=0.5pt, font=\tiny, colComp] at (\pos, \hgt) {\lbl};
  }
}

\newcommand{\compfig}[4]{%
  \begin{tikzpicture}[baseline=0]
    \pgfmathsetmacro{\xunit}{\stickplotwidth / (#1 - 1)}%
    \tikzset{x=\xunit cm, y=1cm}
    \compstrip{0}{#1}{#2}{#3}{#4}
  \end{tikzpicture}%
}

\title{Efficient Elicitation of Collective Disagreements}

\author{%
  Mohamed Ouaguenouni$^{1}$ \qquad Felipe Garrido-Lucero$^{1}$ \qquad  Umberto Grandi$^{1}$  \\
  \textbf{César Hidalgo}$^{2,3,4}$ \qquad \textbf{Magdalena Tydrichova}$^{5}$\\
  \\
  \centerline{\begin{minipage}{0.85\textwidth}
  {\small 1. IRIT, Université Toulouse Capitole, 
  Toulouse, France\\
  2. Center for Collective Learning, IAST, Toulouse School of Economics, Toulouse, France \\
  3. Center for Collective Learning, CIAS, Corvinus University of Budapest, Budapest, Hungary \\
  4. AMBS, University of Manchester, Manchester, UK\\
  5. Centrale Supélec, Paris Saclay, France}
  \end{minipage}}
}
\begin{document}

\maketitle

\begin{abstract}
We analyze the structure of the disagreement among a population of voters over a set of alternatives. Surveys typically ask either for pairwise comparisons, simple and intuitive for participants, or full rankings over alternatives, eliciting the entire voters' preferences. Building on the observation that pairwise comparisons cannot distinguish structural disagreement from noise, we propose a stratified framework to identify the minimal aggregated preference information needed to compute a number of disagreement measures from the literature. 
%In a digital democracy setting, where the number of alternatives is large and the attention span of voters is scarce, our framework provides a viable alternative to measuring disagreements through full rankings. 
Specifically, we introduce the plurality matrix, a generalization of pairwise comparisons that records, for every subset $S$ of alternatives, the probability that each $a \in S$ ranks first in $S$. We define the level of a disagreement measure as the smallest subset size needed to express it, showing that many existing notions, including rank-variance and divisiveness, sit at level $3$, proving that pairwise comparisons are not enough. In addition, we demonstrate the interest of going beyond level $3$ both theoretically and experimentally. To make these results actionable, we design two elicitation protocols to estimate the plurality matrix,
%, namely the $k$-chain and the $k$-ranking, 
exploring the trade-off between the number of required participants and the cognitive load requested to each of them.
%information queried from each participant.
%We analyze the structure of the disagreement among a population of voters over a set of alternatives. Surveys typically ask either for pairwise comparisons, simple and intuitive for participants, or full rankings over the alternatives, which encompass the whole structure of the voters' preferences. Observing that pairwise comparisons cannot distinguish structural disagreement from noise, we propose a stratified framework to identify the minimal aggregated preference information needed to compute a number of disagreement measures. In a digital democracy setting, where the number of alternatives is large and the attention span of voters is scarce, our framework provides a viable alternative to measuring disagreements through full rankings. We introduce the plurality matrix, a generalization of pairwise comparisons that records, for every subset $S$ of alternatives, the probability that each $a \in S$ ranks first in $S$. We define the level of a disagreement measure as the smallest subset size needed to express it, showing that many existing measures, including rank-variance and divisiveness-alike measures, sit at level $3$, proving that pairwise comparisons are not enough. In addition, we demonstrate the interest of going beyond level $3$ both theoretically and experimentally. To make these results actionable, we design two protocols to elicit the plurality matrix, namely the $k$-chain and the $k$-ranking, exploring the trade-off between the number of required participants and the information queried from each participant.
\end{abstract}

\section{Introduction}\label{sec:introduction}

Social choice theory seeks to model and improve how a society 
%composed of self-driven individuals 
makes collective decisions. A large part of the literature addressed this question by proposing preference aggregation rules, which aim to select the most preferred alternative(s) from a set of options given the agents’ declared preferences, by characterizing them axiomatically, and by designing efficient elicitation protocols for their computation \citep{arrow2010handbook,HandbookCOMSOC,furnkranz2017preference}. 
However, consensus is only half of the story: society must often confront and resolve disagreements before reaching agreement~\citep{waldron1999law}.
%to reach agreements a society has to pass through a number of iteration to identify and solve their disagreements first \citep{waldron1999law}.
This is well illustrated by the deliberative platform Pol.is~\citep{small2021polis,small2023opportunities}, where proposals with the narrowest voting margins are highlighted alongside consensual ones. 
Despite growing interest, relatively little attention has been devoted to disagreements in the (computational) social choice literature. Intuitively, an alternative is divisive when a group of individuals favors it while others oppose it. Still, a canonical definition has not yet been established: existing approaches rely on polarization measures \citep{esteban_ray}, the variance of an alternative’s ranking across users \citep{gaitonde2020adversarial1,musco2018minimizing}, or relative rankings to identify  population factions \citep{navarrete2024understanding}.% with existing approaches drawing on classical polarization measures \citep{esteban_ray}, considering the variance of an alternative’s ranking across users \citep{musco2018minimizing,gaitonde2020adversarial1} or using the relative ranking of alternatives to identify factions in the voting population \citep{navarrete2024understanding}.

Beyond the choice of aggregation rules, an equally important aspect of collective decision-making lies in how preferences are elicited. 
%The quality and structure of the collected preferences fundamentally shape the outcomes of any aggregation procedure. 
A substantial body of work has focused on designing efficient elicitation protocols, aiming to reduce cognitive and communication burdens while preserving enough information for accurate decision-making \citep{conitzer2005communication,procaccia2009thou}. A particularly simple and widely used elicitation primitive
%\footnote{Primitive in the sense that more complex methods can be constructed from it.} 
is that of \textit{pairwise comparisons} \citep{thurstone1927method}, where agents are asked to choose between two alternatives. However, relying only on pairwise comparisons may fail to reveal important features of collective preferences, in particular those related to disagreement and polarization, as illustrated in the following example: %Interestingly, a first key observation of our article is that pairwise comparisons may fail to identify the divisive alternatives, as we illustrate in the following example.

\textbf{Pairwise comparisons are not enough to capture collective disagreement.} Consider two populations ranking three alternatives $a,b,c$. In Population~1, all six possible rankings appear equally often. In Population~2, voters are split into two opposing groups: half rank $a$ first and $c$ last, while the other half do the opposite. From a pairwise comparison perspective, these two populations are identical: for any pair of alternatives, each is equally likely to be ranked above the other. However, the underlying structures of disagreement are very different. In Population~1, the position of every alternative is spread uniformly over all three ranks, while Population 2 is split with alternatives $a$ and $c$ being ranked equally either first or last.

%In this article, we introduce a formal framework for disagreement measures and their interaction with preference elicitation, focusing on how the choice of elicitation protocols influences the identification and characterization of divisive alternatives. \MT{Not sure we are actually doing which preceeds (the end of previous sentence) !} By jointly analyzing elicitation and aggregation, we aim to uncover their complementarity and better understand how information acquisition shapes the detection and interpretation of disagreement in collective decision-making.

\textbf{Contributions.} Building on the previous observation, %observation that elicitation schemes more expressive than those based solely on pairwise comparisons may be required to capture societal disagreement, 
we introduce a novel preference representation that encodes, for each subset of alternatives, the likelihood that each member of the subset is its top-ranked alternative. Leveraging this representation, we study the minimal information required to compute several well-known disagreement measures from the literature and design elicitation protocols suited to different settings. Our main contributions are the followings:
\vspace{-0.2cm}
\begin{itemize}[leftmargin = *]\setlength\itemsep{0.01cm}
\item We introduce the \textit{plurality matrix}, which, for each subset of alternatives $S$ and each $a \in S$, records the probability that $a$ is ranked first within $S$. The rows are ordered by increasing size of $S$, defining different \textit{degrees}, with pairwise comparisons corresponding to degree $2$.
\item We show that computing the \textit{agreement index} \cite{faliszewski2023}, the \textit{rank variance} \cite{kendall1939problem}, and the divisiveness measure \cite{navarrete2024understanding} requires information of degrees $2$, $3$, and $3$, respectively, demonstrating that some disagreement measures inherently require more information than pairwise comparisons.

\item We prove that, for any degree $k$, there exist instances with identical information up to degree $k$ but differing at degree $k+1$, showing that the hierarchy of degrees is strict. Moreover, we construct, for every degree $k$, a disagreement measure of exactly that degree.

\item We show that under single-peaked preferences or the Plackett--Luce model, the hierarchy collapses to degree $2$.

\item We propose two elicitation protocols and analyze the trade-off they induce between per-voter cognitive load and population size.
\end{itemize}
\textbf{Related work}. The related literature spans three complementary directions: measuring disagreement in social choice, enhancing elicitation beyond pairwise comparisons, and designing cost-efficient elicitation protocols. First, a number of works in social choice have studied how to define measures orthogonal to agreement. Existing approaches typically capture either the diversity of voters’ preferences \citep{Ammann2025PreferenceD, faliszewski_k-kemeny, hashemi2014measuring, karpov2017} or polarization \citep{can2015measuring,faliszewski2023}. These notions are generally defined at the level of full preference profiles and often rely on complete rankings. Closer to our perspective are measures defined at the level of alternatives, such as the divisiveness measure of Navarrete et al.~\cite{colley2023measuring, navarrete2024understanding}, and the axiomatic analysis of conflicting pairs by Delemazure et al.~\cite{delemazure2024conflicting}.
Second, several works consider richer query models than pairwise comparisons for preference elicitation and are closely related to our framework. In particular, subset-wise queries, where a set of alternatives is presented and the best one is returned, have been studied in learning and ranking settings \cite{pmlr-v89-saha19a}. Related models include top-$k$ queries, which extract more information per interaction \cite{ayadi2022approximating, chen2018_topk_ranking}, as well as recent work on eliciting full rankings over queried subsets of alternatives \cite{HalpernEtAlEC2024}. 
Third, a complementary line of work studies elicitation protocols under cognitive constraints and relates to our goal of designing viable alternatives to full ranking elicitation. In voting, the recent work of Terzopoulou \cite{terzopoulou2023voting} introduces the notion of voters’ energy in reporting preferences, capturing the limitation on the number of alternatives that can be ranked by each voter, and studies the resulting loss in social welfare for plurality and Borda rules. More broadly, behavioral decision theory highlights the impact of cognitive load and task complexity on response quality \cite{Griffin2005}, motivating the design of hybrid or adaptive elicitation formats that balance informational richness with practical usability \cite{Brams2009, dery2024interactive}.

\section{The Model}\label{sec:model_plur_matrix}
%%%%%%%%%%%%%%%%%%%%%%%%%%%%%%%%%%%%%%%%%%%%%%%%%%

This section is devoted to introducing a probabilistic model of agents' preferences over alternatives, to define the \textit{plurality matrix}, and to define three disagreement measures as our use-cases. 

Let $\calA = \{a, b, \ldots\}$ be a finite set of $m \geq 3$ alternatives, $\rankings$ the set of all $m!$ strict rankings of $\calA$, and $\calS := \{S \subseteq \calA : 2 \leq |S| \leq m\}$ (sorted by size, then lexicographically). For a statement $\tau$, $\mathbbm{1}\{\tau\}$ denotes the indicator of $\tau$. A \textbf{preference profile} (or voter population) is a probability distribution $\pi$ on $\rankings$, and a \textbf{voter} corresponds to a ranking ${\succ} \in \rankings$ sampled from $\pi$.

%%%%%%%%%%%%%%%%%%%%%%%%%%%%%%%%%%%%%%%%%%%%%%%%%%%%%%%%%%%
%\subsection{Preference profiles, rank, and Borda scores}

%Given $\calA = \{a,b,...\}$ a finite set of $m \geq 3$ alternatives, we denote $\rankings$ the set of all $m!$ strict rankings of $\calA$, and the set $\mathcal{S} := \{S \subseteq \calA : 2\leq |S|\leq m\}$, sorted by size, and within each size, lexicographically. Given a statement $\tau$, we denote $\mathbbm{1}\{\tau\}$ the indicator function that takes value $1$ if $\tau$ is true, and $0$ otherwise.
% \MT{Maybe remove " and $\succ$ a typical element of $\rankings$"? (We redefine it in the definition just below). }
% \MT{Also strictly speakings, sets are not ordered in maths. But it's highly understandable so why not.}

%\begin{definition}\label{def:preference_profile_and_voter}
%A \textbf{preference profile} (or voters population) corresponds to any probability distribution $\pi$ on $\rankings$. A \textbf{voter} is defined as an element $\succ\ \in \rankings$ sampled from $\pi$. 
%\end{definition}

We adopt a probabilistic view as many applications, e.g. online platforms, involve in general a large number of participants.
%We choose to model preference profiles as distributions as, in general, e-platforms count with a large number of participants. 
We now introduce one of the main concept of our paper, the \textit{plurality matrix}, which will serve as the basis for first analyzing disagreement and later designing elicitation protocols.
%to be the base for the analysis of disagreement and the posterior design of elicitation protocols.

\begin{definition}\label{def:plurality_matrix}
Let $\pi$ be a preference profile. We define the \textbf{plurality matrix} $\calP_\pi \in [0,1]^{\mathcal{S}\times \calA}$, where, for each $S \in \mathcal{S}$ and $a \in \calA$, the entry $(S,a)$ is given by,
\begin{align*}
    p^\pi_S(a) := \mathbb{P}_\pi(a\text{ is ranked first among the alternatives in }S) \cdot \mathbbm{1}\{a\in S\}.
    % p^\pi_S(a) := \left\{ \begin{array}{lc}
    %    \mathbb{P}_\pi(a\text{ is ranked first among the alternatives in }S) & \text{if }a \in S, \\
    %     0 & \text{if } a \notin S.
    % \end{array}\right.
\end{align*}
For simplicity, we write $\splurVar{xywz}^\pi$ instead of $\splurVar{\{x,y,w,z\}}^\pi$. Given $k \in \{2,...,m\}$, we refer to the \textbf{data or information of degree} $k$ to the sub-matrix containing only the rows related to sets of size $k$. We say a real-value statistic $\Phi$ is of \textbf{level} $k$ if for any preference profile $\pi$ and alternative $a$, $\Phi(\pi,a)$ can be expressed using data of degree at most $k$ but not at most $k-1$.
% , $S \subseteq \calA$, with $|S| \geq 2$, and $a \in S$ be fixed. We define the $S$\textbf{-plurality quantity of} $a$ as,
% \begin{align*}
%   \splurVar{S}^\pi(a) := \mathbb{P}_\pi(a\text{ is ranked first among the alternatives in }S) = \sum\nolimits_{\substack{\succ \in \rankings,r_a(\succ ) = 1}} \pi_S(\succ),
% \end{align*}
% where $\pi_S(\succ)$ corresponds to $\pi$ restricted to $S$. For any $k \in \{2,...,m\}$, we call the \textbf{degree-$k$ quantity} to the collection of all $S$-plurality quantities with $S\subseteq\calA$ such that $|S| = k$.
\end{definition}

We assume throughout the article that $\splurVar{S}^\pi(a) > 0$ for every profile $\pi$, every $S \in \calS$, and $a \in S$: an alternative violating this is 
%systematically 
dominated on every voter's ranking, hence never a locus of disagreement. For simplicity, we make one exception in Example~\ref{ex:IC_AN_require_degree_3_to_see_difs} below. We write $\splurVar{S}^\pi(a)= \cdot$ whenever $a\not\in S$.

\begin{definition}\label{def:rank_of_a_and_borda}
Let $\pi$ be a preference profile, $\succ \, \in \, \rankings$ be a ranking, and $a \in \calA$. We define the \textbf{rank} of $a$ and the \textbf{Borda score} of $a$, respectively by,
\begin{align*}
r_a(\succ) := 1 + |\{b \in \calA \setminus \{a\} : b \succ a\}|
\text{ and }
\Bor_\pi(a) := \sum\nolimits_{b \in \calA \setminus \{a\}} p_{ab}^\pi(a) = m - \E_\pi[r_a].
\end{align*}
Moreover, given $x,y\!\in\!\calA$, we define the Borda score of $a$ over the subpopulation preferring $x$ to $y$:
\begin{align*}
\Bor_\pi\bigl(a;\calN^{x \succ y}_\pi\bigr) := \sum\nolimits_{b \in \calA \setminus \{a\}} \mathbb{P}_\pi(a \succ b \mid x \succ y),
\end{align*}
%to be 
% Its expectation is directly related to the borda score: $\Bor(a) := \sum_{b \neq a} p_{ab}$ via $\E_\pi[r_a] = m - \Bor(a)$. 
\end{definition}

Pairwise comparisons (more precisely, their aggregation over all voters) correspond to the data of degree $2$. In particular, the Borda score is a measure of level $2$. Pairwise comparisons are widely used in the literature to compute many voting rules \citep{tournamentHandbook2016,weight-tournamentHandbook2016,navarrete2024understanding}. At the other extreme, data of degree $m$ correspond to plurality scores, i.e. the proportion of voters ranking each alternative first. In general, data of different degrees in the plurality matrix are unrelated, in the sense that none can be recovered from the other. %Moreover, as we will see in \Cref{sec:primitives}, the data of different degrees differ both in the aspects of the preference profiles they capture, as well as in their elicitation cost. This observation will play a key role in our future analysis. We give a first insight of this property in the following example. 
We illustrate the notion of plurality matrix and give some first insights next.
% in the following example:

\begin{example}\label{ex:IC_AN_require_degree_3_to_see_difs}
Let $\calA = \{a, b, c\}$ and consider two preference profiles:\footnote{Note the two profiles sit at opposite poles of the visualization compass of Szufa et al. \cite{szufa2025drawing}.} the impartial-culture profile $\piic$,\footnote{Preference profiles in our terminology are typically called \emph{cultures} or \emph{distributions} in the social choice literature. In this example we consider the impartial culture, but see Figure~\ref{fig:synthetic} for profiles corresponding to other distributions.}
%instances of the Mallow, Plackett-Luce, $k$-Euclidean and single-peaked distributions.} 
where all alternatives are uniformly ranked, and the antagonism profile $\piAN$, where $b$ and $c$ are uniformly ranked but $a$ is ranked either first or last, each with probability $1/2$. Formally, $r_a$ verifies,
\begin{align*}
 \Pr_{\piic}[r_a = i] = \tfrac13, \text{ for } i \in \{1, 2, 3\} \text{ and } \Pr_{\piAN}[r_a = i] = \tfrac12\cdot\ind\{i=1\} + \tfrac12\cdot\ind\{i=3\}   
\end{align*}
% $\Pr_{\piic}[r_a = i] = 1/3$ for $i \in \{1, 2, 3\}$ and $\Pr_{\piAN}[r_a = i] = \tfrac12\ind\{i=1\} + \tfrac12\ind\{i=3\}$.
\Cref{tab:table_example_plurality_matrix} shows the plurality matrix for the two aforementioned profiles.
\vspace{-0.2cm}

\begin{table}[h]
\caption{Plurality matrices for $\piic$ and $\piAN$ over $\calA = \{a,b,c\}$}
\label{tab:table_example_plurality_matrix}
\centering
\begin{tabular}{c|ccc}
\toprule
\multicolumn{4}{c}{$\calP_{\piic}$}\\
\midrule
$\calS \times \calA$ & $a$ & $b$ & $c$  \\
\midrule
$\{a,b\}$   & $\nicefrac12$ & $\nicefrac12$ & $\cdot$\\
$\{a,c\}$   & $\nicefrac12$ & $\cdot$           & $\nicefrac12$ \\
$\{b,c\}$   & $\cdot$ & $\nicefrac12$ & $\nicefrac12$ \\
$\{a,b,c\}$ & $\nicefrac13$ & $\nicefrac13$ & $\nicefrac13$ \\
\bottomrule
\end{tabular}\qquad
\begin{tabular}{c|ccc}
\toprule
\multicolumn{4}{c}{$\calP_{\piAN}$}\\
\midrule
$\calS \times \calA$ & $a$ & $b$ & $c$  \\
\midrule
$\{a,b\}$   & $\nicefrac12$ & $\nicefrac12$ & $\cdot$\\
$\{a,c\}$   & $\nicefrac12$ & $\cdot$           & $\nicefrac12$ \\
$\{b,c\}$   & $\cdot$ & $\nicefrac12$ & $\nicefrac12$ \\
$\{a,b,c\}$ & $\nicefrac12$ & $\nicefrac12$ & $0$ \\
\bottomrule
\end{tabular}
\smallskip
\end{table}
\vspace{-0.2cm}
Notice that the two matrices coincide on all entries of degree $2$ but differ at degree $3$. Any measure based solely on pairwise comparisons, therefore, will not be able to distinguish them. Yet, both profiles clearly differ in terms of disagreement.
\end{example}

The plurality matrix will be shown to be rich enough to express disagreement notions studied in this paper (as well as several pairwise-based social choice concepts), yet simple enough to define efficient elicitation protocols under limited cognitive load. 
Moreover, by working with aggregate proportions, we adopt a privacy-preserving approach that does not require storing individual preferences.

In the rest of the article, unless required, we will drop the dependence of all defined terms on the preference profile $\pi$ for simplicity. 

\subsection{Three Measures of Disagreement}
\label{subsec:div}

We introduce three measures of disagreement from the literature to be the use-cases of the article, namely, the \textit{agreement index} of 
%Can et al. \cite{can2015measuring} (as presented by 
Faliszewski et al. \cite{faliszewski2023}, the \textit{rank variance} of Kendall and Smith \cite{kendall1939problem}, and the \textit{divisiveness} of Navarrete et al. \cite{navarrete2024understanding}.

% \UG{Please do not re-name N-divisiveness, there is no need and it will just create confusion in the literature. Just call it divisiveness.}

\begin{definition}\label{def:agreement_index}
Given a profile $\pi$, we define its \textbf{agreement index} as
\begin{align*}
  A(\pi) := \frac{1}{\binom{m}{2}}\sum\nolimits_{\{x,y\} \subseteq \calA} \bigl|2\cdot \splur{xy}{x} - 1\bigr|.
\end{align*}
\end{definition}
\vspace{-0.2cm}
The agreement index aggregates pairwise comparisons into a single scalar ranging from $0$ (all pairwise comparisons are perfectly split) to $1$ (unanimity on all pairs). Notice that Definition \ref{def:agreement_index} is entirely determined by pairwise comparisons, thus, as illustrated in \Cref{ex:IC_AN_require_degree_3_to_see_difs}, it might fail to detect any disagreement that is not already visible at the pairwise level. 
% The following measures work at a local level.

% \begin{definition}\label{def:rank_variance}
% Let $\pi$ be a preference profile and $a\in\calA$ an alternative. We define the \textbf{rank variance} of $a$ as $\Var_\pi(a) := \E_\pi[(r_a\!-\!\E_\pi[r_a])^2]$.
% %\begin{align*}
% %  \Var_\pi(r_a) \;:=\; \E_\pi\bigl[(r_a - \E_\pi[r_a])^2\bigr].
% %\end{align*}
% \end{definition}
% The rank variance quantifies how much the rank of $a$ fluctuates across the population. 

% \begin{definition}\label{def:n-divisiveness}
% Let $\pi$ be a preference profile and $a \in \calA$ an alternative. We define the \textbf{divisiveness} of $a$ as
% \begin{align*}
% &\Div_\pi(a) := \frac{1}{m-1}\sum\nolimits_{b \in\calA\setminus \{a\}}
%     \bigl|\Bor_\pi\bigl(a;\calN^{a \succ b}_\pi\bigr)
%          \;-\; \Bor_\pi\bigl(a;\calN^{b \succ a}_\pi\bigr)\bigr|.
%          % \text{ where},\\
% % &\Bor\bigl(a;\calN^{x \succ y}_\pi\bigr) := \sum\nolimits_{b \in \calA \setminus \{a\}} \mathbb{P}_\pi(a \succ b \mid x \succ y), \text{ for any } x,y \in \calA.
% \end{align*}
% \end{definition}

\begin{definition}\label{def:rank_variance}
Let $\pi$ be a preference profile and $a\in\calA$ an alternative. 
\vspace{-0.2cm}
\begin{itemize}[leftmargin = *]\setlength\itemsep{-0.1cm}
\item We define the \textbf{rank variance} of $a$ as $\Var_\pi(a) := \E_\pi[(r_a\!-\!\E_\pi[r_a])^2]$.
\item We define the \textbf{divisiveness} of $a$ as
\begin{align*}
&\Div_\pi(a) := \frac{1}{m-1}\sum\nolimits_{b \in\calA\setminus \{a\}}
    \bigl|\Bor_\pi\bigl(a;\calN^{a \succ b}_\pi\bigr)
         \;-\; \Bor_\pi\bigl(a;\calN^{b \succ a}_\pi\bigr)\bigr|.
         % \text{ where},\\
% &\Bor\bigl(a;\calN^{x \succ y}_\pi\bigr) := \sum\nolimits_{b \in \calA \setminus \{a\}} \mathbb{P}_\pi(a \succ b \mid x \succ y), \text{ for any } x,y \in \calA.
\end{align*}
\end{itemize}
\end{definition}
The rank variance quantifies how much the rank of $a$ fluctuates across the population. Following Navarrete et al. \cite{navarrete2024understanding}, we consider divisiveness combined with Borda scores (although, note that any scoring function could be used, such as the Copeland score). Intuitively, for each alternative $b \in \calA\setminus\{a\}$, divisiveness partitions the population into those preferring $a$ to $b$ and those preferring $b$ to $a$, and compares the (Borda) score of $a$ across these two groups. We conclude this section by illustrating the three introduced measures of disagreement.

\begin{example}\label{ex:example_measures}
Consider an instance with $|\calA| = 15$ alternatives and $a \in \calA$ fixed. Consider the two preference profiles $\piic$ and $\piAN$ as defined in Example \ref{ex:IC_AN_require_degree_3_to_see_difs}, extended to $15$ alternatives. In addition, consider the preferences profiles $\pimin$ and $\pisym$ such that, for any $b \in \calA\setminus\{a\}$, the rank of $b$ is uniformly distributed, and, given $\varepsilon_1, \varepsilon_2 > 0$,
\begin{align*}
    &\mathbb{P}_{\pimin}[r_a = i] = \varepsilon_1\cdot\ind\{i = 1\} + (1-\varepsilon_1)\cdot\ind\{i = 15\},\\
    &\mathbb{P}_{\pisym}[r_a = i] = \varepsilon_2\cdot\ind\{i = 1\} + \varepsilon_2\cdot\ind\{i = 15\} + (1-2\varepsilon_2)\cdot\ind\{i = 8\}.
\end{align*}
In words, in $\piic$ all rankings are equally likely. In $\piAN$, half of the population ranks $a$ first while the other half ranks it last. In $\pimin$, most of the population ranks $a$ last while a small portion ranks it first. Finally, in $\pisym$, small populations rank $a$ respectively first and last, while most of the voters rank it in the middle. Figure \ref{fig:three-profiles} illustrates the four distributions.
\vspace{-0.2cm}
\begin{figure}[H]
\centering
{\renewcommand{\stickplotwidth}{2.75}%
 \setlength{\tabcolsep}{2pt}%
 \compfig{15}%
  {1/0.2/{},
  2/0.2/{},
  3/0.2/{\normalsize{\textcolor{black}{$\piic$}}},
  4/0.2/{},
  5/0.2/{},
  6/0.2/{},
  7/0.2/{},
  8/0.2/{$\tfrac{1}{15}$},
  9/0.2/{},
  10/0.2/{},
  11/0.2/{},
  12/0.2/{},
  13/0.2/{},
  14/0.2/{},
  15/0.2/{}}%
  {}{1}\hspace{1pt}%
\compfig{15}%
  {1/0.5/{$\tfrac{1}{2}$},
  8/0.0/{\normalsize{\textcolor{black}{$\piAN$}}},
   15/0.5/{$\tfrac{1}{2}$}}%
  {}{1}\hspace{1pt}%
\compfig{15}%
  {1/0.1/{$\varepsilon_1$},
  8/0.0/{\normalsize{\textcolor{black}{$\pimin$}}},
   15/0.9/{$1-\varepsilon_1$}}%
  {}{1}\hspace{1pt}%
\compfig{15}%
  {1/0.2/{$\varepsilon_2$},
  4/0.0/{\normalsize{\textcolor{black}{$\pisym$}}},
   8/0.6/{$1-2\varepsilon_2$},
   15/0.2/{$\varepsilon_2$}}%
  {}{1}%
}
\caption{%
Distribution of $r_a$ for each profile represented as probability mass functions.}
\label{fig:three-profiles}
\end{figure}

\vspace{-0.5cm}
Table \ref{tab:measures-on-profiles} shows the metrics on the four profiles. As expected, the agreement index struggles to differ a uniform profile from profiles with split populations, while the other metrics manages to do it. However, rank variance is maximized at $\piAN$, as $a$ is equally often ranked first or last, while divisiveness is maximized for both $\piAN$ and $\pimin$, showing that only the presence, not the frequency, of extreme ranks might matter. %the number of voters ranking $a$ first and last might not play a role, but only the fact that $a$ is actually ranked first and last. 
Interestingly, for $\piic$ and $\pisym$, both metrics coincide. %Understanding how two different aggregations can arrive at identical values on such different profiles requires a framework that clarifies what each metric actually captures; we develop it in the next section.

\begin{table}[ht]
\centering
\caption{Agreement index, rank variance, and divisiveness of four profiles, with $\varepsilon_1 = 0.05$ and $\varepsilon_2 = \frac{4}{21}$.}
%with $\varepsilon_1 = \frac{1}{20}$ and $\varepsilon_2 = \frac{4}{21}$.}
\label{tab:measures-on-profiles}
\vspace{0.5em}
\renewcommand{\arraystretch}{1.1}
\setlength{\tabcolsep}{6pt}
\begin{tabular}{@{}l cccc@{}}
\toprule
 & $\piic$ & $\piAN$ & $\pimin$ & $\pisym$ \\
\midrule
$A(\pi)$             & $0$     & $0$   & $0.12$    & $0$    \\
$\Var_\pi(a)$ & $18.\bar{6}$  & $49$  & $9.31$ & $18.\bar{6}$ \\
$\Div_\pi(a)$       & $5.\bar{3}$  & $14$  & $14$      & $5.\bar{3}$ \\
\bottomrule
\end{tabular}
\end{table}

% \TODO{Adding to the caption: "Height represent the probability masses" or something like this}
%\Cref{tab:measures-on-profiles} reports $A$, $\Var_\pi(r_a)$, and
%$\Div(a)$ on the four profiles.
%The Agreement Index depends only on pairwise proportions and
%assigns $A = 0$ to both $\piic$ and~$\piAN$, which share
%$p_{ab} = 1/2$ for every opponent~$b$.
%The rank variance and Navarrete divisiveness separate these two
%profiles but order $\piic$ and $\pimin$ oppositely: the variance
%measures mass-weighted spread and prefers the uniform distribution
%over two concentrated atoms, whereas the divisiveness measures the
%gap between the two camps induced by a pairwise split and is
%size-blind, reaching its maximum~$14$ on both $\piAN$ and~$\pimin$.
%
%More striking, the profile $\pisym$ collides with $\piic$ on all three measures simultaneously.
%The rank distribution of~$\pisym$ is visibly split between a polarised minority at ranks $\{1, 15\}$ and an indifferent majority at rank~$8$, yet the variance and the divisiveness both compress this structure into a single scalar in which the two components cancel against each other: the signal of the extremes is diluted by the mass of the middle, and the two profiles become indistinguishable.
%Understanding how two different aggregations can arrive at identical values on such different profiles requires a framework that exposes what each measure actually reads from the preference data. We develop this framework in the next section.
\end{example}

\section{The Plurality Hierarchy}\label{sec:hiearchy}

This section is dedicated to study the hierarchy of the different degrees information of the plurality matrix. We begin by determining the level of the disagreement measures introduced in Section~\ref{subsec:div}, complemented by an extensive analysis in Appendix~\ref{app:master-inventory} of a large number of agreement, disagreement, and polarization measures from the literature. We then prove that the induced level hierarchy is strict and that it collapses under standard structural assumptions on voters' preferences.
%, presenting in Appendix~\ref{app:master-inventory} their formulas and levels in the plurality hierarchy.} 

\subsection{Level of Disagreement Measures}

Our first result is a straighforward corollary of the definition of the agreement index.

\begin{proposition}
The Agreement Index $A(\pi)$ is a measure of level $2$.
\end{proposition}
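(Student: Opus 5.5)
The claim has two parts, matching the definition of level $k$ in Definition~\ref{def:plurality_matrix}: $A(\pi)$ can be expressed using data of degree at most $2$, and it cannot be expressed using data of degree at most $1$.

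\emph{Upper bound.} By Definition~\ref{def:agreement_index}, $A(\pi)$ is an explicit function of the entries $\splur{xy}{x}$ for $\{x,y\}\subseteq\calA$. These are exactly the rows of the plurality matrix indexed by sets of size $2$, i.e.\ the data of degree $2$. The value does not depend on which element of each pair is written first, since $\splur{xy}{y}=1-\splur{xy}{x}$ and so $|2\splur{xy}{x}-1|=|2\splur{xy}{y}-1|$. Hence $A(\pi)$ is expressible using degree-$2$ data only.

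\emph{Lower bound.} The set $\calS$ contains only subsets of size at least $2$, so there is no data of degree $1$. A statistic of level $1$ would therefore depend on no information about $\pi$ and would be constant over all profiles. I will exhibit two profiles on which $A$ takes different values.
\begin{itemize}
\item Under the impartial culture $\piic$ of Example~\ref{ex:IC_AN_require_degree_3_to_see_difs}, every pairwise entry equals $\nicefrac12$, so $A(\piic)=0$.
\item Take the degenerate profile concentrated on a single ranking, or a slightly perturbed version of it to respect the positivity assumption. There every pairwise entry is $1$ or $0$ (respectively close to $1$ or $0$), so $A$ equals $1$ (respectively is close to $1$).
\end{itemize}
Table~\ref{tab:measures-on-profiles} also already shows $A$ is non-constant, with values $0$ and $0.12$. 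So $A$ is not expressible with data of degree at most $1$.

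The only real obstacle is interpretive: the definition of level ranges over data of degree ``at most $k-1$'', and for $k=2$ this is the empty collection. The argument above handles it by reading ``expressible with no data'' as ``constant in $\pi$''. With that reading, the level is exactly $2$.
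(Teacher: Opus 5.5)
Your proof is correct and follows the paper, which treats the statement as an immediate consequence of Definition~\ref{def:agreement_index}: $A(\pi)$ is a function of the degree-$2$ entries $\splur{xy}{x}$ alone. The paper leaves the lower bound implicit, and your extra step is a valid way to fill it: there is no degree-$1$ data, so level $1$ would force $A$ to be constant, yet $A(\piic)=0$ while $A(\pimin)=0.12$.
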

\begin{comment}
\begin{proof}
    Trivial: $A(\pi) = \tbinom{m}{2}^{-1}\sum_{\{x,y\}} |2\,p_{xy} - 1|$ and each $p_{xy} = \splur{\{x,y\}}{x}$ is of degree~$2$. 
\end{proof}
\end{comment}

%The Agreement Index $A(\pi) = \binom{m}{2}^{-1} \sum_{\{x,y\}} \bigl|2\,p_{xy} - 1\bigr|$ is a level-$2$ measure, since each $p_{xy} = \splur{\{x,y\}}{x}$ is a degree-$2$ entry of the plurality matrix. 

%This section is devoted to further study the different degrees of the plurality matrix. We begin by establishing the level of the three notions introduced in \Cref{subsec:div}.
% how the three use-case disagreement notions can be expressed through degree-$k$ quantities, and to establish a hierarchy of the different plurality quantities. 
%By definition, we have:

%\begin{proposition}
%The Agreement Index $A(\pi)$ is a measure of level $2$.
%\end{proposition}

Regarding the rank variance and the divisiveness, we prove two important results: first, they are fully captured by our framework and second, they require entries of degree strictly higher than two.

\begin{restatable}[]{proposition}{varsplur}\label{prop:var-splur}
The rank variance is a measure of level $3$. Moreover, for any preference profile $\pi$ and alternative $a \in \calA$, it holds,
\begin{align*}
  \Var(a)
  \;=\;
  \sum_{b \in \calA\setminus\{a\}}\!\! p_{ab}(a)\cdot(1 - p_{ab}(a))
  \;+\!\!
  \sum_{\substack{b,c \in \calA\setminus\{a\}, b \neq c}}
    \!\!\!\!\bigl(\,\splur{abc}{a} - p_{ab}(a)\cdot p_{ac}(a)\bigr).
\end{align*}
\end{restatable}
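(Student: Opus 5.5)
The plan is to write the rank as a sum of indicators and expand the variance. By Definition~\ref{def:rank_of_a_and_borda}, $r_a = 1 + \sum_{b \neq a} X_b$, where $X_b := \ind\{b \succ a\}$. The additive constant does not change the variance, so
\begin{align*}
\Var(a) = \sum_{b \neq a} \Var(X_b) + \sum_{b \neq c,\; b,c \neq a} \Cov(X_b, X_c).
\end{align*}
The second sum runs over ordered pairs, which matches the unordered-in-notation but ordered-in-summation double sum in the statement.

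Next I express each term through the plurality matrix.
\begin{itemize}
\item Each $X_b$ is Bernoulli with $\E[X_b] = 1 - p_{ab}(a)$, so $\Var(X_b) = p_{ab}(a)(1-p_{ab}(a))$. This is degree-$2$ data.
\item For the covariance, it is convenient to switch to $Y_b := 1 - X_b = \ind\{a \succ b\}$, since $\Cov(X_b, X_c) = \Cov(Y_b, Y_c)$. Then $\E[Y_b Y_c] = \Pr(a \succ b,\ a \succ c) = p_{abc}(a)$, because $a$ beats both $b$ and $c$ exactly when $a$ is ranked first in $\{a,b,c\}$.
\item Hence $\Cov(X_b, X_c) = p_{abc}(a) - p_{ab}(a)\,p_{ac}(a)$.
\end{itemize}
Substituting these gives the displayed identity, so the rank variance is expressible with data of degree at most $3$.

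To show the level is exactly $3$ rather than at most $2$, I use Example~\ref{ex:IC_AN_require_degree_3_to_see_difs}. There, $\piic$ and $\piAN$ share all degree-$2$ entries, yet $\Var_{\piic}(a) = 2/3 \neq 1 = \Var_{\piAN}(a)$. Any function of degree-$2$ data would have to take the same value on both profiles, so no such expression exists.

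The one subtlety is the standing positivity assumption, which $\piAN$ violates since $p_{abc}(c)=0$. I would either note that the paper explicitly allows this exception, or perturb $\piAN$ slightly by mixing with $\piic$. The mixture $(1-\epsilon)\piAN + \epsilon\,\piic$ still matches $\piic$ on degree $2$ and still has a different variance (the variance is affine in the mixture weight), and it satisfies positivity.

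I expect no real obstacle. The only care needed is to track ordered versus unordered pairs and to justify the identity $\{a \succ b\} \cap \{a \succ c\} = \{a \text{ first in } \{a,b,c\}\}$.
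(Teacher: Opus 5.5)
Your proof is correct and follows essentially the paper's route. The paper also writes $r_a$ through pairwise indicators and expands the variance (as second moment minus squared mean, then splitting diagonal from off-diagonal terms), and it shows tightness with the same $\piic$ versus $\piAN$ comparison in the $m=15$ version of \Cref{tab:measures-on-profiles}. Your mixture perturbation, which respects the positivity assumption, is a small extra that the paper omits; it is valid because both profiles share $\E[r_a]$, so the variance is indeed affine along the mixture.
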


%\MT{Proof made more explicit, ready to go to appendix. \Done}

%\MT{Why "at most"? For me, it's exactly level 3, no? }
%\re{\MO{If we want to be perfectly rigourous, we defined the level as the smallest maximal degree that allows us to write the measure, and so to show that some measure is of level X we should prove that we can write it using quantities of degree at most X AND also show that if we used quantities to degree $X-1$ we can find two profiles such that the measures are different but the quantities up to level $X-1$ are the same, in the case of Divisiveness and Rank Variance IC and AN could play this role.}}
%\re{\MT{OK, I've understood - we only found formula that is level 3, which is upper bound (as stated below the last prove), that's why "at most" - we need to add "at most" in the statement of rank variance then ! And we then prove the tightness of these bounds.} }

\begin{restatable}[]{proposition}{divsplur}\label{prop:div-splur}
The divisiveness is a measure of level $3$. Moreover, for any preference profile $\pi$ and alternative $a\in \calA$, it holds,
\begin{align*}
    \Div(a) = \frac{1}{m-1}\sum_{b\in\calA\setminus\{a\}} \biggl| 1 + \sum_{c\in\calA\setminus\{a,b\}} \left[ p_{abc}(a)\cdot\biggl(\frac{1}{1-p_{ab}(a)} + \frac{1}{p_{ab}(a)}\biggr) - \frac{p_{ac}(a)}{1-p_{ab}(a)}\biggr] \right|.
\end{align*}
% \begin{align}
%   \E_\pi[r_a \mid a \succ b]
%   &\;=\; m - 1
%       \,-\, \frac{1}{p_{ab}}\cdot\sum_{c \in \calA\setminus\{a, b\}}\!\!\splur{abc}{a},
%       \label{eq:E-apb}\\
%   \E_\pi[r_a \mid b \succ a]
%   &\;=\; m
%       \,+\, \frac{1}{1 - p_{ab}}\cdot\sum_{c \in \calA\setminus\{a, b\}}
%             \bigl(\,\splur{abc}{a} - p_{ac}(a)\,\bigr).
%       \label{eq:E-bpa}
% \end{align}
% In particular, $\Div(a)$ is a measure of level at most $3$.   
\end{restatable}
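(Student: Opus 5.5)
The formula is a direct computation: rewrite each conditional Borda score as a ratio of plurality-matrix entries of degree at most $3$. For the lower bound on the level, I would exhibit two profiles that agree on all degree-$2$ data but have different divisiveness.

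\textbf{Step 1 (the subpopulation $a \succ b$).} Fix $a$ and $b \neq a$, and write $p_{ab} := p_{ab}(a)$. By the standing assumption $p_{ab} \in (0,1)$, so both subpopulations $\calN^{a\succ b}$ and $\calN^{b \succ a}$ have positive mass and the conditionals are well defined. In $\Bor(a;\calN^{a\succ b})$ the summand $c=b$ equals $\mathbb{P}(a\succ b\mid a\succ b)=1$. For $c \notin\{a,b\}$, the event $\{a\succ b\}\cap\{a\succ c\}$ is exactly the event that $a$ is ranked first in $\{a,b,c\}$. Hence
\[
\mathbb{P}(a\succ c \mid a \succ b)=\frac{p_{abc}(a)}{p_{ab}},
\qquad\text{so}\qquad
\Bor(a;\calN^{a\succ b}) = 1+\sum_{c\in\calA\setminus\{a,b\}}\frac{p_{abc}(a)}{p_{ab}} .
\]

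\textbf{Step 2 (the subpopulation $b \succ a$).} In $\Bor(a;\calN^{b\succ a})$ the summand $c=b$ vanishes. For $c\notin\{a,b\}$, inclusion–exclusion gives $\mathbb{P}(a\succ c,\ b\succ a)=p_{ac}(a)-p_{abc}(a)$. Hence
\[
\Bor(a;\calN^{b\succ a})=\sum_{c\in\calA\setminus\{a,b\}}\frac{p_{ac}(a)-p_{abc}(a)}{1-p_{ab}} .
\]
Subtracting this from Step 1, grouping the $p_{abc}(a)$ terms, and averaging the absolute values over $b$ with the factor $\frac{1}{m-1}$ yields exactly the stated formula. The formula uses only degree-$2$ and degree-$3$ entries, so $\Div$ has level at most $3$.

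\textbf{Step 3 (strictness: not level $2$).} This is the only non-mechanical step, because the witnesses must respect the positivity assumption. I would use the mixture $\pi_t=(1-t)\piic+t\,\piAN$ with $t\in[0,1)$, where $\piAN$ is extended to $m$ alternatives as in Example~\ref{ex:example_measures}. Every ranking has positive probability under $\pi_t$, so every plurality entry is positive.

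All pairwise entries equal $1/2$. Under $\piic$ this is clear. Under $\piAN$ it holds because $a$ is first or last with probability $1/2$ each and the others are uniform, so every pair is split evenly. Hence the degree-$2$ data of $\pi_t$ do not depend on $t$.

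For the degree-$3$ entry, $p_{abc}(a)=\frac{1-t}{3}+\frac t2$. Plugging $p_{ab}=p_{ac}=1/2$ into the formula, each summand over $b$ becomes
\[
1+(m-2)\bigl(4\,p_{abc}(a)-1\bigr)=1+(m-2)\Bigl(\tfrac13+\tfrac{2t}{3}\Bigr),
\]
which is strictly increasing in $t$ since $m\ge3$. For instance, $\Div_{\pi_0}(a)\neq\Div_{\pi_{1/2}}(a)$. So two admissible profiles share all degree-$2$ data but have different divisiveness, and no expression using only degree-$2$ data can compute $\Div$. Together with Step 2, $\Div$ has level exactly $3$.
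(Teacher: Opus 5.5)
Your proof is correct and follows the paper's route. The paper computes the same two conditional quantities, phrased as $\E_\pi[r_a\mid a\succ b]$ and $\E_\pi[r_a\mid b\succ a]$ via $\Bor_\pi(a;\calN^{x\succ y}_\pi)=m-\E_\pi[r_a\mid x\succ y]$, using the same inclusion--exclusion step, and it gets the level-$3$ lower bound from $\piic$ versus $\piAN$. Your mixture $\pi_t$ is a harmless variant of that witness, though the precaution is unnecessary: $\piAN$ extended to $m$ alternatives already has every entry $p_S(x)$, $x\in S$, strictly positive.
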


The proofs of \Cref{prop:var-splur,prop:div-splur} are deferred to Appendix~\ref{app:mcl-lb-proof}. Note that, since rank variance and divisiveness can be expressed with entries of degree $2$ and $3$, both are of level at most $3$. To show tightness, consider $\piic$ and $\piAN$ as in \Cref{ex:example_measures}: they coincide at degree~$2$ ($p_{xy}(x) = \nicefrac{1}{2}$ for any $x,y\in\calA$), yet $\Var_{\piic}(a) \neq \Var_{\piAN}(a)$ and $\Div_{\piic}(a) \neq \Div_{\piAN}(a)$, as shown in \Cref{tab:measures-on-profiles}. Hence, neither measure can be expressed using only information of degree $2$.

% \subsection{Strictness of the hierarchy}

The previous observation actually extends beyond level $3$: for any integer $k \geq 2$, there exist two profiles that coincide on all degrees from $2$ to $k$, but differ on at least one entry of degree $k+1$. In particular, although the two profiles are different, no measure of level $k$ can distinguish them. 

% The following proposition provides an explicit construction of such pair of profiles for any $d \geq 2$. 

\begin{restatable}{proposition}{strictness}
\label{prop:hierarchy-strict}
For every $k \geq 2$, there exists $m \in \mathbb{N}$, a set $\calA$ of $m$ alternatives, and two preference profiles $\pi, \pi'$ such that $p_{S}^\pi(a) = p_{S}^{\pi'}(a)$ for any $S \subseteq \calA$ with $|S| \leq k$ and $a \in \calA$, while $p_{T}^\pi(a) \neq p_{T}^{\pi'}(a)$ for some $T \subseteq \calA$ with $|T| = k + 1$ and some $a \in \calA$. In particular, no measure of level $k$ distinguishes $\pi$ from $\pi'$.
\end{restatable}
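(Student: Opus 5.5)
We take $m = k+1$, so $\calA$ has exactly $k+1$ alternatives and $T = \calA$ is the only set of size $k+1$. We fix one alternative $a$ and build both profiles within a family where the remaining alternatives are exchangeable. Concretely, a profile in this family is determined by a distribution $q$ on $\{1,\dots,m\}$ for $r_a$. Conditionally on $r_a$, the other $m-1$ alternatives fill the remaining positions in a uniformly random order. We take $\pi = \piic$, i.e. $q$ uniform. We take $\pi'$ with $q' = q + \varepsilon\delta$ for a signed vector $\delta$ with $\sum_r \delta(r) = 0$ and $\varepsilon>0$ small enough that $q'$ is a probability vector with full support, so the standing positivity assumption still holds. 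For $k=2$ this should recover \Cref{ex:IC_AN_require_degree_3_to_see_difs}, with $\delta \propto (1,-2,1)$.

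\textbf{Step 1: reduce every entry to the law of $r_a$.} Within this family, every entry of the plurality matrix depends only on the law of $r_a$, through simple moments. Take $S$ with $|S| = j$.
\begin{itemize}[leftmargin=*]
\item If $a \notin S$, the relative order of $S$ is uniform by exchangeability, so $p_S(b) = 1/j$ in both profiles.
\item If $a \in S$, then $a$ is first in $S$ exactly when all $j-1$ other members of $S$ lie among the $m - r_a$ alternatives below $a$. Since the set below $a$ is a uniform $(m-r_a)$-subset of $\calA\setminus\{a\}$, this gives
\begin{align*}
p_S(a) = \E_q\Bigl[\tbinom{m-r_a}{j-1}\Bigr] \Big/ \tbinom{m-1}{j-1}.
\end{align*}
\item If $a \in S$ and $b \in S\setminus\{a\}$, exchangeability of the non-$a$ alternatives gives $p_S(b) = (1-p_S(a))/(j-1)$.
\end{itemize}
Hence the degree-$j$ data of $\pi$ and $\pi'$ coincide if and only if $\sum_r \delta(r)\binom{m-r}{j-1} = 0$.

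\textbf{Step 2: choose $\delta$.} We need $\delta$ to annihilate the polynomials $\binom{m-r}{j-1}$ in $r$ for $j = 1,\dots,k$, which have degrees $0,\dots,k-1$ (the case $j=1$ is the total-mass condition). We also need $\delta$ not to annihilate $\binom{m-r}{k}$, which has degree $k$. The $k$-th finite difference does exactly this:
\begin{align*}
\delta(r) = (-1)^{r}\tbinom{k}{r-1}, \qquad r = 1,\dots,k+1 = m.
\end{align*}
This $\delta$ kills every polynomial of degree $<k$. On a degree-$k$ polynomial it returns $\pm k!$ times the leading coefficient. Here $\binom{m-r}{k}$ has leading coefficient $(-1)^k/k!$, so the pairing is nonzero. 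Directly, only $r=1$ contributes, giving $\delta(1)\binom{k}{k} = -1 \neq 0$.

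\textbf{Conclusion and main obstacle.} All entries of degree $\le k$ agree, while $p^{\pi'}_{\calA}(a) = p^{\pi}_{\calA}(a) - \varepsilon/\binom{m-1}{k} \ne p^{\pi}_{\calA}(a)$. Any level-$k$ measure is a function of the degree-$\le k$ data, so it takes the same value on $\pi$ and $\pi'$. The one step needing care is the reduction in Step 1, in particular checking that the entries $p_S(b)$ with $b \neq a$ carry no extra information beyond the law of $r_a$. The exchangeability construction is designed so that this holds.
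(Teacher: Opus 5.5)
Your proposal is correct and is essentially the paper's proof. It uses the same exchangeable family in which only the law of $r_a$ varies, the same formula $p_S(a)=\E\bigl[\binom{m-r_a}{|S|-1}\bigr]/\binom{m-1}{|S|-1}$, and the same observation that matching pairs $\delta$ with polynomials in $r$ of degree $<k$ while separation involves a degree-$k$ polynomial, applied to a small perturbation of the uniform law. The differences are minor: you take the tight choice $m=k+1$ and write the kernel vector explicitly as the $k$-th finite difference (the paper's own $k=3$ witness $\tfrac{1}{20}(-1,3,-3,1,0)$ is exactly this vector padded with a zero), where the paper takes $m=k+2$ and argues existence by a rank count; you also spell out the entries $p_S(b)$ with $b\neq a$, $a,b\in S$, which the paper leaves implicit.
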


The proof of \Cref{prop:hierarchy-strict} (in~Appendix \ref{app:mcl-lb-proof}) constructs the two stated profiles $\pi$ and $\pi'$. However, the question of whether a measure of level $k$, for any $k \in \mathbb{N}$, actually exists, rises. The answer is \textit{yes}.

\begin{definition}\label{def:central_moment}
Let $\calA$ be a set of $m$ alternatives. For any preference profile $\pi$ and $k \in \{1,...,m\}$, define the $k$\textbf{-central moment} $M_k^\pi$ as,
\begin{align*}
    M_k^\pi(a) := \E_\pi\bigl[(r_a - \E[r_a])^k\bigr], \text{ for any } a \in \calA.
\end{align*}
\end{definition}

The following result generalizes \Cref{prop:var-splur} to any degree. The proof is included in Appendix \ref{app:mcl-lb-proof}.

\begin{restatable}{theorem}{momentlevel}
\label{thm:moment-level}
The $k$-central moment $M_k$ is a measure of level $k+1$. Moreover, for any profile $\pi$ and alternative $a\in\calA$, it holds,
\begin{align*}\label{eq:moment-level}
  M_k(a) = (-1)^k \sum_{s=0}^{k}\ \ c_s(a) \cdot \!\!\!\!\! \sum_{{S \subseteq \calA \setminus \{a\}, |S| = s}}\!\!\!\splur{S \cup \{a\}}{a} = (-1)^k \sum_{s=0}^{k}\ \ c_s(a) \cdot\E_\pi\biggl[\binom{m-r_a}{s}\biggr],
\end{align*}
where $c_s(a) := \sum_{j=0}^{s} (-1)^{s-j} \binom{s}{j}(j - \Bor(a))^k$ and $p_a(a) = 1$.
\end{restatable}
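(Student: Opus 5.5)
The plan has two parts. First I derive the formula, which gives level at most $k+1$. Then I exhibit a pair of profiles showing that degree $\le k$ does not suffice.

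\textbf{Formula.} Set $X := m - r_a$, the number of alternatives ranked below $a$. Then $\E[r_a] = m - \Bor(a)$ by Definition~\ref{def:rank_of_a_and_borda}, so $r_a - \E[r_a] = \Bor(a) - X$ and hence $M_k(a) = (-1)^k \E[(X - \Bor(a))^k]$.

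Next I expand the polynomial $f(x) := (x - \Bor(a))^k$, which has degree $k$, in the binomial basis $\binom{x}{s}$, $s=0,\dots,k$. By Newton's forward difference formula, $f(x) = \sum_{s=0}^k (\Delta^s f)(0)\binom{x}{s}$, where $(\Delta^s f)(0) = \sum_{j=0}^s (-1)^{s-j}\binom{s}{j} f(j)$. This is exactly $c_s(a)$. The identity holds for all integers $x \ge 0$ because both sides are polynomials of degree $\le k$ agreeing on infinitely many points. Taking expectations gives the second expression.

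For the first expression I use the key combinatorial identity $\binom{X}{s} = \sum_{S\subseteq \calA\setminus\{a\},\,|S|=s} \ind\{a \succ b \ \forall b\in S\}$. The event $a \succ S$ is precisely the event that $a$ is ranked first in $S\cup\{a\}$. Taking expectations yields $\E\binom{X}{s} = \sum_{|S|=s} p_{S\cup\{a\}}(a)$, with the convention $p_a(a)=1$ for $s=0$. This involves entries of degree at most $k+1$. The coefficients $c_s(a)$ depend only on $\Bor(a)$, which is of degree $2$, so $M_k$ is of level at most $k+1$.

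\textbf{Tightness.} I need two profiles $\pi,\pi'$ that agree on all entries of degree $\le k$ but have $M_k^\pi(a)\ne M_k^{\pi'}(a)$. The plan is to work only with the distribution of $X$, making the other alternatives exchangeable: given $r_a$, the others are uniformly permuted.

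Under exchangeability, $p_{S\cup\{a\}}(a) = \E\binom{X}{s}/\binom{m-1}{s}$ for $|S|=s$. Entries not involving $a$ are the same as under impartial culture. An entry $p_T(b)$ with $a\in T$, $b\neq a$, is determined by $\E\binom{X}{s}$ for $s\le |T|-1$. This follows from $p_T(b) = (1-p_T(a))/(|T|-1)$ by symmetry among the non-$a$ alternatives.

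So all degree-$\le k$ data are determined by the factorial moments $\E\binom{X}{s}$ for $s\le k-1$. Equivalently, they are determined by the ordinary moments $\E[X^s]$ for $s\le k-1$. I then choose two distributions of $X$ on $\{0,\dots,m-1\}$ that share their first $k-1$ moments but have different $k$-th moments. Since $\Bor(a)=\E X$ agrees, the central $k$-th moments differ.

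Such distributions exist once $m-1\ge k$. Take the uniform distribution and perturb it along a nonzero vector $v$ on $\{0,\dots,m-1\}$. The vector $v$ must be orthogonal to $1,x,\dots,x^{k-1}$ and must have $\sum_x v_x x^k\ne 0$. For example, take $v_x = (-1)^x\binom{k}{x}$ on $\{0,\dots,k\}$, the $k$-th finite difference functional. Scale $v$ small enough to keep the masses positive, which also keeps every plurality entry positive as the paper assumes. This mirrors the $\piic$ versus $\piAN$ argument for $k=2$.

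\textbf{Main obstacle.} I expect the main obstacle to be the bookkeeping in the tightness step. I must check that under the exchangeable construction every degree-$\le k$ entry, including those not containing $a$ and those with $b\ne a$ as the top, really depends only on the first $k-1$ moments of $X$. The formula part is a routine application of Newton's interpolation.
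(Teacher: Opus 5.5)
Your proposal is correct and is essentially the paper's argument: expanding $(X-\Bor(a))^k$ in the basis $\binom{X}{s}$ by Newton's forward-difference formula is exactly what the paper's multilinear M\"obius inversion reduces to, since $g(e_U)=(|U|-\Bor(a))^k$ depends only on $|U|$. Likewise, your identity $\binom{X}{s}=\sum_{|S|=s}\ind\{a\succ b\ \forall b\in S\}$ is the paper's binomial-moment lemma, and for tightness you use the same exchangeable family $\pi_w$ as the proof of \Cref{prop:hierarchy-strict} (which the paper combines with $c_k(a)=k!\neq 0$ in its exactness corollary), just phrased through matching the first $k-1$ moments of $X$ with an explicit $k$-th-difference perturbation, which has the small advantage of working for every $m\geq k+1$.
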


The $k$-central moments, for different values of $k$, measure different properties of the distributions of the alternatives' ranks. Notably, given a preference profile $\pi$, for $k = 3$ and $k = 4$, we obtain, respectively, the \textbf{skewness} $\gamma_1^\pi$ and the \textbf{excess kurtosis} $\gamma_2^\pi$, formally given by,
\begin{align*}
    \gamma_1^\pi(a) := \frac{M_3^\pi(a)}{(M_2^\pi(a))^{\nicefrac{3}{2}}} \text{ and } \gamma_2^\pi(a) := \frac{M_4^\pi(a)}{(M_2^\pi(a))^{2}} - 3.
\end{align*}
The skewness measures the asymmetry of the distribution of $r_a$ while the excess kurtosis relates to tail extremity, reflecting the tendency of the distribution of $r_a$ to produce outliers. From \Cref{prop:hierarchy-strict} and \Cref{thm:moment-level}, $\gamma_1^\pi$ is a measure of level $4$ while $\gamma_2^\pi$ is a measure of level $5$. In particular, they give a two-dimensional summary of the rank distributions that no measure of level $2$ and $3$ can capture.

To better illustrate these measures, we present a numerical experiment on synthetic preference profiles from \cite{boehmer2024guide}. For an experiment on real data, see Appendix~\ref{sec:real_data_experiment}. In \Cref{fig:synthetic}, we consider an instance with $256$ alternatives and $7$ different preference profiles, namely, 
Mallows ($\varphi = 0.85$), Mallows mix-$2$ and mix-$4$ ($\varphi = 0.3$), Plackett-Luce with linear strengths, Walsh single-peaked, and $k$-Euclidean for $k \in \{2, 10\}$. \Cref{fig:synthetic} plots $(\gamma_1^\pi(x),\gamma_2^\pi(x))$ for each alternative $x\in\calA$ under each profile $\pi$ (represented by the colored dots). The solid parabola is the boundary of the \textit{Pearson inequality}, which states that for any $x \in \calA$, $\gamma_2^\pi(x)\geq( \gamma_1^\pi(x))^2-2$. The dashed parabola marks the transition between unimodal (above) and bimodal (below) rank distributions, the latter indicating higher disagreement. The figure has been zoomed on this transition region - in particular, some dots were left out of the image. Interestingly, ranks obtained from single-peaked, Plackett-Luce, and $k$-Euclidean distributions lie in the unimodal region, as they induce less disagreement among voters (see Appendix \ref{sec:synthetic_data} for more detailed discussion). In contrasts, Mallows mixtures shift toward the bimodal region, as alternatives ranked differently in the mixtures increase disagreement among voters.

Additionally, we plot $(\gamma_1^\pi(a),\gamma_2^\pi(a))$ for a fixed alternative $a \in \calA$ over five preference profiles $\piic,\pi_A,\pi_B,\pi_C$ and $\pi_D$. For each of them, all alternatives $b \in \calA\setminus\{a\}$ are uniformly ranked, as well as $a$ in $\piic$. The distribution of $r_a$ under the rest of the profiles is represented on the right-hand side of \Cref{fig:synthetic}. Up to degree $3$, it follows that all five distributions present the same plurality matrix. However, at degree $4$, the skewness is able to distinguish $\piic,\pi_C$, and $\pi_D$. Similarly, at degree $5$, the excess kurtosis distinguishes $\piic$, $\pi_A$, and $\pi_B$. In particular, for a practitioner using an elicitation model considering only information of degrees $2$ and $3$ will never be able to distinguish the unimodal distribution $\pi_A$ from the bimodal distributions $\pi_B,\pi_C,\pi_D$, from impartial culture $\piic$.

\begin{figure}[ht]
  \centering
  \includegraphics[width=\linewidth]{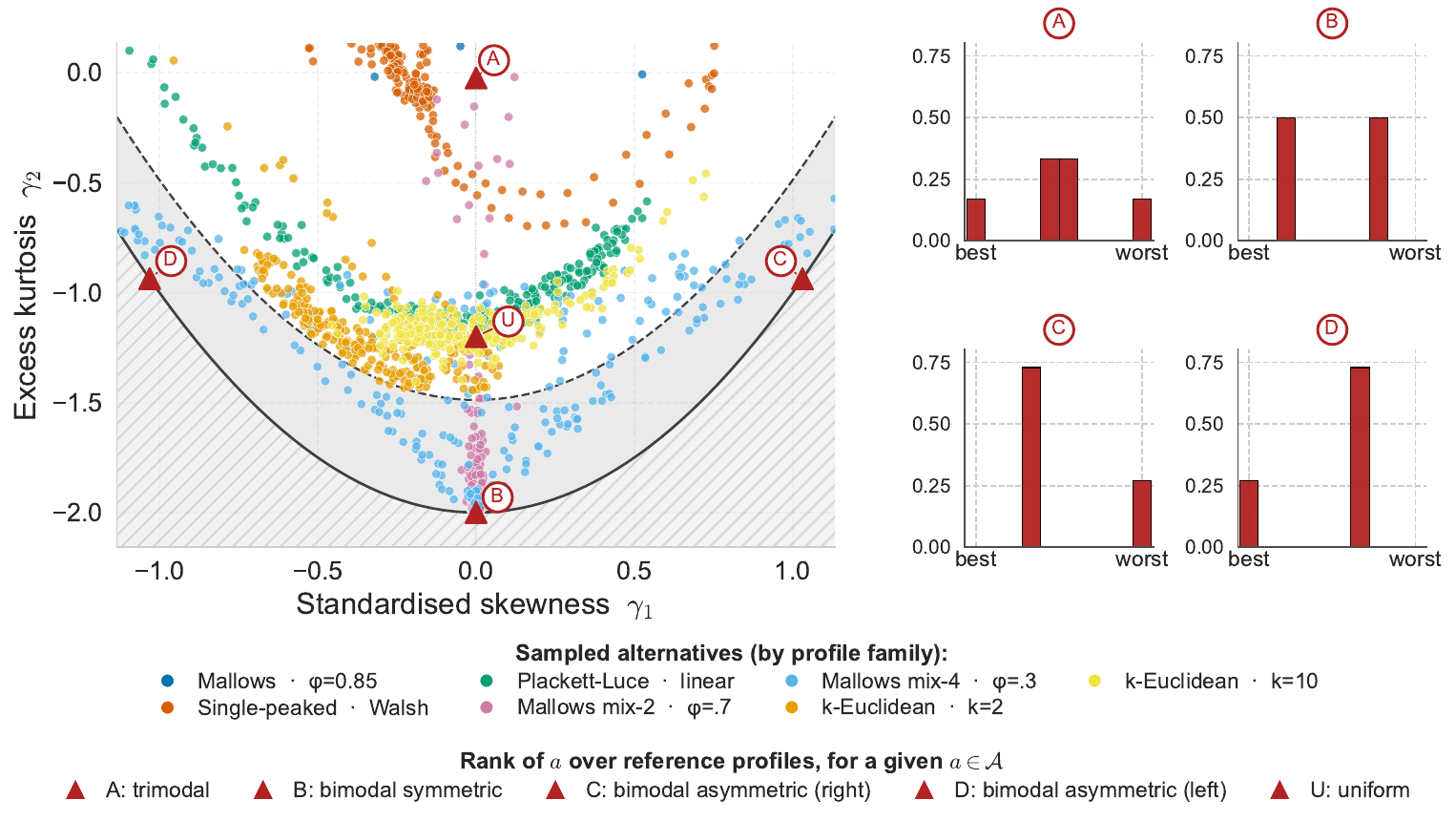}
    \caption{Skewness versus Excess kurtosis produced by seven different preferences profiles from \cite{boehmer2024guide} over $256$ alternatives. For each alternative $x \in \calA$, a colored dot represents $(\gamma_1^\pi(x),\gamma_2^\pi(x))$ for the respective profile $\pi$. The solid parabola represents the boundary of the Pearson inequality ($\gamma_2^\pi(x)\geq (\gamma_1^\pi(x))^2-2$ for any $x \in\calA$) while the dashed parabola marks the transition between unimodal (above) and bimodal (below) distributions. The triangles show $(\gamma_1^\pi(a),\gamma_2^\pi(a))$ for a fixed $a \in \calA$ whose rank is either uniform (triangle $U$) or displayed on the right-hand side. The plurality matrix of the five distributions $\piic,\pi_A,\pi_B,\pi_C$ and $\pi_D$ are identical up to degree $3$, so no measure of level $3$ can detect the difference between them. An interactive version of this plot is provided as supplementary material and presented in Appendix \ref{app:supp-viz}.}
  %\caption{Skewness versus Excess kurtosis of seven different preference profiles from \cite{boehmer2024guide} over $256$ alternatives. The solid parabola represents the boundary of the Pearson inequality ($\gamma_2(x)\geq \gamma_1(x)^2-2$ for any $x \in\calA$) while the dashed parabola marks the transition between unimodal (above) and bimodal (below) distributions. The triangles show $(\gamma_1(a),\gamma_2(a))$ for a fixed $a \in \calA$ whose rank is either uniform (triangle $U$) or displayed on the right-hand side. The plurality matrix of the five distributions are identical up to degree $3$, so no measure of level $3$ can detect the difference between them.}
  \label{fig:synthetic}
\end{figure}

\subsection{Disagreement Under Structural Assumptions}
%\subsubsection{Plackett--Luce}
% \MT{Moved some related references here, and commented the paragraph "let's decide later" in intro: I think it fits better here, as structural assumptions are not the core of the paper so the parahraph felt a bit odd in intro. Here it helps to motivate why we consider structural assumptions - it's done in other problems, and it often helps a lot, so let us see whqat happens in our setting (I tried to build this narrative, not 100 \% happy with what I've done, but as a first draft, the idea is there!)}

The strictness of the hierarchy shows that higher-degree information cannot, in general, be recovered from lower-degree observations. However, this perspective may be overly pessimistic in structured settings. 
% A common approach in preference elicitation is to assume that preferences satisfy additional structure, under which elicitation can become easier. For instance, under single-peaked (resp. Euclidean) preferences, the query complexity of recovering a ranking drops from $\Theta(m \log m)$ to $O(m)$ (resp. $O(\log m)$)~\citep{conitzer2009, Jamieson_Nowak_deuclidean}. Similar improvements hold in other structured domains, such as preferences single-peaked on a tree~\citep{DeyMishra_2016_trees}, or on preferences profiles drawn from statistical cultures such as the Mallows model~\citep{peters_procaccia_2021}. 
% Motivated by these results, 
We ask here whether structural assumptions can simplify the hierarchy introduced in this paper. We focus on two classical models representing two distinct types of structure: the \emph{Plackett-Luce}  model~\citep{luce1959individual,plackett1975analysis}, a probabilistic latent-utility model, where each alternative has a strength $v_a > 0$, and rankings are  generated by sequential sampling without replacement, with probabilities proportional to these strengths; and \emph{single-peaked preferences}~\citep{black1948rationale,  moulin1980strategy}, where alternatives are ordered on a fixed one-dimensional axis and each voter's preferences are monotone away from their personal peak. We show that both models collapse the hierarchy to level $2$, that is, all higher-degree data can be recovered from degree~$2$ data. In particular, practitioners with reasons to believe that any of such structure holds can reduce elicitation to pairwise comparisons alone. The proof is included in Appendix \ref{app:mcl-lb-proof}.

%First, we show that if a preference profile is generated by a Plackett--Luce model with strength parameters $(v_a)_{a \in \calA}$, then the hierarchy collapses entirely to level~$2$. 
% We first consider the Plackett--Luce model. 

\begin{restatable}{proposition}{pl}\label{prop:pl}
Let $\calA$ be a set of alternatives. Consider $\pi_{\text{PL}}$ a preference profile following the Plackett-Luce model. Then, for any $S \subseteq \calA$ and $a \in S$, it follows $ p_{S}^{\pi_{\text{PL}}}(a) = {v_a}/{\sum_{x \in S} v_x}.$
%\begin{align*}
%    p_{S}^{\pi^{\text{PL}}}(a) = \frac{v_a}{\sum_{x \in S} v_x}.
%\end{align*}

Consider $\pi_{\text{SP}}$ a preference profile having in its support only preferences single-peaked over an axis $(s_1,...,s_m)$. Then, for any $S = \{s_{i_1},..., s_{i_k}\}\subseteq\calA$ sorted along the axis and $j \in \{1,...,k\}$, it follows,
\begin{align*}
p_S^{\pi_{\text{SP}}}(s_{i_j}) = p_{s_{i_j} s_{i_{j+1}}}^{\pi_{\text{SP}}}(s_{i_j}) - p_{s_{i_{j-1}} s_{i_j}}^{\pi_{\text{SP}}}(s_{j-1}), \text{ where } p_{s_{i_0}s_{i_1}}^{\pi_{\text{SP}}}(s_1) := 0 \text{ and } p_{s_{i_k}s_{i_{k+1}}}^{\pi_{\text{SP}}}(s_{k}) := 1.
% \\
% \\
% \begin{cases}
%   p_{s_{i_1} s_{i_2}}^{\pi^{\text{SP}}}(s_{i_1}) & j = 1,\\[3pt]
%   p_{s_{i_j} s_{i_{j+1}}}^{\pi^{\text{SP}}}(s_{i_j}) - p_{s_{i_{j-1}} s_{i_j}}^{\pi^{\text{SP}}}(s_{j-1}) & 1 < j < k,\\[3pt]
%   1 - p_{s_{i_{k-1}} s_{i_k}}^{\pi^{\text{SP}}}(s_{k-1}) & j = k.
% \end{cases}
\end{align*}
In particular, for each case, for any $k \geq 3$, the information of degree $k$ can be obtained as a function of the information of degree $2$.
\end{restatable}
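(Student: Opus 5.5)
The statement has two independent parts, and I will prove each one by computing $p_S$ directly from the generative description of the model.

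\emph{Plackett--Luce.} I will write $V_X := \sum_{x\in X} v_x$. The probability of a full ranking $x_1\succ\dots\succ x_m$ is then $\prod_{i} v_{x_i}/V_{\{x_i,\dots,x_m\}}$. The cleanest route is the standard exponential (Gumbel) representation: draw independent $T_x\sim\mathrm{Exp}(v_x)$ and rank alternatives by increasing $T_x$. By the memoryless property and the competing-exponentials fact ($\Pr[T_x=\min_{y\in X}T_y]=v_x/V_X$), the resulting ranking has exactly the Plackett--Luce law. I will verify this by induction on the positions. Given the representation, the event ``$a$ is first among $S$'' is the event $T_a=\min_{x\in S}T_x$. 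Its probability is $v_a/V_S$ by the same competing-exponentials fact.

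As a fallback that avoids the coupling, I will prove the restriction property by induction on $|\calA\setminus S|$. Let $z\notin S$. Conditioning on whether $z$ is drawn before all of $S$, one checks that the law of the ranking restricted to $\calA\setminus\{z\}$ is again Plackett--Luce with the remaining strengths. The algebra for this step is $\frac{v_z}{V}\cdot\frac{v_a}{V-v_z}+\frac{v_a}{V}=\frac{v_a}{V-v_z}$ at the top position, iterated down the ranking. Both routes give the first claim, and then the degree-$k$ entries are functions of the ratios $v_a/v_b=p_{ab}(a)/p_{ab}(b)$, which are degree-$2$ data.

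\emph{Single-peaked.} I fix $S=\{s_{i_1},\dots,s_{i_k}\}$ sorted along the axis. The key observation is that any single-peaked ranking restricted to $S$ is single-peaked on the induced axis. So the top element of $S$ is $s_{i_j}$ exactly when, on this sub-axis, $s_{i_j}$ beats its right neighbour $s_{i_{j+1}}$ but does not lose to its left neighbour's side.

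To make this precise, I will show the event identity
\[\{s_{i_j}\text{ top in }S\}=\{s_{i_j}\succ s_{i_{j+1}}\}\setminus\{s_{i_{j-1}}\succ s_{i_j}\},\]
together with the nesting $\{s_{i_{j-1}}\succ s_{i_j}\}\subseteq\{s_{i_j}\succ s_{i_{j+1}}\}$. The nesting holds because if the voter prefers $s_{i_{j-1}}$ to $s_{i_j}$, then the peak lies left of $s_{i_j}$, so preferences decrease from $s_{i_j}$ rightwards.

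The equality then follows from unimodality. Let $s_{i_t}$ be the top element of $S$ (well defined). Then $s_{i_j}\succ s_{i_{j+1}}$ holds iff $t\le j$, and $s_{i_{j-1}}\succ s_{i_j}$ holds iff $t\le j-1$. Both equivalences come from monotonicity of the restricted preference on each side of $s_{i_t}$. Taking probabilities and using the nesting gives the difference formula, with the boundary conventions $0$ and $1$ covering $j=1$ and $j=k$.

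The main obstacle is making the ``iff'' statements rigorous when the voter's global peak is not in $S$. I will handle this by first proving the restriction lemma: the restriction of a single-peaked order to any subset of the axis is single-peaked with respect to the induced axis. The proof takes any three axis-ordered elements $x<y<z$ of $S$ and notes that $y$ cannot be ranked below both $x$ and $z$. This no-valley property for triples characterizes single-peakedness, and everything else then reduces to it.
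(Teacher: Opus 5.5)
Your proposal is correct. For Plackett--Luce it is essentially the paper's argument, with one difference: the paper simply cites Luce's choice axiom for $p_S(a)=v_a/\sum_{x\in S}v_x$, whereas you prove it (via the exponential race, or by marginalizing out one alternative at a time). You also read off $v_x/v_a=p_{xa}(x)/p_{xa}(a)$ directly, where the paper goes through a reference alternative.

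For the single-peaked part your route is different.

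\emph{The paper's route.} It localizes. From the monotonicity facts ($x<y<z$ on the axis and $x\succ y$ imply $x\succ z$, and symmetrically) it shows that $s_{i_j}$ tops $S$ iff it tops the window $\{s_{i_{j-1}},s_{i_j},s_{i_{j+1}}\}$. It then handles three-element sets by inclusion--exclusion, using that the axis-middle element is never ranked last.

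\emph{Your route.} You argue globally. Let $t$ be the axis index of the top of $S$; unimodality gives $\{s_{i_j}\succ s_{i_{j+1}}\}=\{t\le j\}$. So the adjacent pairwise probabilities are exactly the distribution function of $t$, and $p_S(s_{i_j})=\Pr[t\le j]-\Pr[t\le j-1]$.

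\emph{What each buys.} Your version:
\begin{itemize}
\item treats interior and boundary elements uniformly;
\item explains the conventions $0$ and $1$ as $\Pr[t\le 0]$ and $\Pr[t\le k]$;
\item exhibits the monotonicity of $j\mapsto\Pr[s_{i_j}\succ s_{i_{j+1}}]$ that guarantees nonnegative differences;
\item avoids the paper's three-element computation, whose displayed derivation of $p_{abc}(a)=p_{ab}$ is circular as written (it should invoke $\Pr[c\succ a\succ b]=0$).
\end{itemize}
The paper's localization, in exchange, makes explicit that only neighbouring triples along the axis matter.

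Your restriction lemma via the no-valley characterization is sound. It is not strictly needed, though: the two equivalences for $t$ also follow directly from the same monotonicity facts.
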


Note that, although both structural assumptions \textit{collapse} the hierarchy to level~$2$, single-peakedness requires knowing the axis in advance, which is rarely the case in practice \citep{conitzer2009}. In contrast, in the Plackett--Luce model, the latent strengths $(v_a)_{a \in \calA}$ need not to be accessed or estimated, thus, eliciting pairwise comparisons is sufficient to recover all higher-degree quantities.

\section{Elicitation Protocols for the Plurality Matrix}\label{sec:elicitation}

This section is devoted to studying protocols to approximate the plurality matrix. In particular, we adopt a PAC approach, that is, to find $\varepsilon$-approximations of each matrix entry with probability $1-\delta$. 

\subsection{Elicitation Primitives and Protocols}\label{sec:primitives}

Estimating the information of degree $2$ of the plurality matrix trivially reduces to querying pairwise comparisons. For $|S| \geq 3$, estimating $p_S^\pi(a)$ is no longer direct, and the choice of elicitation primitive becomes nontrivial. We consider two primitives, described as sequences of pairwise comparisons.\footnote{Using pairwise comparisons as elicitation primitives allows us to compare protocols on their cognitive load, since pairwise comparisons are a standard proxy for cognitive load in elicitation studies \cite{conitzer2009}.}
%Designing elicitation primitives based on pairwise comparisons allows us to evaluate our protocols in terms of the cognitive load they impose on voters, as pairwise comparisons are typically used as a proxy for cognitive load in the elicitation literature \citep{conitzer2009}.}

\begin{definition}\label{def:elicitation_primitives}
Let $S = \{a_1, \ldots, a_{|S|}\} \subseteq \calA$ be a set of alternatives. 
\begin{itemize}[leftmargin = 0.45cm]\setlength\itemsep{0mm}
\vspace{-0.25cm}
    \item[1.] We define the \textbf{$S$-chain} as the primitive that performs $|S|\!-\!1$ sequential pairwise comparisons: compare $a_1$ and $a_2$, then iteratively compare the running winner with $a_j$ for $j \in \{3, \ldots, |S|\}$.
    \item[2.] We define the \textbf{$S$-ranking} as the primitive that elicits a full ranking of $S$ via pairwise comparisons.
\end{itemize}
\end{definition}
% \begin{definition}\label{def:s-chain}
% Given an ordering $(a_1, \ldots, a_{|S|})$ of $S$, the \textbf{$S$-chain} performs $|S|\!-\!1$ sequential pairwise comparisons: compare $a_1$ and $a_2$, then iteratively compare the running winner with $a_j$ for $j \in \{3, \ldots, |S|\}$.
% \end{definition}
% \begin{definition}\label{def:s-ranking}
% Given $S \subseteq \calA$, the \textbf{$S$-ranking} elicits a complete ranking of $S$ via pairwise comparisons.
% \end{definition}
Based on these elicitation primitives, we design two elicitation \textit{protocols}. For this, given a profile $\pi$, we denote $\calV^\pi_n$ a set of $n$ voters drawn independently from $\pi$. 

\begin{definition}\label{def:elicitation_protocols}
Given $N \leq n$ and $k \leq m$, a \textbf{$k$-chain (resp. $k$-ranking) protocol of length $N$} samples uniformly with replacement $N$ subsets $S_1,...,S_N$ each of size $k$, with $S_i\subseteq \calA$, and uniformly without replacement $N$ voters $\{v_1,...,v_N\}$ from $\calV^\pi_n$, and for each $i \in \{1,...,N\}$, asks voter $v_i$ to answer the $S_i$-chain (respectively, $S_i$-ranking). 
% \MT{To rephrase - $S$ are with replacements (so even writing $(S_1, ..., S_N)$ feels misleading}
% A protocol of length $N$ proceeds in $N$ steps: at each step, a size-$k$ subset $S$ is drawn uniformly from the $\!\binom{m}{k}\!$ candidates, and a fresh voter from $\calV^\pi$ answers via $S$-chain or $S$-ranking. We call these procedures \textbf{$k$-chain} and \textbf{$k$-ranking}; each observation is an unbiased sample from $\pi$.    
\end{definition}

\begin{remark}\label{remark:cognitive_load}
Protocols in \Cref{def:elicitation_protocols} differ in \textit{cognitive load} since an $S$-chain requires $|S|\!-\!1$ pairwise comparisons while an $S$-ranking requires $\Theta(|S| \log |S|)$ (e.g. when using merge sort).
\end{remark}
% We build protocols by repeating these primitives. Let $\calV = \{\sigma_1, \ldots, \sigma_n\}$ be $n$ voters drawn i.i.d.\ from $\pi$. A protocol of length $N \leq n$ proceeds in $N$ steps; at each step, a size-$k$ subset $S$ is drawn uniformly from the $\!\binom{m}{k}\!$ candidates, and a fresh voter answers via $S$-chain or $S$-ranking. We call these procedures \textbf{$k$-chain} and \textbf{$k$-ranking}; each observation is an unbiased sample from $\pi$.

With this in mind, we define two cost axes to assess our protocols.

\begin{definition}\label{def:budget_and_max_cognitive_load}
Given a protocol of length $N$, let $c_j$ be the number of pairwise comparisons performed at step $j$. We define the \textbf{budget} $B$ and the \textbf{maximum cognitive load} $\MCL$ of the protocol as:
% as the total number of pairwise comparisons performed on the $N$ steps, and the \textbf{maximum cognitive load} of the protocol as the cost of the heaviest single query, formally given 
\begin{align*}
    B := \sum\nolimits_{j=1}^{N}\! c_j \text{ and } \MCL := \max_{j\in\{1,...,N\}} c_j.
\end{align*}
% $B = \sum_{j=1}^{N}\! c_j$, that is, the total pairwise comparisons across the $N$ steps; the \textbf{maximum cognitive load} $\MCL = \max_j\! c_j$ is the cost of the heaviest single query.    
\end{definition}

The budget of a protocol is the total number of pairwise comparisons it requires, and its maximum cognitive load is the most demanding query it poses to a voter, ranging from a single pairwise comparison to a full ranking.
%The simplest query consists of a single pairwise comparison; at the other, the most demanding query requires a voter to report a full ranking over all alternatives. 
Given a population of voters, estimating the plurality matrix involves a fundamental trade-off between budget and maximum cognitive load. Intuitively, when voters can only be asked simple queries, i.e., involving few pairwise comparisons, a larger number of queries is required to obtain accurate estimates. Conversely, allowing more complex queries reduces their number at the cost of an increased cognitive burden per query.
Equivalently, this can be viewed as a tradeoff between the number of queried voters and the maximum cognitive load under a fixed budget.
% For $k$-chains we obtain that $\MCL = k\!-\!1$, while for $k$-rankings, $\MCL = \Theta(k \log k)$.

\subsection{Estimating the Plurality Matrix}

% We adopt a PAC framework: estimate every degree-$\ell$ entry $\splur{T}{a}$ of the plurality matrix to accuracy $\varepsilon$ with confidence $1 - \delta$. Two quantities drive the analysis.

Denote $Q_k$ the total number of non-zero plurality matrix entries of degree $k$ to be estimated. It follows that $Q_k = k \binom{m}{k}$. In order to estimate these values, we will look at the aggregated number of samples we require over the entries. 
%Denoting $T_k$ the total number of samples of plurality matrix entries of degree $k$ and 
Applying Hoeffding's inequality plus a union bound (see Appendix \ref{app:hoeffding-union-bound}), we obtain that $T_k = \ln(2 Q_k / \delta)/(2 \varepsilon^2)$ is the minimum number of samples required to obtain an $\varepsilon$-approximation with probability $1-\delta$ of all plurality matrix entries of degree $k$. 

With this in mind, we can compare our two elicitation protocols with respect to the number of samples they produce. An $S$-chain yields $|S|\!-\!1$ samples, one per degree, on the nested prefixes $\splurVar{a_1a_2}, \splurVar{a_1a_2a_3}, \ldots, \splurVar{a_1\ldots a_{|S|}}$. In particular, no extra information can be deduced from these values, as every time we ask a new pairwise comparison during an $S$-chain, the result is conditioned to the previous comparisons (see Appendix \ref{app:chaintransitivity} for a detailed discussion). Conversely, an $S$-ranking reveals the voter's preference restricted to $S$ which, by transitivity, determines the winner of every set $T \subseteq S$, yielding $\binom{|S|}{k}$ independent samples at each degree $k \leq |S|$, totaling $2^{|S|} - |S| - 1$ values per voter. Clearly, this connects with \Cref{remark:cognitive_load} regarding the cognitive load of each primitive. 

The following result formally states the previous discussion. Its proof and a more detailed analysis is included in Appendix \ref{thm:budget-tradeoff2}.

\begin{restatable}[]{theorem}{budgetTradeoffThm}
\label{thm:budget-tradeoff}
Let $\calA$ be a set of $m$ alternatives and $k\geq 2$ be fixed.
% , and take $\lambda \in [k-1,m-1]$. 
Suppose we want to estimate all $Q_k$ entries of degree $k$ of the plurality matrix up to accuracy $\varepsilon$ with probability at least $1\!-\!\delta$. Then, 
\begin{itemize}[leftmargin = 0.45cm]\setlength\itemsep{0mm}
\vspace{-0.25cm}
    \item[1.] A $k$-chain requires $N_{\text{chain}} := \binom{m}{k}\ln(2 Q_k / \delta)/(2 \varepsilon^2)$ voters and entails a maximum cognitive load of $\lambda\!=\!k-1$ per voter.
    %and a budget equal to $B_{\text{chain}} = (k-1)\cdot N_{\text{chain}}$.
    \item[2.] A $k$-ranking requires 
    $$N_{\text{rank}} := \binom{m}{k}\frac{\ln(2 Q_k / \delta)}{2 \varepsilon^2}\cdot \frac{1}{k\log{k}}$$ 
    voters and entails a maximum cognitive load $\lambda = k\log(k)$ per voter.
    %, and a budget equal to $B_{\text{rank}} = k\log(k)\cdot N_{\text{rank }}$.
\end{itemize}
% Fix a target degree $\ell \geq 2$ and an $\MCL$ budget $\lambda$ with $\ell - 1 \leq \lambda \leq m - 1$. To estimate every degree-$\ell$ entry of the plurality matrix to accuracy $\varepsilon$ with probability at least $1 - \delta$:
% \begin{enumerate}
% \item[\emph{(i)}] \emph{The $\ell$-chain} (feasible for any $\lambda \geq \ell - 1$) requires a population $N_{\mathrm{ch}} := \binom{m}{\ell}\,T_\ell$, achieves $\MCL = \ell - 1$, and incurs total budget $B_{\mathrm{ch}} = (\ell - 1)\,N_{\mathrm{ch}}$.
% \item[\emph{(ii)}] \emph{The $k(\lambda)$-ranking}, where $k(\lambda) := \max\{k \in [\ell, m] : \lceil \log_2(k!) \rceil \leq \lambda\}$ (feasible whenever $\lambda \geq \lceil \log_2(\ell!) \rceil$), requires a population $N_{\mathrm{rk}} := \binom{m}{\ell}\,T_\ell \,\big/\, \binom{k(\lambda)}{\ell}$, achieves $\MCL = \lceil \log_2(k(\lambda)!) \rceil \leq \lambda$, and incurs total budget $B_{\mathrm{rk}} = \lceil \log_2(k(\lambda)!) \rceil \cdot N_{\mathrm{rk}}$.
% \end{enumerate}
\end{restatable}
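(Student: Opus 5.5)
The plan has three parts: reduce both items to a per-entry sample count, prove item 1 by a direct coverage argument, and get the $1/(k\log k)$ factor in item 2 by counting how many degree-$k$ sets a single ranking covers.

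\textbf{Step 1 (reduction to samples per set).} By the Hoeffding-plus-union-bound computation of Appendix~\ref{app:hoeffding-union-bound}, it suffices that every set $T$ with $|T|=k$ receives at least $T_k = \ln(2Q_k/\delta)/(2\varepsilon^2)$ samples. Here a sample of $T$ means an observation of the top element of $T$ for a fresh voter drawn from $\pi$. Such a sample is a vector of indicators whose mean is $(p_T^\pi(a))_{a\in T}$, so one observation of $T$ serves all $k$ entries $(T,a)$ at once.

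For a fixed entry $(T,a)$, the samples it receives come from distinct voters. Voters are drawn without replacement from $\calV^\pi_n$, which is i.i.d.\ from $\pi$, and they are independent of the random sets. Conditionally on the sequence of sets, the samples of $(T,a)$ are therefore i.i.d.\ Bernoulli$(p_T^\pi(a))$. Hoeffding then applies per entry, and a union bound over the $Q_k$ entries closes the argument. Correlations between different entries answered by the same voter are harmless, because the union bound needs no independence across entries.

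\textbf{Step 2 (item 1).} An $S_i$-chain with $|S_i|=k$ reveals the winner of $S_i$, which is its last prefix. So each voter produces exactly one degree-$k$ sample, of the set $S_i$. The cognitive load is $\lambda=k-1$ comparisons, since each step of the chain compares two alternatives and there are $k-1$ steps.

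Since the sets $S_i$ are uniform over the $\binom{m}{k}$ sets of size $k$, each set $T$ receives in expectation $N/\binom{m}{k}$ samples. Taking $N_{\text{chain}}=\binom{m}{k}T_k$ gives exactly $T_k$ samples per set in expectation. I would make this deterministic by replacing "with replacement" with a balanced schedule in which each set is used exactly $T_k$ times. This leaves the voter count unchanged and conditions only on the sets, which Step 1 allows.

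\textbf{Step 3 (item 2, the main obstacle).} A ranking of a set of size exactly $k$ covers only one degree-$k$ set, so a literal count gives the same $N$ as the chain. To obtain the factor $1/(k\log k)$ while keeping the cognitive load of order $k\log k$, I would run the ranking primitive on a slightly larger set $S'$ of size $K=k+2$ (any $K=k+O(1)$ with $\binom{K}{k}\ge k\log k$ works). By transitivity, the ranking of $S'$ gives the winner of each of the $\binom{K}{k}=\binom{k+2}{2}\ge k\log k$ subsets of size $k$, as discussed before the statement. Its cost is $K\log K=\Theta(k\log k)$ comparisons via merge sort, matching $\lambda=k\log k$ up to the constant and the base of the logarithm.

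Each $T$ with $|T|=k$ lies in $\binom{m-k}{K-k}$ of the $\binom{m}{K}$ possible sets $S'$. So a voter covers $T$ with probability
\[
\binom{m-k}{K-k}\Big/\binom{m}{K}=\binom{K}{k}\Big/\binom{m}{k},
\]
and the expected number of samples of $T$ is $N\binom{K}{k}/\binom{m}{k}$. Setting this equal to $T_k$ gives
\[
N=\binom{m}{k}T_k\Big/\binom{K}{k}\le \binom{m}{k}\frac{\ln(2Q_k/\delta)}{2\varepsilon^2}\cdot\frac{1}{k\log k}=N_{\text{rank}}.
\]
As in Step 2, I would use a balanced design, a covering design in which every size-$k$ set lies in the same number of chosen $S'$ sets, so that every $T$ gets exactly (or at least) $T_k$ samples. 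Step 1 then gives the accuracy guarantee.

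The delicate points are, first, reading $\lambda=k\log k$ as $\Theta(k\log k)$, since constants and log bases are absorbed, and second, realising exact coverage. If an exact balanced design does not exist for given $(m,K,k)$, I would instead keep random sampling and inflate $N$ by a constant factor. A Chernoff bound on the coverage counts, union-bounded over the $\binom{m}{k}$ sets, then ensures every count reaches $T_k$ with probability $1-\delta/2$, and the Hoeffding step is run at $\delta/2$.
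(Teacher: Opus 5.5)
Your proposal is correct and follows essentially the same route as the paper's appendix proof (\Cref{thm:budget-tradeoff2}). That proof uses Hoeffding plus a union bound per entry $(T,a)$, and a balanced schedule giving each $k$-set exactly $T_k$ chain steps. For rankings it elicits rankings on sets larger than the target degree, so that by transitivity each voter covers $\binom{K}{k}$ degree-$k$ subsets, and it meets the incidence requirement $N\binom{K}{k}\ge\binom{m}{k}T_k$ by random sampling plus a Chernoff bound. The only difference is the choice of ranking size. The paper takes the largest $K$ with $\lceil\log_2(K!)\rceil\le\lambda$ and obtains $N=\binom{m}{k}T_k/\binom{K}{k}$. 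You instead fix $K=k+2$ to recover the stated $1/(k\log k)$ factor, at a load of order $k\log k$ rather than exactly $k\log k$, and this tacitly requires $m\ge k+2$.
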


\Cref{thm:budget-tradeoff} characterizes the trade-off induced by our protocols. In particular, for fixed values of $\varepsilon$ and $\delta$, the range of entries of the plurality matrix that can be accurately estimated depends on the number of available voters. In applications such as online platforms, where the number of participants is typically large, designers may favor protocols with low cognitive load. Conversely, in settings with a limited number of voters who are willing to answer more demanding queries, eliciting rankings appears to be more appropriate.

Observe that, when estimating degree $k$ entries of the plurality matrix using rankings, the platform designer is not restricted to $k$-rankings: any $k'$-ranking with $k'\!>\!k$ can be used. Indeed, the transitivity of rankings allows one to extract information about entries of degree $k$ more efficiently, potentially reducing the number of required queries. However, increasing the size of the ranking also increases the maximum cognitive load on each voter. In particular, given a maximum cognitive load constraint $\lambda^*$, the largest ranking size $k^*$ that can be used is $k^* := \max\{ k \in \{1,...,m\} \mid \lceil \log_2(k!) \rceil \leq \lambda^*\}$.
%\begin{align*}
%    k^* := \max\{ k \in \{1,...,m\} \mid \lceil \log_2(k!) \rceil \leq \lambda^*\}.
%\end{align*}

Clearly, as already observed in \Cref{prop:hierarchy-strict}, the degree of the estimated entries of the plurality matrix will never be higher than the value $k$ of the chain or ranking used. This connects to the maximum cognitive load of the protocol, as stated in the following result proved in Appendix \ref{app:mcl-lb-proof}.

\begin{restatable}{proposition}{mcl}\label{prop:mcl-lb}
Any elicitation protocol estimating a plurality matrix entry of degree $k$ has a maximum cognitive load $\MCL$ of value at least $k - 1$. Thus, $k$-chains have the minimal cognitive load among all elicitation protocols for data of degree $k$. 
\end{restatable}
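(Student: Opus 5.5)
The plan is to reduce the statement to an information argument about what a single voter's answer can reveal. An elicitation protocol in the sense of Definition~\ref{def:elicitation_protocols} is built from per-voter queries. Each query is a sequence of pairwise comparisons, and the voter answers according to her ranking $\succ$. To estimate $p_T^\pi(a)$ with $|T| = k$, the protocol needs, from at least one voter, an observation that is a deterministic function of the top element of $\succ$ restricted to $T$ (or at least the indicator that $a$ is that top element).

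The first step is to formalize what $c$ pairwise comparisons on one voter can reveal. The outcomes form a directed graph $G$ on the queried alternatives, with at most $c$ edges. The set of rankings consistent with these outcomes is the set of linear extensions of the transitive closure of $G$. The key claim is that if $c < k-1$, the answers cannot determine the winner of $T$. The restriction of $G$ to the vertices of $T$, together with paths through outside vertices, has fewer than $k-1$ edges. Hence the underlying graph spanned on $T$ is disconnected (a graph on $k$ vertices with at most $k-2$ edges has at least two components), so at least two elements of $T$ are maximal in the induced order on $T$. This component count should be done on the transitive-closure relation restricted to $T$, using the fact that connectivity in $G$ is needed for any comparability. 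There then exist consistent rankings whose winner on $T$ differs.

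The second step turns this non-identifiability into a failure of estimation. Pick two maximal elements $x,y$ of $T$ under the revealed partial order. I would construct two profiles $\pi,\pi'$ that agree on the distribution of every answer pattern the protocol can observe with loads $\le k-2$, but have different $p_T(a)$. One construction swaps $x$ and $y$ in each ranking where doing so preserves the revealed comparisons. More cleanly, take $\pi'$ to be the image of $\pi$ under a transposition of two alternatives of $T$ and choose $\pi$ so that all queried sub-statistics of size at most $k-1$ are invariant. This is exactly the kind of pair provided by Proposition~\ref{prop:hierarchy-strict} with degree $k-1$: the profiles agree on all entries of degree $\le k-1$ but differ at degree $k$. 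Any query with at most $k-2$ comparisons only reveals a partial order with $k-1$ or fewer elements chained together. I would argue that its outcome distribution is a function of plurality entries of degree $\le k-1$ (indeed of marginals of $\pi$ on small sets). This is the delicate part, and I would first try to handle it through a suitably stronger version of the construction (agreement of all marginals on sets of size $\le k-1$).

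The main obstacle is precisely this link. Plurality entries of degree $\le k-1$ do not in general determine marginal distributions of $\pi$ on $(k-1)$-sets, and a $(k-2)$-comparison query may touch more than $k-1$ alternatives. If the direct reduction fails, I would fall back on a per-sample reading of the statement: an estimator of $p_T(a)$ from i.i.d.\ voter answers needs each answer to identify the event ``$a$ is first in $T$'', and the connectivity argument of the first step already shows this requires $\ge k-1$ comparisons. Finally, since a $k$-chain uses exactly $k-1$ comparisons (Remark~\ref{remark:cognitive_load} and Theorem~\ref{thm:budget-tradeoff}), it attains the bound, giving the minimality claim.
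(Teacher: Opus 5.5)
Your fallback is essentially the paper's proof. The paper argues, per voter, that deciding whether $a$ tops $S$ requires every element of $S$ to be linked to $a$ by chains of comparisons, and that any structure connecting $k$ elements has at least $k-1$ links. Your count is slightly more careful, since it allows paths through alternatives outside $T$; a connected graph containing all $k$ elements of $T$ still needs $k-1$ edges.

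One point in your Step~1 needs fixing. Showing that the \emph{winner} of $T$ is undetermined does not show that the indicator of ``$a$ first in $T$'' is undetermined: a single answer $b \succ a$ already settles it to $0$. Run the argument on a voter whose ranking puts $a$ first in $T$. If $a$ is not connected to some $b \in T$ in the comparison graph, the two are incomparable in the transitive closure. Then some consistent ranking has $b \succ a$ and produces the same transcript, so the indicator is undetermined. This gives the worst-case bound $\MCL \geq k-1$.

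Your primary route targets a genuinely stronger claim that the paper does not attempt: no estimator at all, even one pooling partial information across voters, succeeds with $\MCL \leq k-2$. The obstacle you flag is real for general profiles, but it disappears inside the family $\pi_w$ of Proposition~\ref{prop:hierarchy-strict}.

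Write a ranking from $\pi_w$ as a uniform order $\sigma$ of the non-focal alternatives, with $a$ inserted at rank $r_a \sim w$ independently of $\sigma$. A transcript of at most $q$ (possibly adaptive) comparisons is a conjunction $E$ of outcomes. Let $U$ be the set of non-focal vertices in $a$'s connected component of the comparison graph, so $|U| \leq q$. Split $E = E_1 \cap E_2$, where $E_2$ involves only the other components.

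Given $r_a$ and the positions of $U$ in $\sigma$, the remaining alternatives are uniformly arranged. Hence $\Pr[E \mid r_a = j] = \Pr[E_2]\,\Pr[E_1 \mid r_a = j]$, with $\Pr[E_2]$ free of $w$. Moreover, $\Pr[E_1 \mid r_a = j] = \sum_{A \subseteq U} \Pr[\text{exactly } A \text{ above } a \mid r_a = j]\, g(A)$. Here $g(A)$ does not depend on $j$, and each conditional probability is a polynomial of degree $|U|$ in $j$ (products of falling factorials in $j-1$ and $m-j$ over a constant). So every transcript probability depends on $w$ only through $\E_w\bigl[\binom{m-r_a}{s}\bigr]$ for $s \leq q$, that is, through plurality entries of degree at most $q+1$.

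Now take $q = k-2$ and the pair of Proposition~\ref{prop:hierarchy-strict} with $d = k-1$; its linear-algebra construction works for any $m \geq k$. Every voter's transcript then has the same law under $\pi_w$ and $\pi_{w'}$, while $p_T(a)$ differs. So no protocol with $\MCL \leq k-2$ can be $\varepsilon$-accurate once $\varepsilon$ is below half the gap.

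This completion gives a rigorous statistical lower bound. The paper's connectivity argument is shorter but only excludes protocols that read the event off each individual voter.
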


We conclude this section by illustrating the tradeoff between maximum cognitive load $\lambda$ and the required number of voters $N$ for our two protocols at different degrees, sampling from an impartial culture profile with $10$ alternatives. For each pair $(\lambda,N)$, \Cref{fig:budget-mcl} shows which of the two protocols is optimal in terms of budget. We observe that chains are optimal whenever voters present low maximal cognitive load values or when a large population of voters is available. For the rest of the regimes, thanks to inferring prefrences via transitivity, rankings become the optimal choice. 

\begin{figure}[h]
\centering
\includegraphics[width=0.8\linewidth]{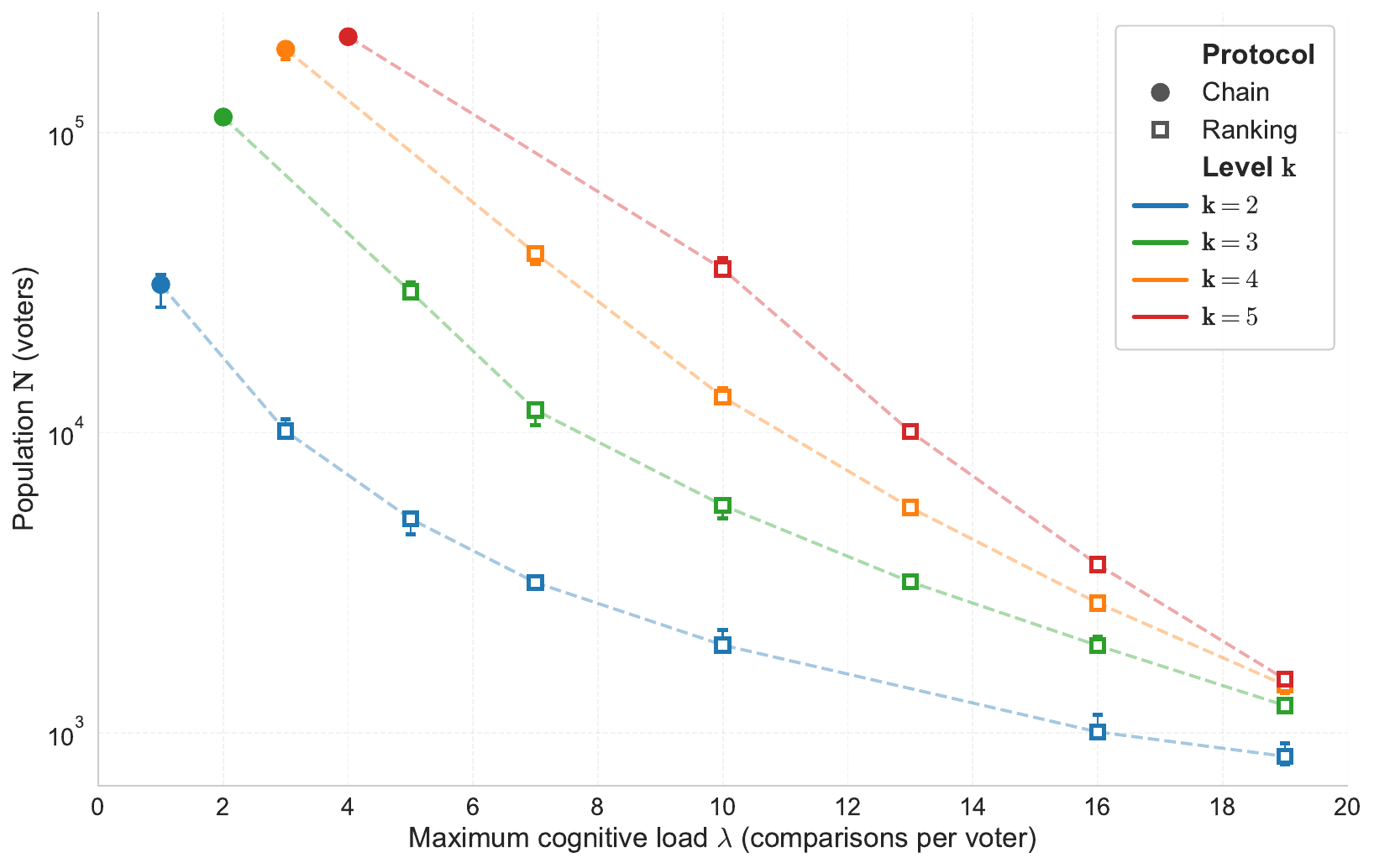}
\caption{Population $N$ (number of voters) required as a function of maximum cognitive load $\MCL$, across degrees $k \in \{2, 3, 4, 5\}$, on an impartial-culture profile with $m = 10$ alternatives and accuracy $\varepsilon = 0.05$. For each pair $(\lambda,N)$ the optimal protocol in terms of budget is highlighted. The intervals over the points represent the $5$th to the $95$th percentile.}
\label{fig:budget-mcl}
\end{figure}

% \Cref{fig:budget-mcl} visualizes both branches at $m = 10$. The leftmost (filled) point of each color is the $\ell$-chain, sitting at the minimum $\MCL = \ell - 1$ and the largest population; the open squares to its right are the $k$-rankings, $k$ growing from the smallest feasible value to $m$. Each branch is roughly linear on the log-population axis: each unit of $\MCL$ headroom cuts the required population by a roughly constant factor, with the rightmost ranking ($k = m$) requiring up to two orders of magnitude fewer voters than the chain endpoint at the same $\ell$. Raising the target degree $\ell$ shifts the entire curve upward, since $T_\ell$ and $\binom{m}{\ell}$ both grow with $\ell$. The two regimes sit at opposite ends of every front: low-$\MCL$ chains suit digital-democracy consultations where attention is scarcer than participant count, while high-$\MCL$ rankings suit expert-panel settings where deep effort per voter is acceptable. The formal derivation of the achievable region and its Pareto-optimal subset is given in \Cref{app:pareto_region}.% 
%%%%%%%%%%%%%%%%%%%%%%%%%%%%%%%%%%%%%%%%%%%%%%%%%%%%%%%%%%%%%%%%%%%%%%

%\MT{A very first draft of conclusion (far from being perfect, but we need to start from somewhere)}

\section{Conclusions}

To identify the disagreements of a population over a set of alternatives we introduce the plurality matrix, a stratified framework for organizing aggregated preference data according to subset sizes.
Thus, we can define the level of a disagreement measure as the smallest subset size required to express it. 
Our definitions allow us to chart an hierarchy of disagreement measures, with notable examples such as rank variance and divisiveness both lying at level $3$, showing, in particular, that neither of these measures can be recovered from pairwise comparisons alone. 
We further establish that the induced hierarchy is strict, by constructing preference profiles that coincide up to level $k$ but differ at level $k\!+\!1$. Moreover, for every level $k$, we exhibit a meaningful measure that inherently requires information from level $k$. 
Interestingly, structural assumptions on voters' preferences such as single-peakedness or the Plackett–Luce model collapse the hierarchy to level $2$, indicating that in rare albeit well-behaved settings disagreements can be fully captured through pairwise comparisons. 
Finally, we propose two elicitation protocols and analyze the tradeoff they induce between the cognitive load demanded to each voter and the population size.
%subset-size stratification of pairwise comparisons, and defined a disagreement measure's level as the smallest subset size expressing it. Rank variance and divisiveness sit at level $3$, separated by $\piic$ and $\piAN$. 
% The hierarchy is strict: for each $k$, there exist profiles that coincide up to level $k$ yet differ at $k+1$. 
%profiles exist that coincide up to level $k$ and differ at $k+1$. 
% More generally, the $k$-th central moment of an alternative's rank has a closed-form expression at level~$(k+1)$, so that higher moments such as skewness (level~$4$) and kurtosis (level~$5$) capture features invisible at lower levels. 
% At the same time, structured domains such as Plackett-Luce and single-peaked preferences collapse the hierarchy to level~$2$. 
% Finally, we propose two elicitation protocols and study their tradeoff between per-voter cognitive load and population size.
%Since divisiveness requires full rankings, we estimate the plurality matrix via $k$-chains and $k$-rankings, trading per-voter cognitive load against population, and characterize the achievable $(\MCL, B)$ region and its Pareto-optimal subset.}

\textbf{Limitations.} Our framework is not able to capture measures such as the $k$-Kemeny family, its derived polarization and diversity indices \citep{faliszewski2023}, or the rank-distance-based diversity measures \citep{hashemi2014measuring, karpov2017}, due to the nature of the event we measure within the entries of the plurality matrix. Additionally, our analysis relies on standard assumptions: voters are i.i.d., truthful, and noiseless, and cognitive load is uniformly measured in terms of the number of pairwise comparisons. Moreover, our empirical evaluation is based solely on complete preference reports. Extending our framework to settings with top-$k$ lists, approval ballots, partial orders, or Bayesian variants with priors over response noise and cognitive load, suggest interesting directions for future work.

\section*{Acknowledgments}

\noindent This work was supported by the ANR LabEx CIMI (grant ANR-11-LABX-0040) within the French State Programme “Investissements d’Avenir.”

Funded by the European Union. Views and opinions expressed are however those of the author(s) only and do not necessarily reflect those of the European Union or the European Research Council Executive Agency. Neither the European Union nor the granting authority can be held responsible for them. This work is supported by ERC grant 101166894 “Advancing Digital Democratic Innovation” (ADDI). This project was supported by the European Union LearnData, GA no. 101086712 a.k.a. 101086712-LearnDataHORIZON-WIDERA–2022-TALENTS–01 (https://cordis.europa.eu/project/id/101086712), IAST funding from the French National Research Agency (ANR) under grant ANR–17-EURE–0010 (Investissements d'Avenir program), and the European Lighthouse of AI for Sustainability [grant number 101120237-HORIZON-CL4–2022-HUMAN–02].
\vspace{-0.9cm}
\begin{figure}[h]
\centering
\includegraphics[scale = 0.16]{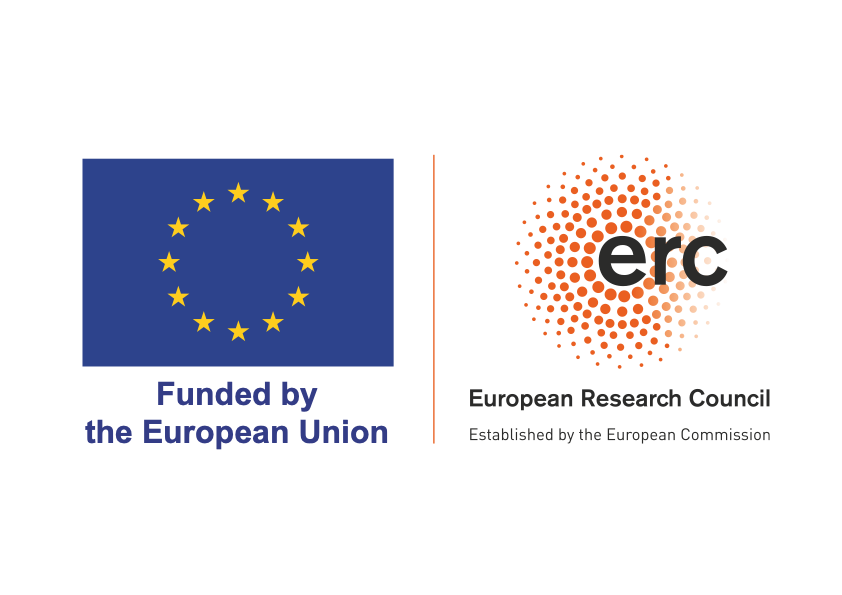}
\end{figure}
\vspace{-0.9cm}

\bibliographystyle{abbrv}
\bibliography{bibliography}

% \clearpage
\appendix
\addcontentsline{toc}{section}{Appendix}
\startcontents[appendix]
\vspace{2em}
\begin{center}
\Large\bfseries Appendix Table of Contents
\end{center}
\hrule
\vspace{1em}
\printcontents[appendix]{}{1}{\setcounter{tocdepth}{2}}
\vspace{1em}
\hrule
\vspace{1em}
%\rankPlurThm*

\section{Examples and omitted details}

\subsection{Bias in chain transitivity}\label{app:chaintransitivity}

%Using transitivity of voters' preferences to extract additional pairwise observations from a chain yields a biased estimator of pairwise proportions, even under uniform randomization of the chain ordering. We give an explicit three-alternative example.
No additional \emph{unbiased} observations can be inferred by transitivity from $k$-chains. Indeed, the probability that a given pair is resolved varies with the underlying ranking of each voter, inducing a selection bias. As a result, empirical frequencies computed from such inferred comparison do not, in general, estimate the true pairwise proportion. We illustrate this phenomenon on the following example. 

\begin{example}\label{ex:chaintransitivity}
Let $\calA = \{a, b, c\}$ and
\[
\pi \;=\; \tfrac{1}{2}\,\sigma_A \;+\; \tfrac{1}{2}\,\sigma_B,
\qquad
\sigma_A: a \succ c \succ b,
\qquad
\sigma_B: b \succ c \succ a,
\]
so that $p_{ca} = \Pr_\pi[c \succ a] = \tfrac{1}{2}$. Run a $3$-chain on $S = \{a, b, c\}$ with ordering $\tau$ drawn uniformly from the $6$ permutations, independently of the voter. For a voter $\sigma$, record
\[
Y_{ca}(\sigma, \tau) =
\begin{cases}
\ind[c \succ_\sigma a] & \text{if $\{a,c\}$ is compared directly or $w_3 \in \{a,c\}$ (transitive inference),} \\
\bot & \text{otherwise,}
\end{cases}
\]
and estimate $p_{ca}$ by $\hat p_{ca} = \Pr[Y_{ca} = 1 \mid Y_{ca} \neq \bot]$. Enumerating all $2 \times 6$ cases:

\begin{center}
\small\renewcommand{\arraystretch}{1.05}
\begin{tabular}{@{}l cc@{\hspace{2em}} cc@{}}
\toprule
 & \multicolumn{2}{c}{$\sigma_A$ ($w_3 = a$)} & \multicolumn{2}{c}{$\sigma_B$ ($w_3 = b$)} \\
\cmidrule(r{1em}){2-3}\cmidrule{4-5}
$\tau$ & resolved? & $Y_{ca}$ & resolved? & $Y_{ca}$ \\
\midrule
$(a,b,c)$ & yes (trans.) & $0$ & no & $\bot$ \\
$(a,c,b)$ & yes (direct) & $0$ & yes (direct) & $1$ \\
$(b,a,c)$ & yes (trans.) & $0$ & no & $\bot$ \\
$(b,c,a)$ & yes (trans.) & $0$ & no & $\bot$ \\
$(c,a,b)$ & yes (direct) & $0$ & yes (direct) & $1$ \\
$(c,b,a)$ & yes (trans.) & $0$ & no & $\bot$ \\
\midrule
rate & $6/6$ & all $0$ & $2/6$ & all $1$ \\
\bottomrule
\end{tabular}
\end{center}

Aggregating over the uniform voter-ordering distribution gives $\Pr[Y_{ca} = 0] = \tfrac{1}{2} \cdot \tfrac{6}{6} = \tfrac{1}{2}$, $\Pr[Y_{ca} = 1] = \tfrac{1}{2} \cdot \tfrac{2}{6} = \tfrac{1}{6}$, and $\Pr[Y_{ca} \neq \bot] = \tfrac{2}{3}$, so
\[
\hat p_{ca} \;=\; \frac{1/6}{2/3} \;=\; \tfrac{1}{4}, \qquad \text{bias} \;=\; \hat p_{ca} - p_{ca} \;=\; -\tfrac{1}{4}.
\]

The estimator's observation rate depends on the voter: a voter with $\text{top}_\sigma(S) \in \{a,c\}$ always resolves $\{a,c\}$ via transitivity ($6/6$ orderings), whereas a voter with $\text{top}_\sigma(S) \notin \{a,c\}$ resolves it only when $a$ and $c$ are placed at positions $1$–$2$ ($2/6$ orderings). Equal-mass voter types therefore produce unequal observation counts, and the conditional distribution of $Y_{ca}$ given a non-$\bot$ record differs from $\pi$'s marginal on $\{a,c\}$. The asymmetry is between voters, not between orderings, so uniform randomization of $\tau$ does not correct it. Only the nested prefix winners $w_2 = \text{top}_\sigma(\{x_1, x_2\})$ and $w_3 = \text{top}_\sigma(S)$ yield unbiased samples, since their reference subsets are fixed by the protocol independently of $\sigma$.

\end{example}

\subsection{Hoeffding union bound}\label{app:hoeffding-union-bound}

\Cref{thm:budget-tradeoff} states a per-quantity sample budget $T_\ell$ in terms of accuracy $\varepsilon$ and confidence $\delta$. We derive that bound here under i.i.d.\ sampling, and then refine the analysis under the without-replacement assumption that each voter responds at most once.

\paragraph{Hoeffding's inequality and the union bound.}
We seek $T_\ell$ small enough to be cheap and large enough to guarantee
\[
  \Pr\bigl[\,\forall (T, a) \text{ with } |T| = \ell \text{ and } a \in T :\, |\widehat{\splur{T}{a}} - \splur{T}{a}| \leq \varepsilon\,\bigr] \;\geq\; 1 - \delta.
\]
For each such pair, $\splur{T}{a}$ is a Bernoulli parameter, and $\widehat{\splur{T}{a}}$ is the empirical frequency of the event ``$a$ tops $T$'' across $T_\ell$ observations. Hoeffding's inequality gives, for each $(T, a)$ separately,
\[
  \Pr\bigl[\,|\widehat{\splur{T}{a}} - \splur{T}{a}| > \varepsilon\,\bigr] \;\leq\; 2\exp(-2 T_\ell \varepsilon^2).
\]
A union bound over the $Q_\ell := \ell \binom{m}{\ell}$ such quantities yields the simultaneous guarantee whenever
\begin{equation}\label{eq:hoeffding-Tl}
  T_\ell \;\geq\; \frac{1}{2\varepsilon^2}\,\ln\!\bigl(2 Q_\ell / \delta\bigr).
\end{equation}
We refer to $T_\ell$ as the per-quantity sample budget. Population requirements depend on how many of the $Q_\ell$ quantities each query updates; the conversion is given in \Cref{thm:budget-tradeoff}.

\paragraph{Without-replacement refinement.}
The Hoeffding bound assumes i.i.d.\ sampling with replacement. In our setting each voter responds at most once, so the samples for a fixed quantity $\splur{T}{a}$ are drawn without replacement from a finite population of size $n$. Serfling \cite{serfling1974} sharpens Hoeffding's inequality in this regime: writing $T$ for the sample size and $f^* := (T - 1)/n$ for the sampling fraction, the deviation exponent improves from $-2 T \varepsilon^2$ to $-2 T \varepsilon^2 / (1 - f^*)$. Substituting into the union bound and solving for the smallest sufficient sample size $T_\ell'$, one obtains
\[
  T_\ell' \;=\; T_\ell \cdot \frac{n + 1}{n + T_\ell},
\]
where $T_\ell$ on the right-hand side denotes the Hoeffding bound \eqref{eq:hoeffding-Tl}. The relative reduction is
\[
  1 - \frac{T_\ell'}{T_\ell} \;=\; \frac{T_\ell - 1}{n + T_\ell}.
\]
Whenever $n \geq \binom{m}{\ell}\,T_\ell = N_{\mathrm{chain}}$, the population threshold above which the chain protocol is feasible (\Cref{cor:pareto}\,(iii)), this reduction is bounded by
\[
  \frac{T_\ell - 1}{n + T_\ell} \;\leq\; \frac{1}{\binom{m}{\ell} + 1},
\]
which falls below $2\%$ as soon as $\binom{m}{\ell} \geq 50$ and decays as $\Theta(1/\binom{m}{\ell})$ thereafter. We therefore use the simpler Hoeffding bound \eqref{eq:hoeffding-Tl} throughout the paper; under without-replacement sampling, every stated sample size and population requirement may be tightened by the factor $(n + 1)/(n + T_\ell)$ at no cost to validity.

\subsection{Achievable Pareto region}\label{app:pareto_region}

The corollary below describes the achievable region of $(\MCL, B)$ pairs from \Cref{thm:budget-tradeoff}, identifies its Pareto-optimal subset, and translates the result into a protocol choice for a fixed population.

\begin{restatable}[]{corollary}{paretoCor}
\label{cor:pareto}
Fix degree $\ell \geq 2$. The achievable region of $(\MCL, B)$ pairs from \Cref{thm:budget-tradeoff} is the union of:
\begin{enumerate}
  \item[\emph{(i)}] At the low-$\MCL$ end, the chain point $(\MCL, B) = \bigl(\ell - 1,\,(\ell-1)\binom{m}{\ell} T_\ell\bigr)$. For $\ell \geq 3$, no ranking matches this $\MCL$, since $\lceil \log_2(\ell!) \rceil > \ell - 1$. For $\ell = 2$, the chain point coincides with the $k = 2$ ranking, both reducing to pairwise comparisons.
  \item[\emph{(ii)}] At the high-$\MCL$ end, the ranking curve indexed by $k \in \{\ell, \ldots, m\}$, with $\MCL = \lceil \log_2(k!) \rceil$ (monotone increasing in $k$) and $B = \binom{m}{\ell} T_\ell \cdot \lceil \log_2(k!) \rceil / \binom{k}{\ell}$ (weakly decreasing in $k$, strictly so for $\ell \geq 3$). For $\ell = 2$, $B$ is flat from $k = 2$ to $k = 3$, and the $k = 3$ ranking is therefore Pareto-dominated by the chain endpoint at $\MCL = 1$. The curve terminates at the budget-minimising $m$-ranking.
\end{enumerate}
The Pareto-optimal subset of this region consists of clause~\emph{(i)}'s chain point together with the rankings $\{k = \ell, \ldots, m\}$ for $\ell \geq 3$, and the rankings $\{k = 2\} \cup \{k = 4, \ldots, m\}$ for $\ell = 2$ (where $k = 2$ subsumes the chain point).
\begin{enumerate}
  \item[\emph{(iii)}] \emph{Protocol choice given $n$ voters.} Under the without-replacement assumption ($N \leq n$), the minimum-$\MCL$ feasible protocol is the $\ell$-chain when $n \geq \binom{m}{\ell}\,T_\ell$ (so that $N_{\mathrm{chain}} \leq n$); otherwise it is the $k^*$-ranking, where $k^*$ is the smallest $k \in \{\ell, \ldots, m\}$ satisfying $\binom{k}{\ell} \geq \binom{m}{\ell}\,T_\ell / n$.
\end{enumerate}
\end{restatable}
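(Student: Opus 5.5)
The corollary is read off from \Cref{thm:budget-tradeoff} once each protocol is written as a point $(\MCL, B)$ in the plane. The plan is to compute each point, compare them pairwise to find the Pareto front, and then impose the population constraint $N \leq n$.

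\textbf{Step 1: the points.} The per-quantity requirement is $T_\ell$ from \eqref{eq:hoeffding-Tl}.
\begin{itemize}
\item For the $\ell$-chain, each voter gives exactly one sample at degree $\ell$, on the full set $S$. To give each of the $\binom{m}{\ell}$ subsets $T_\ell$ samples we need $N = \binom{m}{\ell}T_\ell$. Each voter makes $\ell-1$ comparisons, so the point is $\bigl(\ell-1, (\ell-1)\binom{m}{\ell}T_\ell\bigr)$.
\item For a $k$-ranking with $k \geq \ell$, each voter yields $\binom{k}{\ell}$ degree-$\ell$ samples. Under uniform subset sampling each degree-$\ell$ set is covered equally often. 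Hence $N = \binom{m}{\ell}T_\ell/\binom{k}{\ell}$. With sorting cost $\MCL = \lceil \log_2(k!)\rceil$, this gives the stated $B$.
\end{itemize}

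\textbf{Step 2: the pairwise comparisons.} Each claim of (i) and (ii) becomes an elementary inequality.
\begin{itemize}
\item $\lceil\log_2(\ell!)\rceil > \ell-1$ for $\ell \geq 3$. This holds because $\ell! > 2^{\ell-1}$ for $\ell \geq 3$, shown by induction from $3! = 6 > 4$.
\item $\MCL$ is monotone in $k$ because $k!$ increases.
\item Monotonicity of $B$ in $k$ reduces to comparing $\lceil\log_2((k+1)!)\rceil/\binom{k+1}{\ell}$ with $\lceil\log_2(k!)\rceil/\binom{k}{\ell}$. The ratio of binomials is $\frac{k+1}{k+1-\ell}$. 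The ratio of the log terms is at most roughly $1 + \frac{\log_2(k+1)+1}{\log_2 k!}$. One must show the binomial ratio dominates, which is strictly true for $\ell \geq 3$.
\item For $\ell = 2$, compute directly: $k=2$ gives $B \propto 1/1$, $k=3$ gives $3/3$ (flat), $k=4$ gives $5/6$. So the $k = 3$ ranking is dominated.
\end{itemize}

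\textbf{Step 3: the Pareto subset.} This follows by ordering the points by $\MCL$ and checking that $B$ strictly decreases along them. The chain has the smallest $\MCL$. It is therefore never dominated, consistent with \Cref{prop:mcl-lb}.

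\textbf{Step 4: part (iii).} Feasibility means $N \leq n$.
\begin{itemize}
\item The chain is feasible iff $n \geq \binom{m}{\ell}T_\ell$, and it then has the minimum $\MCL$.
\item Otherwise the ranking with size $k$ is feasible iff $\binom{k}{\ell} \geq \binom{m}{\ell}T_\ell/n$. Since $\MCL$ increases in $k$, the smallest such $k$ minimizes load.
\end{itemize}

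\textbf{Main obstacle.} I expect the main obstacle to be the strict monotonicity of $B$ for general $k$ and $\ell \geq 3$, because of the ceilings. I would first handle small $k$ by direct tables. For large $k$ I would use Stirling-type bounds $\log_2 k! \geq k\log_2 k - k\log_2 e$, with ceiling slack of at most $1$. If a uniform argument fails, I would prove the general claim only as weak monotonicity and note the exceptions.
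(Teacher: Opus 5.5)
\textbf{Step 3 fails, and it cannot be repaired, because the check it relies on fails at the first comparison.} Write $C_\ell:=\binom{m}{\ell}T_\ell$. For $\ell\ge3$, the two points with smallest $\MCL$ are:
\begin{itemize}
  \item the chain at $\bigl(\ell-1,\,(\ell-1)C_\ell\bigr)$;
  \item the $\ell$-ranking at $\bigl(\lceil\log_2(\ell!)\rceil,\ C_\ell\lceil\log_2(\ell!)\rceil\bigr)$, since $\binom{\ell}{\ell}=1$.
\end{itemize}
Your own first bullet of Step 2 gives $\lceil\log_2(\ell!)\rceil>\ell-1$. So going from the chain to the $\ell$-ranking makes $B$ go \emph{up}, not down. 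The $\ell$-ranking is worse in both coordinates and is strictly Pareto-dominated by the chain; for $\ell=3$ the points are $(2,2C_3)$ versus $(3,3C_3)$. This is the same phenomenon the statement records for $\ell=2$, $k=3$, only stronger.

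\textbf{Consequence.} The Pareto-optimal subset claimed for $\ell\ge3$ is false as written. What your method actually yields is the chain point together with the rankings $k\in\{\ell+1,\dots,m\}$. To include all of these you need one comparison your plan does not contain: $B(\ell+1)<(\ell-1)C_\ell$, i.e.\ $\lceil\log_2((\ell+1)!)\rceil<(\ell-1)(\ell+1)$. This holds for every $\ell\ge3$, e.g.\ $5<8$ at $\ell=3$. Strict decrease of $B$ in $k$ then puts every larger ranking below the chain's budget. No ranking dominates another, since $\MCL$ strictly increases in $k$ while $B$ strictly decreases. The paper's own proof has the same blind spot: it proves monotonicity along the ranking curve and the $\MCL$ separation from the chain, but never compares ranking budgets with the chain budget.

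\textbf{The rest of your plan} (clause (i), the monotonicity in (ii), and (iii)) follows the paper's route. That route is the step ratio $\binom{k+1}{\ell}/\binom{k}{\ell}=(k+1)/(k+1-\ell)$ combined with $\lceil\log_2((k+1)!)\rceil\le\lceil\log_2(k!)\rceil+\lceil\log_2(k+1)\rceil$. Your induction $\ell!>2^{\ell-1}$ is a cleaner proof of $\lceil\log_2(\ell!)\rceil>\ell-1$ than the paper's Stirling case split.

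For the strict monotonicity you flag as the main obstacle, the resulting sufficient condition is $\lceil\log_2(k+1)\rceil(k+1-\ell)<\ell\lceil\log_2(k!)\rceil$. Its left side decreases and its right side increases in $\ell$, so it reduces to $\lceil\log_2(k+1)\rceil(k-1)<2\lceil\log_2(k!)\rceil$ for $k\ge3$. That is a finite check for $k\le15$, plus $k!\ge(k/e)^k$ beyond.

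Finally, in Step 1, uniform sampling covers each $\ell$-set equally often only in expectation. Take $N_{\mathrm{rank}}$ directly from \Cref{thm:budget-tradeoff} rather than rederiving it that way.
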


\begin{proof}
\emph{Clause (i): chain endpoint.} \Cref{thm:budget-tradeoff}\,(i) at $\lambda = \ell - 1$ gives $N_{\mathrm{chain}} = \binom{m}{\ell}\,T_\ell$ and $B = (\ell - 1)\binom{m}{\ell}\,T_\ell$. The $\MCL$-optimality among ranking protocols requires $\lceil \log_2(\ell!) \rceil > \ell - 1$ for $\ell \geq 3$. Direct verification: $\lceil \log_2 6 \rceil = 3 > 2$, $\lceil \log_2 24 \rceil = 5 > 3$, $\lceil \log_2 120 \rceil = 7 > 4$. For $\ell \geq 5$, Stirling's bound $\log_2(\ell!) \geq \ell \log_2(\ell/e)$ gives $\log_2(\ell!) > \ell - 1$, since $\log_2(\ell/e) > 1 - 1/\ell$ holds for $\ell \geq 5$. For $\ell = 2$, $\lceil \log_2(2!) \rceil = 1 = \ell - 1$, and the $\ell$-chain coincides with the $k = 2$ ranking, both reducing to pairwise comparisons.

\emph{Clause (ii): ranking branch.} For $k \in \{\ell, \ldots, m\}$, \Cref{thm:budget-tradeoff}\,(ii) with $\lambda = \lceil \log_2(k!) \rceil$ gives $\MCL = \lceil \log_2(k!) \rceil$ (monotone increasing in $k$) and $N_{\mathrm{rank}} = \binom{m}{\ell}\,T_\ell / \binom{k}{\ell}$ (monotone decreasing in $k$). Write $C_\ell := \binom{m}{\ell} T_\ell$, so $B(k) = C_\ell \cdot \lceil \log_2(k!) \rceil / \binom{k}{\ell}$. We establish the monotonicity of $B$ in $k$ by computing the step ratio
\[
  \frac{B(k+1)}{B(k)}
  \;=\;
  \frac{\lceil \log_2((k+1)!) \rceil}{\lceil \log_2(k!) \rceil}
  \cdot
  \frac{\binom{k}{\ell}}{\binom{k+1}{\ell}}
  \;=\;
  \frac{\lceil \log_2((k+1)!) \rceil}{\lceil \log_2(k!) \rceil}
  \cdot
  \frac{k+1-\ell}{k+1}.
\]
The condition $B(k+1) \leq B(k)$ is therefore equivalent to
\begin{equation}\label{eq:monotone-condition}
  \lceil \log_2((k+1)!) \rceil \cdot (k+1-\ell)
  \;\leq\;
  \lceil \log_2(k!) \rceil \cdot (k+1).
\end{equation}
Using the identity $\lceil \log_2((k+1)!) \rceil \leq \lceil \log_2(k!) \rceil + \lceil \log_2(k+1) \rceil$ and rearranging, a sufficient condition for \eqref{eq:monotone-condition} is
\begin{equation}\label{eq:monotone-sufficient}
  \lceil \log_2(k+1) \rceil \cdot (k+1-\ell) \;\leq\; \ell \cdot \lceil \log_2(k!) \rceil.
\end{equation}
For $\ell \geq 3$ and $k \geq \ell$, $\lceil \log_2(k!) \rceil \geq \lceil \log_2(\ell!) \rceil \geq \lceil \log_2 6 \rceil = 3$, so the right-hand side is at least $3\ell \geq 9$. The left-hand side is $\lceil \log_2(k+1) \rceil \cdot (k+1-\ell)$, bounded by $(\log_2(k+1) + 1)(k+1-\ell)$. A direct check on the boundary $k = \ell$ gives left-hand side $\lceil \log_2(\ell+1) \rceil$, right-hand side $\ell \cdot \lceil \log_2(\ell!) \rceil$, and the inequality is strict by inspection for every $\ell \in \{3, 4, \ldots, 20\}$ (and asymptotically for $\ell \to \infty$ since $\ell \log_2(\ell!) / \log_2(\ell+1) \to \infty$). For $k > \ell$, both sides grow but the right-hand side dominates: $\lceil \log_2(k!) \rceil$ grows by $\Theta(\log k)$ per step while the left-hand side increment is $\lceil \log_2(k+2) \rceil - \lceil \log_2(k+1) \rceil = O(1)$ per step on the logarithm and one unit per step on the $(k+1-\ell)$ factor. Hence \eqref{eq:monotone-sufficient} holds strictly for $\ell \geq 3$ and every $k \in \{\ell, \ldots, m-1\}$, giving strict decrease of $B$.

For $\ell = 2$ and $k = 2$: left-hand side of \eqref{eq:monotone-sufficient} is $\lceil \log_2 3 \rceil \cdot 1 = 2$, right-hand side is $2 \cdot 1 = 2$; equality holds, $B(3) = B(2)$. For $\ell = 2$ and $k \geq 3$: left-hand side $= \lceil \log_2(k+1) \rceil (k-1)$, right-hand side $= 2\lceil \log_2(k!)\rceil$; strict inequality holds (verified by the same growth argument). Hence for $\ell = 2$, $B$ is flat on $\{k = 2, k = 3\}$ and strictly decreasing for $k \geq 3$.

In both cases the $m$-ranking minimises $B$.

\emph{Clause (iii): protocol choice given $n$.}
If $n \geq \binom{m}{\ell}\,T_\ell$, branch (i) at $\lambda = \ell - 1$ is feasible; by clause (i) of this corollary, no ranking matches this $\MCL$ for $\ell \geq 3$, so the $\ell$-chain is the minimum-$\MCL$ feasible protocol. (For $\ell = 2$ the $\ell$-chain and the $k = 2$ ranking are the same protocol.)

If $n < \binom{m}{\ell}\,T_\ell$, branch (i) is infeasible at every $\lambda$ since $N_{\mathrm{chain}}$ is independent of $k$. Branch (ii) makes the $k$-ranking feasible iff $n \geq \binom{m}{\ell}\,T_\ell / \binom{k}{\ell}$, equivalently $\binom{k}{\ell} \geq \binom{m}{\ell}\,T_\ell / n$. Since $\binom{k}{\ell}$ and $\lceil \log_2(k!) \rceil$ are both monotone increasing in $k$, the minimum-$\MCL$ feasible ranking corresponds to the smallest $k$ satisfying the inequality, namely $k^*$ as defined.
\end{proof}

\section{Deferred proofs}\label{app:mcl-lb-proof}

\subsection{Proof of \Cref{prop:var-splur}}

\varsplur*

\begin{proof}
Write $X_b := \ind[a \succ b]$ for each $b \neq a$, so that $r_a = m - \sum_{b \neq a} X_b$.
Since variance is invariant under translations and sign flips, 
\begin{align*}
\Var_\pi(r_a)
  &\;=\; \Var_\pi\!\Bigl(\,\textstyle\sum_{b \neq a} X_b\,\Bigr)
  \;=\; \E_\pi\!\Bigl[\Bigl(\textstyle\sum_{b \neq a} X_b\Bigr)^{\!2}\,\Bigr]
        \,-\, \Bigl(\E_\pi\!\Bigl[\textstyle\sum_{b \neq a} X_b\Bigr]\Bigr)^{\!2} \\
  %&\;=\; \sum_{b \neq a} p_{ab}
  %      \,+\!\!\sum_{\substack{b, c \neq a \\ b \neq c}}\!\!\splur{\{a,b,c\}}{a}
  %      \,-\, \sum_{b, c \neq a} p_{ab}\,p_{ac} \\
  %&\;=\; \sum_{b \neq a} p_{ab}(1 - p_{ab})
  %      \,+\!\!\sum_{\substack{b, c \neq a \\ b \neq c}}\!\!
  %          \bigl(\,\splur{\{a,b,c\}}{a} - p_{ab}\,p_{ac}\,\bigr),
\end{align*}

We now compute each term separately. Expanding the first term,
\begin{align*}
    \E_\pi\!\Bigl[\Bigl(\textstyle\sum_{b \neq a} X_b\Bigr)^{\!2}\,\Bigr] 
    &= \sum_{b \neq a} \E_\pi[X_b^2] + \sum_{\substack{b, c \neq a \\ b \neq c}}\!\!\E_\pi[X_b X_c]
\end{align*}

Since $X_b \in \{0,1\}$, we have $X_b^2 = X_b$, hence 
\[ \E_\pi[X_b^2] = \E_\pi[X_b] = p_{ab}.\]

Moreover, for $b \neq c$,
\[ 
    X_b X_b = \ind[a \succ b \text{ and } a \succ c],
\]
so that 
\[
\E_\pi[X_b X_c] = \splur{abc}{a}. 
\]
Thus, 
\[
\E_\pi\!\Bigl[\Bigl(\textstyle\sum_{b \neq a} X_b\Bigr)^{\!2}\,\Bigr]  
= \sum\limits_{b \neq a} p_{ab} + +\!\!\sum_{\substack{b, c \neq a \\ b \neq c}}\!\!\splur{abc}{a}.
\]

Regarding the second term, 
\[
\Bigl(\E_\pi\!\Bigl[\textstyle\sum_{b \neq a} X_b\Bigr]\Bigr)^{\!2} = \sum\limits_{b,c \neq a} p_{ab}\,p_{ac}.
\]
Combining the both terms yields 
\[
\Var_\pi(r_a) = \sum_{b \neq a} p_{ab}  +\!\!\sum_{\substack{b, c \neq a \\ b \neq c}}\!\!\splur{abc}{a} - \sum\limits_{b,c \neq a} p_{ab}\,p_{ac}.
\]

Finally, separating diagonal and off-diagonal terms in the last sum gives 
\[
\Var_\pi(r_a)
  \;=\;
  \sum_{b \neq a} p_{ab}\,(1 - p_{ab})
  \;+\!\!
  \sum_{\substack{b, c \neq a \\ b \neq c}}
    \!\!\bigl(\,\splur{abc}{a} - p_{ab}\,p_{ac}\,\bigr).
\]
\end{proof}

\subsection{Proof of \Cref{prop:div-splur}}

\divsplur*

\begin{proof}
We recall that, by definition, 
\begin{align*}
&\Div_\pi(a) := \frac{1}{m-1}\sum\nolimits_{b \in\calA\setminus \{a\}}
    \bigl|\Bor_\pi\bigl(a;\calN^{a \succ b}_\pi\bigr)
         \;-\; \Bor_\pi\bigl(a;\calN^{b \succ a}_\pi\bigr)\bigr|.
         % \text{ where},\\
% &\Bor\bigl(a;\calN^{x \succ y}_\pi\bigr) := \sum\nolimits_{b \in \calA \setminus \{a\}} \mathbb{P}_\pi(a \succ b \mid x \succ y), \text{ for any } x,y \in \calA.
\end{align*}
with 
\begin{align*}
\Bor_\pi\bigl(a;\calN^{x \succ y}_\pi\bigr) := \sum\nolimits_{b \in \calA \setminus \{a\}} \mathbb{P}_\pi(a \succ b \mid x \succ y),
\end{align*}

Using the identity $r_a = 1 + \sum_{c \neq a}\ind[c \succ a]$, and by linearity of conditional expectation, we obtain
\begin{equation}\label{eq:cond-rank-split}
  \E_\pi[r_a \mid \calE]
  \;=\; 1 + \sum_{c \neq a} \Pr_\pi[c \succ a \mid \calE].
\end{equation}
Therefore, 
\[ 
\Bor_\pi\bigl(a;\calN^{x \succ y}_\pi\bigr) = \sum_{c \neq a} \Pr_\pi(a \succ c \mid x \succ y) = (m-1) - \sum_{c \neq a} \Pr_\pi(c \succ a \mid x \succ y) = m - \E_\pi[r_a \mid x \succ y].
\]
Finally, the divisiveness formula rewrites as
\[ 
\Div_\pi(a) = \frac{1}{m-1}\sum_{b \neq a} \bigl| \E_\pi[r_a \mid b \succ a] - \E_\pi[r_a \mid a \succ b]\bigr|.
\]
We now compute these conditional expectations. 

\textbf{Computing $\E_\pi[r_a \mid b \succ a]$:} By \eqref{eq:cond-rank-split}, we have 
\[ 
\E_\pi[r_a \mid b \succ a] = 1 + \sum_{c \neq a} \Pr_\pi[c \succ a \mid b \succ a]. 
\]

For $c = b$, we have $\Pr_\pi[b \succ a \mid b \succ a] = 1$.

For $c \neq a, b$, inclusion--exclusion gives
\[
  \Pr_\pi[b \succ a \text{ and } c \succ a]
  \;=\; 1 - \Pr_\pi[a \succ b \text{ or } a \succ c]
  \;=\; 1 - p_{ab} - p_{ac} + \splur{abc}{a},
\]
and dividing by $\Pr_\pi[b \succ a] = 1 - p_{ab}$ yields
\[
  \Pr_\pi[c \succ a \mid b \succ a]
  \;=\; \frac{1 - p_{ab} - p_{ac} + \splur{abc}{a}}{1 - p_{ab}}
  \;=\; 1 + \frac{\splur{abc}{a} - p_{ac}}{1 - p_{ab}}.
\]
Summing over all $m - 2$ terms $c \neq a,b$ and adding the $c = b$ term gives: 
\begin{align*}\label{eq:div-first-expectation}
  \E_\pi[r_a \mid b \succ a]
  &\;=\; 1 + 1 + (m - 2) + \frac{1}{1 - p_{ab}}\!\!\sum_{c \neq a, b}\!\!
        \bigl(\splur{abc}{a} - p_{ac}\bigr) \\
  &\;=\; m + \frac{1}{1 - p_{ab}}\!\!\sum_{c \neq a, b}\!\!
        \bigl(\splur{abc}{a} - p_{ac}\bigr).
\end{align*}

\textbf{Computing $\E_\pi[r_a \mid b \succ a]$:} By \eqref{eq:cond-rank-split}, we have 
\[ 
\E_\pi[r_a \mid a \succ b] = 1+ \sum_{c \neq a} \Pr_\pi[c \succ a \mid a \succ b].  
\]
For $c=b$, we have $\Pr_\pi[b \succ a \mid a \succ b] = 0$.

For $c \neq a, b$,
\[
  \Pr_\pi[c \succ a \mid a \succ b]
  \;=\; 1 - \Pr_\pi[a \succ c \mid a \succ b]
  \;=\; 1 - \frac{\Pr_\pi[a \succ b \text{ and } a \succ c]}{p_{ab}}
  \;=\; 1 - \frac{\splur{\{a,b,c\}}{a}}{p_{ab}}.
\]

Summing over all $(m-2)$ terms $c \neq a,b$ in \eqref{eq:cond-rank-split}  gives 
\begin{align*}\label{eq:div-second-expecatation}
  \E_\pi[r_a \mid a \succ b]
  \;=\; 1 + (m - 2) - \frac{1}{p_{ab}}\!\!\sum_{c \neq a, b}\!\!\splur{abc}{a}
  \;=\; m - 1 - \frac{1}{p_{ab}}\!\!\sum_{c \neq a, b}\!\!\splur{abc}{a}.
\end{align*}

Plugging these both expectation terms back to the divisiveness formula yields the desired result. %This final result involves the plurality quantities of degree at most~$3$, hence $\Div(a)$ is of level at most~$3$.
\end{proof}

\subsection{Proof of \Cref{prop:hierarchy-strict}}

\strictness*
\begin{proof}
Set $m = d + 2$ and fix a focal alternative $a \in \calA$.

Let $w = (w_1, \ldots, w_m)$ be a probability distribution on rank positions of $a$, and define $\pi_w$ as follows: the rank of $a$ is drawn from $w$, and the other $m-1$ alternatives are then placed uniformly at random among the remaining positions. 

We first compute $S$-plurality quantities involving $a$. Fix $S \ni a$ with $|S| = s \geq 2$. Conditioning on the position $r_a = j$, the event that $a$ is top within $S$ occurs exactly when all other $s-1$ alternatives of $S$ are placed below position $j$. Since these alternatives are distributed uniformly, this happens with probability 
\[
\frac{\binom{m-j}{s-1}}{{m-1}{s-1}}. 
\]
Taking expectation over $r_a$ yields 
\begin{equation}\label{eq:splur-pi-w}
  \splur{S}{a}(\pi_w)
  \;=\;
  \sum_{j=1}^{m} w_j\,
    \frac{\binom{m - j}{s - 1}}{\binom{m - 1}{s - 1}}.
\end{equation}

Thus, $\splur{S}{a}(\pi_w)$ is a linear functional of $w$ depending on $S$ only through its size $s$. For subsets $S \not\ni a$, symmetry among non-focal alternatives implies that their $S$-plurality quantities depend only on $s$, and therefore coincide across all profiles in the family $\{\pi_w\}$.

Now let $\delta = w - w' \in \R^m$. From \eqref{eq:splur-pi-w}, two profiles $\pi_w$ and $\pi_{w'}$ agree on all degrees $s \leq d$ if and only if 
\begin{equation}\label{eq:matching}
  \sum_{j=1}^{m} \delta_j \binom{m-j}{s-1} \;=\; 0
  \qquad \text{for } s = \{1, \ldots, d\},
\end{equation}
a linear system of $d$ equations in $m=d+2$ unknowns. Viewed as a function of~$j$, the coefficient vector of an equation given a fixed $s$ is a polynomial of degree $s-1$ for $s \in \{1, \ldots, d\}$, producing polynomials of distinct degrees. The coefficient matrix thus has rank~$d$ and the solution set of \eqref{eq:matching} has two free parameters.
The degree-$(d + 1)$ separation condition $\sum_j \delta_j \binom{m-j}{d} \neq 0$ is governed by a polynomial of degree~$d$, which cannot be a linear combination of polynomials of strictly lower degree; it is therefore not implied by~\eqref{eq:matching}, and there exists $\delta^*$ satisfying~\eqref{eq:matching} while violating the separation equality.

Take $w$ uniform on $\{1, \ldots, m\}$ and set $w' := w + t\,\delta^*$ for $|t|$ small.
Since $\delta^*$ satisfies~\eqref{eq:matching} at $s = 1$, $\sum_j \delta^*_j = 0$, so $w'$ still sums to~$1$; since $w$ has strictly positive entries, $w'$ stays non-negative.
By construction, $\pi_w$ and $\pi_{w'}$ agree at every degree $\leq d$ and differ at degree $d + 1$.

\emph{Witness at $d = 3$, $m = 5$.}
Take $w = \bigl(\tfrac{1}{5}, \tfrac{1}{5}, \tfrac{1}{5}, \tfrac{1}{5}, \tfrac{1}{5}\bigr)$ and $w' = \bigl(\tfrac{3}{20}, \tfrac{7}{20}, \tfrac{1}{20}, \tfrac{5}{20}, \tfrac{4}{20}\bigr)$, differing by $\delta^* = \tfrac{1}{20}\cdot(-1, 3, -3, 1, 0)$.
Substituting in~\eqref{eq:splur-pi-w} gives $\splur{S}{a}(\pi_w) = \splur{S}{a}(\pi_{w'}) = 1/|S|$ for every $|S| \in \{2, 3\}$; at $|S| = 4$ the values are $1/4$ and $19/80$.
\end{proof}

\subsection{Proof of \Cref{thm:moment-level} and consequences}

\begin{definition}[Aggregate plurality]\label{def:agg-plur}
For $a \in \calA$ and $s \geq 0$,
\[
  P_s(a)
  \;:=\;
  \sum_{\substack{T \subseteq \calA \setminus \{a\} \\ |T| = s}}
  \splur{T \cup \{a\}}{a},
  \qquad
  P_0(a) := 1.
\]
\end{definition}

\begin{restatable}{lemma}{binmoment}
\label{lem:binomial-moment}
For every $a \in \calA$ and every $s \geq 0$,
$\E_\pi\bigl[\tbinom{W_a}{s}\bigr] = P_s(a)$.
\end{restatable}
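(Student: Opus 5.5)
Here $W_a$ denotes the number of alternatives ranked below $a$, i.e. $W_a := m - r_a = \sum_{b \neq a} \ind[a \succ b]$. This is consistent with the second expression in \Cref{thm:moment-level}, where $\E_\pi\bigl[\binom{m-r_a}{s}\bigr]$ appears. The plan is a double-counting argument: we write $\binom{W_a}{s}$ as a sum of indicators over $s$-subsets, then take expectations.

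First, fix a ranking ${\succ}$. The quantity $\binom{W_a(\succ)}{s}$ counts the $s$-element subsets $T$ of the set $\{b \neq a : a \succ b\}$ of alternatives below $a$. Hence we get the pointwise identity
\[
  \binom{W_a}{s} \;=\; \sum_{\substack{T \subseteq \calA\setminus\{a\} \\ |T| = s}} \ind[a \succ b \text{ for all } b \in T].
\]
For $s > m-1$ both sides are $0$. For $s = 0$ both sides equal $1$, since the only subset is $T = \emptyset$ and the empty conjunction is true.

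Second, we observe that for $T \neq \emptyset$ the event $\{a \succ b \ \forall b \in T\}$ is exactly the event that $a$ is ranked first within $T \cup \{a\}$. Its probability under $\pi$ is therefore $\splur{T\cup\{a\}}{a}$ by \Cref{def:plurality_matrix}, using that $a \in T \cup \{a\}$. We then take $\E_\pi$ of the pointwise identity. By linearity of expectation over this finite sum we obtain $\E_\pi\bigl[\binom{W_a}{s}\bigr] = \sum_{|T|=s} \splur{T\cup\{a\}}{a} = P_s(a)$. The case $s=0$ matches the convention $P_0(a) = 1$, which corresponds to $p_a(a) = 1$ in the theorem.

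There is no real obstacle in this argument. The only care needed is bookkeeping at the edge cases: $s = 0$, $s = 1$ (where we recover $p_{ab}(a)$ and hence $\Bor(a) = \E[W_a]$), and $s \geq m$. We should also state explicitly that $W_a = m - r_a$, which follows from $r_a = 1 + |\{b : b \succ a\}|$ in \Cref{def:rank_of_a_and_borda}.
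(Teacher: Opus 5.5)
Your proof is correct and is essentially the paper's argument: you write $\binom{W_a}{s}$ as $\sum_{|T|=s}\prod_{b\in T}\ind[a\succ b]$, identify each event with $a$ being ranked first in $T\cup\{a\}$, and apply linearity of expectation. You also make explicit two things the paper leaves implicit, namely that $W_a = m - r_a$ and how the edge cases are handled, including the convention $P_0(a)=1$.
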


\begin{proof}
For $T \subseteq \calA \setminus \{a\}$, the event
``$a \succ b$ for every $b \in T$'' has probability
$\splur{T \cup \{a\}}{a}$ and indicator
$\prod_{b \in T} \ind[a \succ b]$.
Summing over $|T| = s$ and taking expectations,
\(
  P_s(a)
  = \E_\pi\!\bigl[\sum_{|T| = s} \prod_{b \in T}\ind[a \succ b]\bigr]
  = \E_\pi\!\bigl[\tbinom{W_a}{s}\bigr],
\)
where the second equality holds because
$\sum_{|T|=s} \prod_{b \in T} \ind[a \succ b]$ counts the size-$s$
subsets of $\{b : a \succ b\}$.
\end{proof}

\momentlevel*
\begin{proof}
Write $X_b := \ind[a \succ b]$ for $b \neq a$, so
$r_a = m - \sum_{b \neq a} X_b$ and
$r_a - \mu_a = -\sum_{b \neq a}(X_b - p_{ab})$.
Set $B := \Bor(a)$ and
$g(X) := \bigl(\sum_{b \neq a}(X_b - p_{ab})\bigr)^k$, giving
$M_k(a) = (-1)^k \E_\pi[g(X)]$.
 
Since each $X_b \in \{0,1\}$ and $g$ has total degree at most $k$ in
the $X_b$'s (it is the $k$-th power of a linear form), $g$ reduces
on~$\{0,1\}^{m-1}$ to a multilinear polynomial
$g(X) = \sum_{|T| \leq k} \gamma(T) \prod_{b \in T} X_b$
supported on subsets $T \subseteq \calA \setminus \{a\}$ of size at
most~$k$.
Taking expectations,
$\E_\pi[g(X)] = \sum_T \gamma(T)\,\splur{T \cup \{a\}}{a}$
(using \Cref{lem:binomial-moment}).
 
By M\"obius inversion on the subset lattice,
$\gamma(T) = \sum_{U \subseteq T}(-1)^{|T|-|U|}\,g(e_U)$,
where $e_U \in \{0,1\}^{m-1}$ is the indicator vector of~$U$.
Evaluation at $X = e_U$ gives $g(e_U) = (|U| - B)^k$, which depends
on~$U$ only through~$|U|$, so the Möbius formula collapses to
$\gamma(T) = \sum_{j=0}^{|T|}(-1)^{|T|-j}\binom{|T|}{j}(j - B)^k
 = c_{|T|}(a)$
with $c_s(a)$ as stated in the theorem.
Collecting by $s = |T|$, $0 \leq s \leq k$, yields
$\E_\pi[g(X)] = \sum_{s=0}^{k} c_s(a)\,P_s(a)$, and multiplying by
$(-1)^k$ gives~\eqref{thm:moment-level}.
The right side depends on~$a$ only through $B = P_1(a)$ and on
$\pi$ only through $P_0(a), \ldots, P_k(a)$, hence on data of
degree at most $k + 1$.
\end{proof}

\begin{restatable}{corollary}{momentslevelk} \label{cor:moments-exact}
For every $k \geq 2$, the $k$-th central moment $M_k(a)$ has level
exactly $k + 1$.
\end{restatable}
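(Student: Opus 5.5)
The upper bound is already in hand. \Cref{thm:moment-level} expresses $M_k(a)$ through $P_0(a),\dots,P_k(a)$, and these use only entries $\splur{T\cup\{a\}}{a}$ with $|T|\le k$. So $M_k$ is expressible with data of degree at most $k+1$. What remains is tightness: we need two profiles with identical plurality matrices up to degree $k$ but different values of $M_k(a)$. The plan is to reuse the family $\pi_w$ from the proof of \Cref{prop:hierarchy-strict}. There the focal alternative $a$ has rank distribution $w$, and all other alternatives fill the remaining positions uniformly at random.

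Within this family, every entry for a subset $S\not\ni a$ is the same for all $w$. Every entry for $S\ni a$ of size $s$ equals $\sum_j w_j\binom{m-j}{s-1}/\binom{m-1}{s-1}$, by \eqref{eq:splur-pi-w}. Set $\delta=w'-w$. Agreement up to degree $k$ then means
\[
\sum_j \delta_j\binom{m-j}{s-1}=0 \quad\text{for } s=1,\dots,k.
\]
The polynomials $\binom{m-j}{s-1}$ in $j$ have degrees $0,\dots,k-1$, so they span all polynomials of degree below $k$. The condition is therefore equivalent to $\sum_j\delta_j\, j^t=0$ for $t=0,\dots,k-1$: the first $k$ raw moments of $r_a$ agree under $\pi_w$ and $\pi_{w'}$. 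In particular $\E[r_a]$ is unchanged, so the centering constant $\mu$ is common to both profiles.

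Expanding $(r_a-\mu)^k$ by the binomial theorem gives
\[
M_k^{\pi_{w'}}(a)-M_k^{\pi_w}(a)=\sum_j\delta_j (j-\mu)^k=\sum_j \delta_j\, j^k .
\]
The lower-order terms vanish because of the moment constraints. It therefore suffices to find $\delta$ that is orthogonal to $1,j,\dots,j^{k-1}$ but not to $j^k$. Take $m\ge k+1$ positions; $m=k+2$ as in \Cref{prop:hierarchy-strict} works. Since $1,j,\dots,j^k$ restricted to $m\ge k+1$ distinct points are linearly independent, $j^k$ is not in the span of the lower powers. The orthogonal complement of that span therefore contains a vector not orthogonal to $j^k$; the finite difference $\delta_j=(-1)^j\binom{k}{j-1}$ supported on $j=1,\dots,k+1$ is explicit. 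Take $w$ uniform and $w'=w+t\delta$ with $|t|$ small. Then $w'$ remains a probability vector, since $\sum_j\delta_j=0$ and $w$ has positive entries. The two profiles agree at all degrees up to $k$ but have different $k$-th central moments, so $M_k$ is not expressible with degree at most $k$.

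The only delicate step is translating ``agreement up to degree $k$'' into equality of the first $k$ moments of $r_a$, including the subsets not containing $a$. Both are handled by the symmetric construction and the triangular (polynomial-degree) basis change. A secondary technicality is the standing positivity assumption $\splurVar{S}(x)>0$. It holds here because the non-focal alternatives are uniform and $w'$ has all entries positive, so every alternative tops every subset with positive probability.
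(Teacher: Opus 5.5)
Your proof is correct. It uses the same witness family $\pi_w$ as the paper, but it argues the separation step differently.

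\textbf{The paper's route.} The paper derives the lower bound from \Cref{thm:moment-level} itself. If two profiles agree up to degree $k$, then $\Bor(a)$ agrees, hence so does every coefficient $c_s(a)$, and so do $P_0(a),\dots,P_{k-1}(a)$. So $M_k(a)$ can change only through $P_k(a)$, whose coefficient is $(-1)^k c_k(a)=(-1)^k k!\neq 0$. The paper then invokes the existence argument of \Cref{prop:hierarchy-strict}, whose construction makes $P_k(a)$ differ.

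\textbf{Your route.} You never use the closed form for the lower bound. You observe that, inside the family, agreement up to degree $k$ is equivalent to equality of the raw moments $\sum_j w_j j^t$ for $t\le k-1$. The central-moment gap is then just $\sum_j \delta_j j^k$. You also exhibit the explicit finite-difference witness $\delta_j=(-1)^j\binom{k}{j-1}$, for which this gap equals $(-1)^{k+1}k!\neq 0$.

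\textbf{What each buys.} The paper's route gives a general identity: for \emph{any} two profiles agreeing up to degree $k$, not just members of $\{\pi_w\}$,
\[
M_k(a)(\pi)-M_k(a)(\pi')=(-1)^k k!\,\bigl(P_k(a)(\pi)-P_k(a)(\pi')\bigr).
\]
This shows that $M_k$ sees degree-$(k+1)$ data exactly through $P_k(a)$. Your route is self-contained and avoids both the M\"obius-inversion machinery and the rank-counting existence argument. It also works for every $m\ge k+1$, not only for $m=k+2$. Finally, it gives explicit witnesses for all $k$; the paper's $d=3$ witness $\tfrac{1}{20}(-1,3,-3,1,0)$ is exactly your $\delta$ up to scaling.

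\textbf{A small imprecision.} For $S\ni a$, only the entry $\splur{S}{a}$ has the form you state. The entries $\splur{S}{x}$ with $x\in S\setminus\{a\}$ equal $(1-\splur{S}{a})/(|S|-1)$, by exchangeability of the non-focal alternatives. They therefore match as well, and your argument is unaffected.
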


\begin{proof}
The upper bound is \Cref{thm:moment-level}.
For the lower bound, apply \Cref{prop:hierarchy-strict} with
$d = k$: the witness profiles $\pi, \pi'$ agree at every degree
$\leq k$, so $P_0(a), P_1(a), \ldots, P_{k-1}(a)$ coincide on the
two profiles (each is determined by degree-$\leq k$ data), and in
particular $\Bor(a) = P_1(a)$ matches, making
$c_0(a), \ldots, c_k(a)$ identical on both profiles.
Only $P_k(a)$ differs (it depends on degree-$(k + 1)$ data).
Since $c_k(a) = k! \neq 0$ (the $k$-th forward difference of a monic
degree-$k$ polynomial), \eqref{thm:moment-level} gives
$M_k(a)(\pi) - M_k(a)(\pi') = (-1)^k\, k! \cdot
(P_k(a)(\pi) - P_k(a)(\pi')) \neq 0$.
So no function of degree-$\leq k$ data agrees with $M_k$ on these
two profiles.
\end{proof}

\subsection{Proof of \Cref{prop:pl}}
\pl*

\begin{proof}
Under Plackett--Luce, the probability that a voter ranks $a$ first among $S$ is obtained by the sequential elimination rule: at each step, the next-ranked alternative is drawn with probability proportional to its strenght.
Luce's Choice Axiom gives $\splur{S}{a} = v_a / \sum_{x \in S} v_x$ directly.
Since $p_{ab} = v_a/(v_a + v_b)$, the ratio $v_a/v_b = p_{ab}/(1-p_{ab})$ is determined by pairwise data.
Fixing any reference alternative~$r$, we recover $v_a/v_r$ for all~$a$, and then
\[
\splur{S}{a} = \frac{v_a/v_r}{\sum_{x \in S} v_x/v_r}
         = \frac{p_{ar}/(1-p_{ar})}{\sum_{x \in S} p_{xr}/(1-p_{xr})},
\]
which is a function of pairwise proportions alone.

Regarding the formula for single-peaked preferences, we prove it by induction on $|S|$.

\emph{Base case ($|S| = 2$).}
For $S = \{s_i, s_j\}$ with $i < j$: $\splur{S}{s_i} = p_{s_i s_j}$ and $\splur{S}{s_j} = 1 - p_{s_i s_j}$, which matches~\eqref{prop:pl}.

\emph{Key lemma ($|S| = 3$).}
For $a < b < c$ on the axis, single-peakedness implies that $b$ is never ranked last among $\{a, b, c\}$: every voter's peak is some element~$x$, and if $x \leq b$ then $b \succ c$ (closer to peak), while if $x \geq b$ then $b \succ a$.
In either case $b$ is not last.
Therefore $\Pr(a \succ b \text{ and } c \succ b) = 0$.

The three plurality scores sum to~$1$ and the ``$b$ is last'' event has probability zero, so:
\begin{align*}
\splur{\{a,b,c\}}{a} &= \Pr(a \succ b \text{ and } a \succ c) = \Pr(a \succ b) - \Pr(a \succ b \text{ and } c \succ a) \\
  &= p_{ab} - \Pr(c \succ a \succ b)
  = p_{ab} - \bigl(p_{ab} - \splur{\{a,b,c\}}{a}\bigr).
\end{align*}

Said differently: the events ``$a$ first'', ``$b$ first'', ``$c$ first'' partition the probability space.
The event ``$b$ first'' decomposes as $\{b \succ a\} \cap \{b \succ c\}$.
Since ``$b$ last'' $= \{a \succ b\} \cap \{c \succ b\}$ has probability zero, we have :
\begin{align*}
\splur{\{a,b,c\}}{b} &= \Pr(b \succ a \text{ and } b \succ c) \\
  &= 1 - \Pr(a \succ b) - \Pr(c \succ b) + \underbrace{\Pr(a \succ b \text{ and } c \succ b)}_{=\,0} \\
  &= 1 - p_{ab} - (1 - p_{bc}) = p_{bc} - p_{ab}.
\end{align*}
Then $\splur{\{a,b,c\}}{a} = p_{ab}$ and $\splur{\{a,b,c\}}{c} = 1 - p_{bc}$ follow from $ \splur{\{a,b,c\}}{a} +  \splur{\{a,b,c\}}{b} +  \splur{\{a,b,c\}}{c} = 1$ and $ \splur{\{a,b,c\}}{a} +  \splur{\{a,b,c\}}{b} = p_{ab} + (p_{bc} - p_{ab}) = p_{bc} = 1 -  \splur{\{a,b,c\}}{c}$.

\emph{Inductive step.}
Suppose the formula holds for all subsets of size $k - 1$.
Let $S = \{s_{i_1} < \cdots < s_{i_k}\}$ with $k \geq 4$.
For the interior element $s_{i_j}$ with $1 < j < k$:
\[
\splur{S}{s_{i_j}} = \Pr\bigl(s_{i_j} \succ s_{i_{j-1}} \text{ and } s_{i_j} \succ s_{i_{j+1}} \text{ and } s_{i_j} \succ \text{all others in } S\bigr).
\]
Single-peakedness on the axis implies two monotonicity properties for any voter:
\begin{enumerate}
\item[(i)] if $a < b < c$ on the axis and $a \succ b$, then $a \succ c$;
\item[(ii)] if $a < b < c$ on the axis and $c \succ b$, then $c \succ a$.
\end{enumerate}
By~(i), $s_{i_j} \succ s_{i_{j+1}}$ implies $s_{i_j} \succ s_{i_\ell}$ for all $\ell > j+1$;
by~(ii), $s_{i_j} \succ s_{i_{j-1}}$ implies $s_{i_j} \succ s_{i_\ell}$ for all $\ell < j-1$.
Therefore $s_{i_j}$ is first in $S$ if and only if $s_{i_j}$ is first in the three-element subset $\{s_{i_{j-1}}, s_{i_j}, s_{i_{j+1}}\}$.
By the base case, this probability is $p_{s_{i_j} s_{i_{j+1}}} - p_{s_{i_{j-1}} s_{i_j}}$.

For the boundary elements: $s_{i_1}$ is first in $S$ iff $s_{i_1} \succ s_{i_2}$ (by~(i), this implies $s_{i_1} \succ s_{i_\ell}$ for all $\ell > 2$), so $\splur{S}{s_{i_1}} = p_{s_{i_1} s_{i_2}}$.
Similarly, $s_{i_k}$ is first iff $s_{i_k} \succ s_{i_{k-1}}$, giving $\splur{S}{s_{i_k}} = 1 - p_{s_{i_{k-1}} s_{i_k}}$.
\end{proof}

\subsection{Proof of \Cref{prop:mcl-lb}}
\mcl*

\begin{proof}
Fix a subset $S$ with $|S| = k$ and an alternative $a \in S$. Each pairwise comparison between elements of $S$ creates an oriented link between two alternatives, and transitivity allows information to propagate along chains of such links. 

To determine whether $a$ is the top-ranked element of $S$, all alternatives of $S$ must be reachable from $a$ through such a chain of links; otherwise, there exists some $b \in S \setminus \{a\}$ whose relation to $a$ can not be resolved. 

Any structure connecting all $k$ elements must contain at least $k-1$ links. Therefore, any protocol must satisfy $\MCL \geq k - 1$.
\end{proof}

\begin{comment}
Both branches reduce to the same Hoeffding union bound, which we record once. Suppose for each pair $(T, a)$ with $|T| = \ell$ and $a \in T$ we have $T_\ell$ independent $\text{Ber}(\splur{T}{a})$ samples. By Hoeffding's inequality for $[0,1]$-valued random variables, the empirical mean $\widehat{\splur{T}{a}}$ satisfies
\[
  \Pr\!\bigl[\,|\widehat{\splur{T}{a}} - \splur{T}{a}| > \varepsilon\,\bigr]
  \;\leq\; 2\,\exp\!\bigl(-2\,T_\ell\,\varepsilon^2\bigr)
  \;=\; \delta / Q_\ell,
\]
by definition of $T_\ell$. A union bound over the $Q_\ell$ pairs yields
\(
  \Pr\bigl[\exists (T, a) :\, |\widehat{\splur{T}{a}} - \splur{T}{a}| > \varepsilon\bigr] \leq \delta.
\)
It remains to show that each branch delivers the required samples within its stated population.  
\end{comment}

\subsection{Detailed derivation of \Cref{thm:budget-tradeoff}}
\begin{theorem}
\label{thm:budget-tradeoff2}
Fix a target degree $\ell \geq 2$ and an $\MCL$ budget $\lambda$ with $\ell - 1 \leq \lambda \leq m - 1$. %Set
%\[
%  N_{\mathrm{chain}} \;:=\; \binom{m}{\ell}\,T_\ell, \qquad
%  N_{\mathrm{rank}} \;:=\; \binom{m}{\ell}\,T_\ell \,\big/\, \binom{k(\lambda)}{\ell},
%\]
%where $k(\lambda) := \max\{k \in \{\ell, \ldots, m\} : \lceil \log_2(k!) \rceil \leq \lambda\}$, defined whenever $\lambda \geq \lceil \log_2(\ell!) \rceil$. 
To estimate every degree-$\ell$ entry of the plurality matrix to accuracy $\varepsilon$ with probability at least $1 - \delta$:
\begin{enumerate}[leftmargin = *]
\item[\emph{(i)}] \emph{Chains.} 
Set $k = \lambda + 1$. A $k$-chain requires $N_{\mathrm{chain}} := \binom{m}{\ell} T_\ell$
queries, %(assuming $n \ge N_{\mathrm{chain}}$ distinct voters),
with total budget $B_{\mathrm{chain}} = N_{\mathrm{chain}} \cdot \lambda$.%, independent of $k$.
%Setting $k = \lambda + 1$, the $k$-chain run for $N \geq N_{\mathrm{chain}}$ steps succeeds, provided the population accommodates this many distinct voters ($n \geq N$), at total budget $B_{\mathrm{chain}} = N \cdot \lambda$. The bound is independent of $k$.
\item[\emph{(ii)}] \emph{Rankings.} %If $\lambda \geq \lceil \log_2(\ell!) \rceil$, the $k(\lambda)$-ranking run for $N \geq (1+o(1))\,N_{\mathrm{rank}}$ steps succeeds (provided $n \geq N$), at total budget $B_{\mathrm{rank}} = N \cdot \lceil \log_2(k(\lambda)!) \rceil \leq N \cdot \lambda$. If $\lambda < \lceil \log_2(\ell!) \rceil$, no ranking is feasible.
If $\lambda \ge \lceil \log_2(\ell!) \rceil$, define
$k(\lambda) := \max\{k \in [\ell,m] : \lceil \log_2(k!) \rceil \le \lambda\}$.
A $k(\lambda)$-ranking requires $N_{\mathrm{rank}} := \binom{m}{\ell} T_\ell \big/ \binom{k(\lambda)}{\ell}$ queries, %(assuming $n \ge N_{\mathrm{rank}}$),
with total budget $B_{\mathrm{rank}} = N_{\mathrm{rank}} \cdot \lceil \log_2(k(\lambda)!) \rceil \le N_{\mathrm{rank}} \lambda$.

If $\lambda < \lceil \log_2(\ell!) \rceil$, no ranking protocol is feasible.
\end{enumerate}
\end{theorem}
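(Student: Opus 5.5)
The argument has two ingredients. The first is the per-quantity sample requirement $T_\ell=\ln(2Q_\ell/\delta)/(2\varepsilon^2)$ from the Hoeffding-plus-union-bound computation in Appendix~\ref{app:hoeffding-union-bound}: if every pair $(T,a)$ with $|T|=\ell$ receives at least $T_\ell$ independent Bernoulli$(\splur{T}{a})$ observations, then all $Q_\ell$ entries are simultaneously $\varepsilon$-accurate with probability at least $1-\delta$. The second is a count, for each protocol, of how many independent samples of each degree-$\ell$ subset one query produces. Multiplying $T_\ell$ by $\binom{m}{\ell}$ and dividing by the number of degree-$\ell$ subsets covered per query gives the required number of queries. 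Each query then has to be priced in pairwise comparisons to obtain the budget.

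\emph{Chains.} Take $k=\lambda+1\geq\ell$. In an $S$-chain with ordering $(a_1,\dots,a_k)$, the running winner after comparing $a_\ell$ is exactly $\mathrm{top}_\succ(\{a_1,\dots,a_\ell\})$. This prefix is fixed by the protocol independently of the voter's ranking, so it is an unbiased sample of the degree-$\ell$ row for that subset; by Appendix~\ref{app:chaintransitivity}, no other unbiased degree-$\ell$ observation can be extracted. Since $S$ and its ordering are drawn uniformly, the prefix is a uniform $\ell$-subset. Each query therefore covers one degree-$\ell$ subset, and allocating $T_\ell$ samples to each of the $\binom{m}{\ell}$ subsets takes $N_{\mathrm{chain}}=\binom{m}{\ell}T_\ell$ queries, independently of $k$. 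Each query costs $k-1=\lambda$ comparisons, so $B_{\mathrm{chain}}=\lambda N_{\mathrm{chain}}$. Independence across queries follows because every query goes to a fresh voter drawn i.i.d.\ from $\pi$ (up to the without-replacement refinement).

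\emph{Rankings.} Sorting $k$ items with an optimal comparison sort needs $\lceil\log_2(k!)\rceil$ comparisons. Hence $k(\lambda)$, as defined, is the largest feasible ranking size, and no ranking of size at least $\ell$ is feasible when $\lambda<\lceil\log_2(\ell!)\rceil$. A $k$-ranking reveals the voter's restriction to $S$, and by transitivity this determines the winner of every $T\subseteq S$ with $|T|=\ell$. One query therefore yields one sample for each of $\binom{k}{\ell}$ degree-$\ell$ subsets. Within a single voter these samples are correlated, but they concern different subsets, and Hoeffding is applied per subset across voters, so per-subset independence still holds. To guarantee $T_\ell$ samples for each of the $\binom{m}{\ell}$ subsets we then need $N_{\mathrm{rank}}=\binom{m}{\ell}T_\ell/\binom{k(\lambda)}{\ell}$ queries. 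The budget is $N_{\mathrm{rank}}\lceil\log_2(k(\lambda)!)\rceil\leq N_{\mathrm{rank}}\lambda$.

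\emph{Main obstacle.} Since $S$ is sampled uniformly with replacement, the per-subset counts are random (multinomial) rather than exactly $T_\ell$. The stated $N$ is therefore an expected-coverage count, not a guarantee. There are two ways to make it rigorous. The first is a deterministic balanced design: cycle through the $\binom{m}{\ell}$ subsets, or through a covering design of $k$-sets in the ranking case. The second is to accept a $(1+o(1))$ factor and use a Chernoff bound on the counts, with a further union bound costing part of $\delta$. I would present the balanced-design version, which yields exactly the stated counts; the rest is bookkeeping.
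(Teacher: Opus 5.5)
Your proposal is correct and follows the paper's proof essentially step for step. It uses $T_\ell$ from Hoeffding plus a union bound, and allocates exactly $T_\ell$ chain steps to each $\ell$-subset $T$, placed as the first $\ell$ positions of a $k$-chain so that the prefix winner is an unbiased sample. For rankings it uses the double count $N\binom{k}{\ell}\ge\binom{m}{\ell}T_\ell$, with transitivity giving $\binom{k}{\ell}$ samples per voter and per-pair Hoeffding making the within-voter correlation harmless.

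The one deviation is that, in the ranking branch, you prefer a deterministic balanced design to the paper's uniform sampling with a $(1+o(1))$ Chernoff slack. Be aware that a design meeting $N_{\mathrm{rank}}$ with equality must cover every $\ell$-subset exactly $T_\ell$ times, i.e.\ it must be an $\ell$-design, and such designs need not exist. Your claim of ``exactly the stated counts'' should therefore be weakened to the same lower-order slack the paper accepts.
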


\begin{proof}
\medskip

\noindent \emph{Proof of (i): chain branch.}
The k-chain protocol admits any schedule that assigns each size-$\ell$ subset $T \subseteq \calA$ exactly $T_\ell$ chain steps; we take the simplest such allocation, which uses $N_{\mathrm{chain}} = \binom{m}{\ell}\,T_\ell$ steps. For $N > N_{\mathrm{chain}}$, allocate the first $N_{\mathrm{chain}}$ steps as below; the remaining $N - N_{\mathrm{chain}}$ steps may be used arbitrarily without affecting the accuracy guarantee.

For each step assigned to $T$, choose a size-$k$ superset $S \supseteq T$ (e.g., by adding $k - \ell$ filler alternatives) and an ordering $\tau = (a_1, \ldots, a_k)$ of $S$ in which the first $\ell$ positions form a permutation of $T$. Run the $S$-chain with ordering $\tau$ on a fresh voter $\sigma \in \calV$ not previously queried and record the prefix-$\ell$ winner $w_\ell$.

By induction on the chain rounds, the winner of the prefix $\{a_1, \ldots, a_j\}$ after round $j-1$ is the top of $\{a_1, \ldots, a_j\}$ in $\sigma$'s restriction to that prefix; this is exactly transitivity of $\sigma$. In particular, $w_\ell$ is the top of $T$ in $\sigma|_T$, so for each $a \in T$,
\[
  \Pr_\sigma[w_\ell = a] \;=\; \splur{T}{a}.
\]
The single observation $w_\ell$ thus yields $\ell$ correlated indicators $\{\ind[w_\ell = a]\}_{a \in T}$, each marginally $\text{Ber}(\splur{T}{a})$. Across the $T_\ell$ steps assigned to $T$, voters are drawn independently from $\pi$, so for each fixed $a \in T$ the $T_\ell$ indicators are i.i.d. The pairwise correlation among indicators sharing a step is irrelevant: Hoeffding applies separately to each pair $(T, a)$, and the union bound over $Q_\ell$ pairs closes the argument.

For the cost, each $k$-chain step uses exactly $k - 1 = \lambda$ pairwise comparisons, hence $\MCL = \lambda$ and $B_{\mathrm{chain}} = N \cdot \lambda$.

\medskip

\noindent \emph{Proof of (ii): ranking branch.}
Set $k := k(\lambda)$. The k-ranking protocol admits any schedule that produces a sequence $S_1, \ldots, S_N$ of size-$k$ subsets such that every size-$\ell$ subset $T \subseteq \calA$ is contained in at least $T_\ell$ of them. Counting incidences in two ways, this requires
\[
  N \binom{k}{\ell} \;\geq\; \binom{m}{\ell}\,T_\ell,
\]
which is exactly $N \geq N_{\mathrm{rank}}$. For $N$ above this bound by a $(1+o(1))$ factor, uniform random sampling realizes the required covering count with high probability by a Chernoff bound.

For each step $i$, elicit a $k$-ranking of $S_i$ from a fresh voter $\sigma_i \in \calV$ not previously queried. The ranking discloses $\sigma_i$'s restriction to $S_i$, and by transitivity this restriction determines the top of every $T \subseteq S_i$. For each pair $(T, a)$ with $T \subseteq S_i$ and $a \in T$, the indicator $\ind[a \text{ tops } T \text{ in } \sigma_i|_T]$ is therefore a $\text{Ber}(\splur{T}{a})$ sample. Voters across steps are independent, so for any fixed pair $(T, a)$ the samples drawn from the (at least $T_\ell$) steps with $T \subseteq S_i$ are i.i.d. Hoeffding plus the union bound close the argument as in branch (i).

For the cost, each ranking step requires $\lceil \log_2(k!) \rceil$ pairwise comparisons. The lower bound is information-theoretic: sorting $k$ items by binary comparisons demands at least $\lceil \log_2(k!) \rceil$ queries, since each query reveals one bit and $k!$ orderings must be distinguishable. The upper bound is achieved up to lower-order additive terms by Ford-Johnson merge-insertion sort, which lies within $\log_2(k!) + O(1)$ for small $k$ and within $(1+o(1)) \log_2(k!)$ asymptotically. By definition of $k(\lambda)$, $\lceil \log_2(k(\lambda)!) \rceil \leq \lambda$, so the per-voter cost respects the $\MCL$ constraint, and $$B_{\mathrm{rank}} = N \cdot \lceil \log_2(k(\lambda)!)\rceil \leq N \cdot \lambda.$$
\end{proof}

\section{Additional Experiments}

\subsection{Synthetic Data}\label{sec:synthetic_data}

We provide an informal explanation for the empirical observation in \Cref{fig:synthetic} that several structured preference models (single-peaked, Plackett--Luce and Euclidean) yield unimodal rank distributions for most alternatives/ 

\paragraph{A common intuition}
Across these models, a similar phenomenon is observed: alternatives that occupy ``central'' positions in the underlying structure are more likely to be ranked highly across voters, while ``extreme'' alternatives are less frequently top-ranked. This induces near-unimodal distributions of ranks. We will now specify how the notions of ``central'' and ``extreme'' translates under each of the structures. 

\paragraph{Single-peaked preferences (Walsh distribution).} In the single-peaked model, alternatives are ordered along an axis $c_ 1 < c_2 < \hdots < c_m$. Each voter has their most preferred candidate (peak) somewhere on the axis, and their preference decrease as we move away from the peak towards each of the extremes. 

Given such an axes, there are two classical ways to sample rankings:
\begin{itemize}
    \item[(i)] \textbf{the Conitzer distribution}, which samples the peak uniformly and then completes the ranking, and
    \item[(ii)] \textbf{the Walsh distribution} (used in our experiments), which samples uniformly from all rankings consistent with the axis.
\end{itemize}
Under the Walsh distribution, the number of rankings compatible with a given peak depends strongly on its position: while there is a unique preference with $c_1$ (resp. $c_m$) as peak ($c_1 \succ c_2 \succ \hdots \succ c_m$, resp. $c_m \succ c_{m-2} \succ \hdots \succ c_1$), there are exponentially many compatible rankings with the peak corresponding to middle alternative of the axis. As a result, peaks -- and more generally high-ranked alternatives-- are overwhelmingly concentrated around the middle of the axis. This induces unimodal rank distribution for most alternatives. 

\paragraph{Euclidean preferences.} In the $k$-Euclidean model, both voters and alternatives are embedded in $\mathbb{R}^k$, and each voter ranks alternatives by increasing distance from their own position. When both voters and alternatives are drawn from a roughly symmetric distribution (e.g., uniformly over a bounded region), alternatives near the geometric center of the space are, on average, closer to more voters. Consequently, central alternatives are more likely to be top-ranked, whereas peripheral alternatives tend to appear at lower positions. This again creates a concentration of probability mass in the upper ranks, leading to unimodal rank distribution. 

\paragraph{Plackett--Luce model.} In the Plackett--Luce model, each alternative $a$ is assigned a strength parameter $v_a$, and rankings are generated sequentially: the top alternative is drawn with probability proportional to these strengths, then removed, and the process repeats on the remaining set. When strengths are heterogeneous (as in our experiments), a small set of high-strength alternatives dominates top positions in the ranking, while weaker alternatives are pushed toward lower ranks. This induces a strong concentration of probability mass in high ranks for a few alternatives, again yielding unimodal rank distribution.

\subsection{Experiment on election data}\label{sec:real_data_experiment}

We now ask which regions of the moment plane are populated by real elections. We use three datasets covering different scales and electoral settings, drawn from PrefLib~\citep{boehmer2024guide} and from a French presidential election survey~\citep{voterautrement}. For consistency with the synthetic experiment, each dataset is restricted to voters who provided complete rankings.

\begin{figure}[h]
  \centering
  \includegraphics[width=\linewidth]{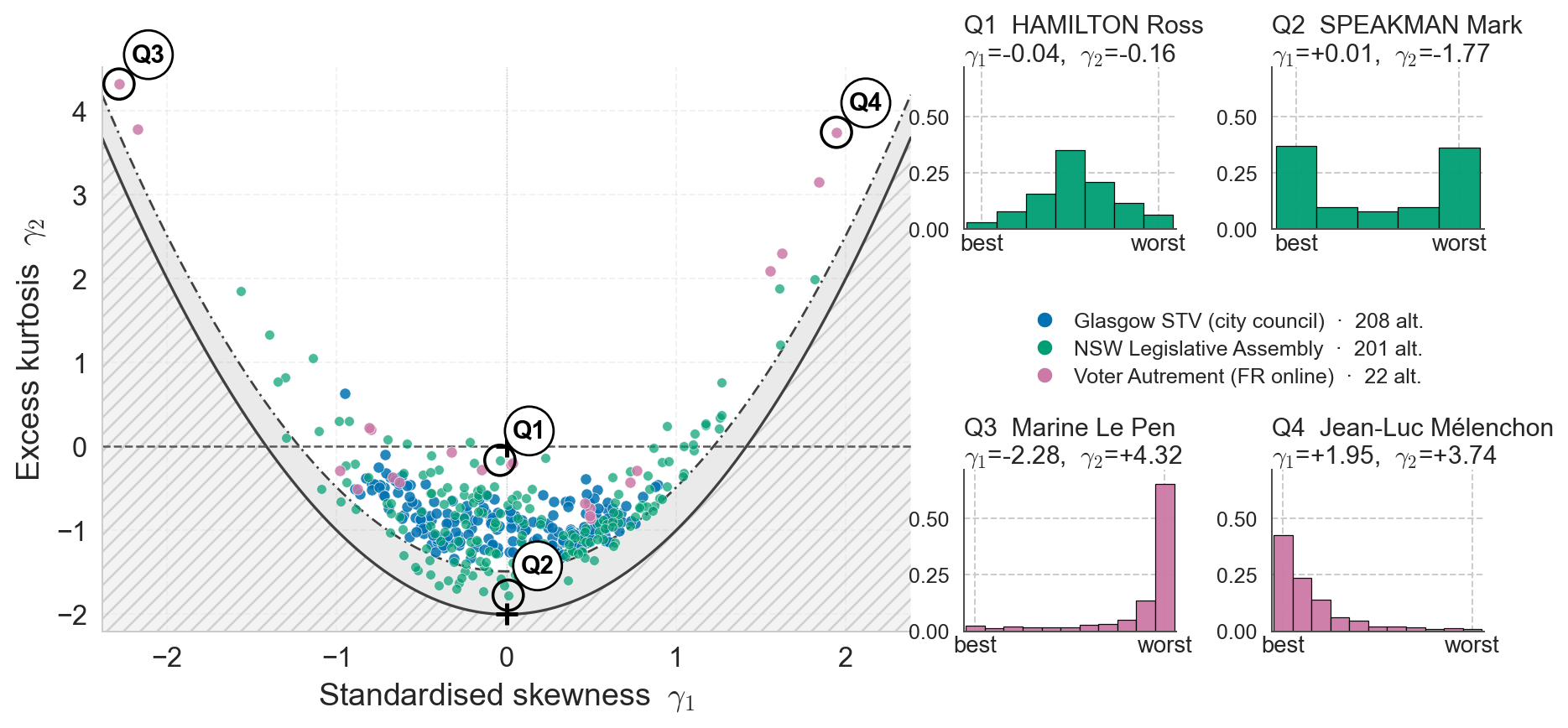}
  \caption{Pearson moment plane for three real-election datasets: Glasgow STV city council elections (208 alternatives), New South Wales Legislative Assembly (201 alternatives), and the 2017 Voter Autrement online survey (22 candidates). Solid and dash-dotted curves are the bounds from Figure~\ref{fig:synthetic}, with the upper envelope set for the largest~$m$. Four highlighted candidates illustrate the four regions of the plane: Q1 (Hamilton Ross, NSW), Q2 (Speakman Mark, NSW), Q3 (Marine Le Pen, Voter Autrement), Q4 (Jean-Luc M\'elenchon, Voter Autrement). Inset histograms show the rank distribution of each highlighted candidate.}
  \label{fig:real}
\end{figure}

\Cref{fig:real} shows that the moment plane is broadly populated. Real elections do not concentrate near IC: alternatives appear throughout the feasible region, including positions deep in the asymmetric upper corners. The four highlighted candidates illustrate the four regions identified in \Cref{fig:synthetic}.

Le Pen and M\'elenchon would both register as high-disagreement alternatives under any measure based on rank variance alone. The moment plane separates them. The sign of $\gamma_1$ tells which extreme of the rank scale concentrates the mass, and the magnitude of $\gamma_2$ tells how heavy the tail toward the opposite extreme is.

Speakman illustrates a different pattern. His rank distribution is bimodal, and his position close to the lower parabola flags this directly. Several other alternatives sit near the parabola in the NSW cloud. Two-camp polarization appears in real elections at non-trivial frequency, which justify the use of a bimodality test to characterize disagreements.

A caveat applies to all three datasets and most sharply to Voter Autrement: the figures use only voters who provided complete rankings. Ranking 22 candidates fully is cognitively demanding, and voters who do so are likely highly engaged and politically informed. Le Pen's position should be read with this filter in mind: her vote share in 2017 French presidential first rounds has been near $20\%$, considerably above what her moment-plane position would suggest, consistent with a selection effect in which voters providing complete rankings underrepresent her actual base.

\section{Inventory of measures and rules in the $S$-plurality hierarchy}
\label{app:master-inventory}

In the main text we showed that the rank variance $\Var(r_a)$, the Navarrete divisiveness, and the agreement index $A(\pi)$ admit closed forms in $S$-plurality coordinates, and that these forms place them at levels $3$, $3$, and $2$ of the hierarchy respectively. Beyond these three, a number of further disagreement measures, polarization indices, and aggregation rules from the social-choice literature can be analyzed in the same way: each is either expressible as a function of $\{\splur{S}{a}\}_{S \subseteq \calA,\, a \in S}$ at some finite level $\ell$, or provably outside the hierarchy. We collect the results of this analysis here.

Table~\ref{tab:master} catalogs every measure and rule we have placed. Subsec.~\ref{app:master-notation} fixes the notation needed to read it. Subsec.~\ref{app:master-formulas} gives the long-form expressions for the entries that the table abbreviates by equation number, with derivations where the closed form is new to this paper.

\begin{table}[h!]
\centering
\caption{Aggregation rules and disagreement measures expressed in $S$-plurality form, grouped by purpose and sorted by informational level. Long formulas are referenced by equation number; notations are in App.~\ref{app:master-notation}.}
\label{tab:master}
\scriptsize
\setlength{\tabcolsep}{3pt}
\renewcommand{\arraystretch}{1.4}
\setlength{\abovetopsep}{6pt}
\begin{tabular}{@{} >{\centering\arraybackslash}p{0.45cm} p{3.4cm} p{6.4cm} p{2.6cm} @{}}
\toprule
Lvl & Quantity & $S$-plurality form & Reference \\
\midrule
\multicolumn{4}{@{}l@{}}{\textbf{Aggregation measures}} \\
\addlinespace[1pt]
\multicolumn{4}{@{}l@{}}{\quad\textit{Level 2: pairwise, tournament-family}} \\
$2$ & Borda $\Bor(a)$ \emph{(score)} & $\sum_{b \neq a} \splur{\{a,b\}}{a}$ & \citep{borda1781} \\
$2$ & Copeland \emph{(rule)} & $\#\{b : p_{ab} > \tfrac12\} - \#\{b : p_{ab} < \tfrac12\}$ & \citep{copeland1951} \\
$2$ & Minimax \emph{(rule)} & $\min_{b \neq a} \splur{\{a,b\}}{a}$ & \citep{simpson1969,kramer1977} \\
$2$ & Kemeny \emph{(rule)} & $\arg\max_\lambda \sum_{a \succ_\lambda b} \splur{\{a,b\}}{a}$ & \citep{kemeny1959} \\
$2$ & Slater, Ranked Pairs, Schulze \emph{(rule)} & computable from the tournament $\{\splur{\{a,b\}}{a}\}_{a, b}$ & \citep{slater1961,tideman1987,schulze2011} \\
\addlinespace[2pt]
\multicolumn{4}{@{}l@{}}{\quad\textit{Level $k$ (parametric in $k$)}} \\
$k$ & $k$-wise Kemeny \emph{(rule)} & Eq.~\eqref{eq:kwise-kemeny} & \citep{gilbert2020setwise} \\
\addlinespace[2pt]
\multicolumn{4}{@{}l@{}}{\quad\textit{Level $m$: positional, runoff}} \\
$m$ & Plurality $\Pr[r_a = 1]$ \emph{(rule)} & $\splur{\calA}{a}$ & \citep{borda1781} \\
$m$ & Anti-plurality $1 - \Pr[r_a = m]$ \emph{(rule)} & Eq.~\eqref{eq:anti-plurality} & \citep{myerson1993} \\
%$m$ & Rank-$i$ frequency $\Pr[r_a = i]$ \emph{(per alt.)} & Eq.~\eqref{eq:rank-i} & This paper \\
$m$ & $k$-approval, $1 < k < m - 1$ \emph{(rule)} & $\sum_{i \leq k} \Pr[r_a = i]$, partial sum of Eq.~\eqref{eq:rank-i} & \\
$m$ & Bucklin \emph{(rule)} & smallest $k$ with $\sum_{i \leq k} \Pr[r_a = i] \geq \tfrac12$ & \citep{bucklin1911} \\
$m$ & STV / IRV \emph{(rule)} & recursive $\arg\min_{a \in S} \splur{S}{a}$ on shrinking $S$ & \citep{tideman1995single} \\
\midrule
\multicolumn{4}{@{}l@{}}{\textbf{Disagreement measures}} \\
\addlinespace[1pt]
\multicolumn{4}{@{}l@{}}{\quad\textit{Level 2: pairwise}} \\
$2$ & Agreement $A(\pi)$, polarization $1 - A(\pi)$ \emph{(profile)} & $\tfrac{1}{\binom{m}{2}} \sum_{\{a,b\}} |2\,\splur{\{a,b\}}{a} - 1|$ & \citep{alcaldeunzu2013,can2015measuring} \\
$2$ & Generalized polarization family \emph{(profile)} & $\sum_{\{a,b\}} f(\splur{\{a,b\}}{a})$ for a class of $f$ & \citep{can2017generalized} \\
$2$ & Diversity $\Delta_{\mathrm{KT}}(\pi)$ \emph{(profile)} & $2 \sum_{\{a,b\}} \splur{\{a,b\}}{a}\,(1 - \splur{\{a,b\}}{a})$ & \citep{hashemi2014measuring} \\
$2$ & Partitioning ratio $\alpha(a,b)$ \emph{(pair)} & $2 \min(p_{ab},\, 1 - p_{ab})$ & \citep{delemazure2024conflicting} \\
\addlinespace[2pt]
\multicolumn{4}{@{}l@{}}{\quad\textit{Level 3: triple, divisiveness and conflict}} \\
$3$ & Rank variance $\Var(r_a)$ \emph{(per alt.)} & Eq.~\eqref{eq:rank-variance} & \cite{kendall1939problem} \\
$3$ & Navarrete divisiveness \emph{(per alt.)} & Eq.~\eqref{eq:navarrete} & \citep{navarrete2024understanding} \\
$3$ & $\alpha$-divisiveness, $s = \Bor$ \emph{(per alt.)} & Eq.~\eqref{eq:alpha-div} & \citep{colley2023} \\
$3$ & Discrepancy $\beta(a,b)$ \emph{(pair)} & $\Delta(a,b) / (m-1)$ & \citep{delemazure2024conflicting} \\
$3$ & Group imbalance $\phi(a,b)$ \emph{(pair)} & $|\Bor(a) - \Bor(b)| \,/\, \Delta(a,b)$ & \citep{delemazure2024conflicting} \\
$3$ & Discrepancy balance $\gamma(a,b)$ \emph{(pair)} & Eq.~\eqref{eq:gamma} & \citep{delemazure2024conflicting} \\
$3$ & MaxSum (most-conflictual pair) \emph{(rule)} & Eq.~\eqref{eq:maxsum} & \citep{delemazure2024conflicting} \\
$3$ & MaxNash \emph{(rule)} & Eq.~\eqref{eq:maxnash} & \citep{delemazure2024conflicting} \\
$3$ & MaxSwap \emph{(rule)} & Eq.~\eqref{eq:maxswap} & \citep{delemazure2024conflicting} \\
$3$ & $p$-MaxPolar, $p > 0$ \emph{(rule)} & Eq.~\eqref{eq:maxpolar} & \citep{delemazure2024conflicting} \\
\addlinespace[2pt]
\multicolumn{4}{@{}l@{}}{\quad\textit{Outside the hierarchy}} \\
\textit{n/a} & $k$-Kemeny distance $\kappa_k$ ($k \geq 2$) and derived $P(E)$, $D(E)$ \emph{(profile)} & inexpressible in $\{\splur{S}{\cdot}\}$ & \citep{faliszewski2023} \\
\textit{n/a} & Spearman, Cayley distance-based diversity \emph{(profile)} & inexpressible in $\{\splur{S}{\cdot}\}$ & \citep{hashemi2014measuring} \\
\textit{n/a} & Support-based diversity (\# distinct rankings) \emph{(profile)} & inexpressible in $\{\splur{S}{\cdot}\}$ & \citep{hashemi2014measuring} \\
\textit{n/a} & Karpov diversity orderings \emph{(profile)} & inexpressible in $\{\splur{S}{\cdot}\}$ & \citep{karpov2017} \\
\bottomrule
\end{tabular}
\end{table}

\subsection{Notation}
\label{app:master-notation}

We work over $m$ alternatives $\calA$ and a profile $\pi$ on $\calA$.

\paragraph{$S$-plurality and pairwise.}
The $S$-plurality of $a$ on $S \ni a$ is
\begin{equation*}
\splur{S}{a} = \Pr_\pi\bigl[a \text{ ranked first in } S\bigr],
\qquad \sum_{a \in S} \splur{S}{a} = 1.
\end{equation*}
Pairwise proportions are the level-2 specialization,
\begin{equation*}
p_{ab} = \splur{\{a,b\}}{a} = \Pr_\pi[a \succ b],
\qquad p_{ab} + p_{ba} = 1.
\end{equation*}

\paragraph{Borda and aggregate $S$-plurality.}
\begin{align*}
\Bor(a) &= \sum_{b \neq a} p_{ab} = \E_\pi[\,m - r_a\,], \\
P_k(a) &= \sum_{T \subseteq \calA \setminus \{a\},\ |T| = k} \splur{T \cup \{a\}}{a},
\qquad P_0(a) = 1.
\end{align*}
$P_k(a)$ aggregates the $S$-plurality of $a$ over all $S$ of size $k+1$ containing $a$. Note $P_1(a) = \Bor(a)$ and $P_{m-1}(a) = \splur{\calA}{a}$.

\paragraph{Expected rank-distance.}
The expected absolute rank gap between $a$ and $b$, used throughout the level-3 disagreement entries, is
\begin{equation*}
\Delta(a,b) = \E_\pi\bigl[\,|r_a - r_b|\,\bigr].
\end{equation*}
Despite the shared symbol, $\Delta(a,b)$ is unrelated to the profile-level KT diversity $\Delta_{\mathrm{KT}}(\pi)$ in Table~\ref{tab:master}. Its closed form in $S$-plurality coordinates is given as Eq.~\eqref{eq:delta} below.

\paragraph{Conditional rank-distances.}
\begin{equation*}
\Delta^+(a,b) = \E_\pi\bigl[\,|r_a - r_b|\,\big|\, a \succ b\,\bigr],
\qquad
\Delta^-(a,b) = \E_\pi\bigl[\,|r_a - r_b|\,\big|\, b \succ a\,\bigr].
\end{equation*}
Closed forms are given as Eq.~\eqref{eq:delta-plus}--\eqref{eq:delta-minus}.

\paragraph{Covariance kernel.}
\begin{equation*}
K_b
= \sum_{c \neq a, b} \bigl(\splur{\{a,b,c\}}{a} - p_{ab}\, p_{ac}\bigr)
= \sum_{c \neq a, b} \Cov_\pi\bigl(\ind[a \succ b],\, \ind[a \succ c]\bigr).
\end{equation*}

\label{app:master-formulas}

The expressions below correspond to the equation references in Table~\ref{tab:master}.

\subsection{Aggregation rules.}
With $\operatorname{top}_\lambda(S)$ denoting the alternative ranked first in $S$ by ranking $\lambda$, the $k$-wise Kemeny rule of Gilbert et al. \cite{gilbert2020setwise} is
\begin{equation}
\arg\min_\lambda\, \sum_{\ell = 2}^{k} \sum_{|S| = \ell} \bigl(1 - \splur{S}{\operatorname{top}_\lambda(S)}\bigr).
\label{eq:kwise-kemeny}
\end{equation}
The level-$m$ inversion identity expresses the rank-$i$ frequency in terms of $\{P_k(a)\}_k$,
\begin{equation}
\Pr_\pi[r_a = i] = \sum_{k = m-i}^{m-1} (-1)^{k-m+i} \binom{k}{m-i}\, P_k(a),
\label{eq:rank-i}
\end{equation}
from which the anti-plurality complement follows as the $i = m$ specialization,
\begin{equation}
1 - \Pr_\pi[r_a = m] = 1 - \sum_{k=0}^{m-1} (-1)^k\, P_k(a).
\label{eq:anti-plurality}
\end{equation}

\subsection{Rank variance.}
\begin{equation}
\Var_\pi(r_a) = \Bor(a)\bigl(1 - \Bor(a)\bigr) + 2 \sum_{\{b,c\} \subseteq \calA \setminus \{a\}} \splur{\{a,b,c\}}{a}.
\label{eq:rank-variance}
\end{equation}

\subsection{Navarrete et al. Divisiveness.}
\begin{equation}
\Div_{\mathrm{Nav}}(a)
= \frac{1}{m-1} \sum_{b \neq a} \biggl| 1 + \frac{K_b}{p_{ab}\,(1 - p_{ab})} \biggr|.
\label{eq:navarrete}
\end{equation}

\subsection{$\alpha$-Divisiveness.}
The $\alpha$-divisiveness \cite{colley2023} with Borda scoring is
\begin{equation}
\Div_\alpha^{\Bor}(a)
= \frac{1}{m-1} \sum_{b \neq a} \bigl(p_{ab}\,(1 - p_{ab})\bigr)^\alpha\, \bigl|\Bor(a \mid a \succ b) - \Bor(a \mid b \succ a)\bigr|,
\label{eq:alpha-div}
\end{equation}
where the conditional Borda scores expand as
\begin{align*}
\Bor(a \mid a \succ b) &= 1 + \frac{1}{p_{ab}} \sum_{c \neq a, b} \splur{\{a,b,c\}}{a}, \\
\Bor(a \mid b \succ a) &= \frac{1}{1 - p_{ab}} \sum_{c \neq a, b} \bigl(p_{ac} - \splur{\{a,b,c\}}{a}\bigr).
\end{align*}

\subsection{Expected rank-distance.}
\begin{align}
\Delta(a,b)
&= 2(m-1) - \Bor(a) - \Bor(b) - 2 \sum_{c \neq a, b} \splur{\{a,b,c\}}{c},
\label{eq:delta}
\end{align}
obtained from the triple-ordering identity
\begin{equation*}
\Pr_\pi\bigl[c \text{ between } a \text{ and } b\bigr]
= 2 - 2\,\splur{\{a,b,c\}}{c} - p_{ac} - p_{bc},
\end{equation*}
summed over $c \neq a, b$, together with $\sum_{c \neq a,b} p_{ac} = \Bor(a) - p_{ab}$.

\subsection{Conditional rank-distances.}
\begin{align}
\Delta^+(a,b) &= \frac{\Delta(a,b) + \Bor(a) - \Bor(b)}{2\, p_{ab}}, \label{eq:delta-plus}\\
\Delta^-(a,b) &= \frac{\Delta(a,b) - \Bor(a) + \Bor(b)}{2\,(1 - p_{ab})}. \label{eq:delta-minus}
\end{align}
The first equality follows from total expectation, $\Delta(a,b) = p_{ab}\,\Delta^+ + (1-p_{ab})\,\Delta^-$; the closed form combines this with the signed identity $\Bor(a) - \Bor(b) = p_{ab}\,\Delta^+ - (1-p_{ab})\,\Delta^-$.

\subsection{Discrepancy balance.}
\begin{equation}
\gamma(a,b) = \min\!\left(\frac{\Delta^+(a,b)}{\Delta^-(a,b)},\; \frac{\Delta^-(a,b)}{\Delta^+(a,b)}\right).
\label{eq:gamma}
\end{equation}

\subsection{Conflictual rule scores.}
Substituting~\eqref{eq:delta}--\eqref{eq:delta-minus} into the definitions of Delemazure et al. \cite{delemazure2024conflicting} reduces each conflictual rule to a function of $\Delta(a,b)$, $p_{ab}$, $\Bor(a)$, and $\Bor(b)$. Each rule selects $\arg\max_{\{a,b\}}$ of its score:
\begin{align}
\text{MaxSum}(a,b) &= \Delta(a,b) + (1 - 2\,p_{ab})\bigl(\Bor(a) - \Bor(b)\bigr), \label{eq:maxsum}\\
\text{MaxNash}(a,b) &= \Delta(a,b)^2 - \bigl(\Bor(a) - \Bor(b)\bigr)^2, \label{eq:maxnash}\\
\text{MaxSwap}(a,b) &= \Delta(a,b) - \bigl|\Bor(a) - \Bor(b)\bigr|, \label{eq:maxswap}\\
p\text{-MaxPolar}(a,b) &= \min(p_{ab},\, 1 - p_{ab}) \cdot \Delta(a,b)^p. \label{eq:maxpolar}
\end{align}
All four scores depend on the level-3 data $\{\splur{\{a,b,c\}}{c}\}_{c \neq a,b}$ through $\Delta(a,b)$ in~\eqref{eq:delta}, together with the level-2 data $\{p_{ab}, \Bor(\cdot)\}$.

\end{document}